\documentclass[a4paper, 11pt]{article}
\usepackage[mono=false]{libertine}
\usepackage[top=2.5cm, bottom=2.5cm, left=2cm, right=2cm]{geometry}
\usepackage[utf8]{inputenc}
\usepackage{algorithm}
\usepackage[T1]{fontenc}
\usepackage{doi}
\usepackage{algpseudocode}
\usepackage{amsfonts}       
\usepackage{amsmath,amsthm,amssymb}
\usepackage{authblk}
\usepackage{bbm}
\usepackage{multirow}
\usepackage{graphicx}     
\usepackage{subcaption}

\usepackage[utf8]{inputenc} 
\usepackage[T1]{fontenc}    
\usepackage{hyperref}
\hypersetup{ colorlinks, breaklinks=true, urlcolor=orange, linkcolor=blue, citecolor=blue }

\usepackage[capitalise]{cleveref}
\crefname{equation}{Equation}{Equations}
\crefname{figure}{Figure}{Figures}
\crefname{table}{Table}{Tables}
    
\usepackage{url}            
\usepackage{booktabs}       
\usepackage{microtype}      
\usepackage{nicefrac}       
\usepackage{xcolor}         
\usepackage{xfrac}
\usepackage{multirow}

\usepackage{natbib}
\newtheorem{assumption}{Assumption}
\newtheorem{theorem}{Theorem}

\newtheorem{lemma}{Lemma}

\newtheorem{remark}{Remark}

\DeclareMathOperator{\prox}{prox}
\DeclareMathOperator{\sign}{sign}

\DeclareMathOperator*{\argmin}{arg\,min}
\DeclareMathOperator*{\var}{var}
\DeclareMathOperator*{\polylog}{polylog}

\title{Accelerating Conformal Prediction via Approximate Leave-One-Out}

\author[1]{%
Jiachen Cong,
Jingbo Liu\thanks{%
Emails: \texttt{jcong3@illinois.edu},
\texttt{jingbol@illinois.edu}. This research was supported in part by NSF Grant DMS-2515510.
}%
}

\affil[1]{
Department of Statistics,
  University of Illinois Urbana-Champaign,
  Champaign, IL, 61820, USA
}
\begin{document}

\maketitle

\begin{abstract}   
    While conformal prediction provides a general framework for uncertainty quantification in predictive inference,  its application is often limited by computational cost. Recent methods, including Jackknife+ and Jackknife-minmax, achieve faster computation by trading a slight loss of efficiency relative to full conformal prediction, but still requires computing leave-one-out refits for all observations. In this paper, we further accelerate conformal prediction by incorporating approximate leave-one-out (ALO) estimators, and establish asymptotic coverage and efficiency. While our proof draws on methods developed for analyzing the consistency of ALO cross-validation risk estimators in high-dimensional statistics, it requires adaptations to handle conformal prediction, where leave-$i$-out residuals are needed for predictions at $x_{n+1}$ rather than just at the training covariate $x_i$. Simulation results validate our theoretical findings, showing that the ALO-based methods achieve coverage and efficiency comparable to the exact methods, while significantly reducing the runtime.
\end{abstract}
\section{Introduction}
\noindent

Quantifying uncertainty is a central problem in statistics, often addressed by constructing prediction intervals for a new response $y_{n+1}$ given training data $\mathcal{D}=\{(\boldsymbol{x}_i,y_i)\}_{i=1}^n$ and $\boldsymbol{x}_{n+1}$. Conformal prediction methods provide distribution-free guarantees, with Full Conformal Prediction (FCP) \cite{vovk2005} being a canonical approach. However, FCP is computationally expensive due to its requirement of searching over candidate response values. In contrast, Jackknife+ \cite{jackknife+} directly output prediction intervals and are more suitable for continuous-response settings, but still incur a computational cost scaling linearly with $n$. In this work, we focus on Jackknife+ and Jackknife-minmax as practical alternatives to FCP.

To address this limitation, recent work has explored ALO techniques. For example, \cite{CP-AMP} proposed using AMP-based estimator to accelerate FCP; \cite{Bhatt_FastCP,AppFCP_via_IF,Prinster2023EfficientAP} proposed influence function-based approach for constructing ALO estimators. Another line of work \cite{ElKaroui2013,ElKaroui2018} proves ALO error bounds and use them to show central limit theorems of the estimator, but the latter requires stronger assumptions. 

However, existing approaches lack rigorous error control for ALO in this setting, which may lead to suboptimal efficiency in practice. Moreover, \cite{CP-AMP} assumes i.i.d.\ covariates with diagonal covariance, whereas we consider a more general Gaussian model $\mathcal{N}(0,\Sigma)$ with $\Sigma$ (not necessarily diagonal).

In this paper, we develop accelerated versions of Jackknife+ and Jackknife-minmax by incorporating ALO estimators derived via Newton updates and the Woodbury identity. Our approach significantly reduces computational cost while preserving the statistical guarantees of the original methods.

{\bf Contribution.}
\begin{itemize}
    \item We establish uniform error bounds for ALO-based predictions with a new covariate $x_{n+1}$, showing that the approximation error
\begin{align}\label{approximation_error}
\bigl|\boldsymbol{x}_{n+1}^\top\tilde{\theta}_{/i} - \boldsymbol{x}_{n+1}^\top\hat{\theta}_{/i}\bigr|
\end{align}
remains uniformly controlled, whereas prior work \citep{wangzhoumaleki2018,RadMaleki2020,Maleki2023} focused only on approximation for $\boldsymbol{x}_i$, $i\in \{1,2,\ldots,n\}$. Our proof leverages some decomposition techniques to decompose (\ref{approximation_error}) to terms can be controlled, thereby reducing the new-covariate case to the setting studied in \citep{RadMaleki2020}. Compared with \citep{CP-AMP}, we weaken the assumption on the covariance structure of the distribution of $\boldsymbol{x}_i$; in particular, we no longer require the covariance matrix to be diagonal.
    \item We propose the accelerated Jackknife+ and Jackknife-minmax methods for constructing prediction intervals based on ALO estimator $\tilde{\theta}_{/i}$, and prove they asymptotically retain the same coverage probability and efficiency (interval length) as their original counterparts.
    \item In a special high-dimensional linear model, we establish that \textit{Full Conformal}, \textit{Split Conformal}, \textit{Jackknife+}, and \textit{Jackknife-minmax} prediction intervals are asymptotically equivalent.
\end{itemize}

{\bf Related work.}
A substantial body of work has examined the construction and error bounds of ALO estimators. \citep{Beirami2017,wangzhoumaleki2018,RadMalekiZhou2020,RadMaleki2020,XuMaleki2021,Maleki2023,Maleki2025} develop explicit forms of ALO estimators using the Woodbury identity and analyze their theoretical and empirical performance in estimating out-of-sample risk. In particular, \citep{RadMaleki2020,wangzhoumaleki2018,Maleki2025} investigate ALO estimators under smoothed regularizers, covering $\ell_1$ penalties, non-smooth but piecewise twice-differentiable regularizers, and more general non-smooth convex Lipschitz regularizers.
Another line of work \citep{ElKaroui2013,ElKaroui2018} prove ALO error bounds and use them to show central limit theorems of the estimator, but the latter requires stronger assumptions.

{\bf Notations.}
Consider a model parametrized by $\theta\in\mathbb{R}^p$ trained on data $\mathcal{D}$. Let $r(\theta)=(1 - \eta) r_0(\theta) + \eta \theta^{\top} \theta$ where $\eta\in(0,1)$ and $r_0$ is a smooth non-negative convex regularizer. Let $\boldsymbol{x}_i^\top \in \mathbb{R}^{1 \times p}$  stand for the $i$-th row of 
$X:=(\boldsymbol{x}_1,\boldsymbol{x}_2,...,\boldsymbol{x}_n)^\top\in \mathbb{R}^{n \times p}$; and we define $\mathbf{y}:=(y_1,y_2,...,y_n)^\top$. 
$\mathbf{y}_{/i} \in \mathbb{R}^{(n-1)\times 1}$ and 
$X_{/i}\in \mathbb{R}^{(n-1)\times p}$ stand for $\mathbf{y}$ and 
$X$, excluding the $i$th entry $y_i$ and the $i$th row $x_i^\top$, respectively. We assume that 
\begin{align}\label{optimization_1}
    \hat{\theta}:=\arg\min_{\theta\in\mathbb{R}^p}\left\{\sum_{j=1}^n\ell(y_j,\boldsymbol{x}_j^\top\theta)+\lambda r(\theta)\right\}
\end{align}
is the empirical risk minimizer~(ERM), where the loss function $\ell(y_j,\boldsymbol{x}_j^\top\theta)$ and regularizer $r(\theta)$ are twice-differentiable in $\theta$, and $\ell(y_j,\boldsymbol{x}_j^\top\theta)$ is convex in $\theta$. Similarly, we define the leave-$i$-out ERM estimator by 
\begin{align}\label{exact}
    \hat{\theta}_{/i}:=\arg\min_{\theta\in\mathbb{R}^p}\left\{\sum_{j\neq i}\ell(y_j,\boldsymbol{x}_j^\top\theta)+\lambda r(\theta)\right\}.
\end{align}
Define
\[
\dot{\ell}_i(\theta) := \left. \frac{\partial \ell(y_i, z)}{\partial z} \right|_{z = \boldsymbol{x}_i^\top \theta}, \ddot{\ell}_i(\theta) := \left. \frac{\partial^2 \ell(y_i, z)}{\partial z^2} \right|_{z = \boldsymbol{x}_i^\top \theta}.
\]
Similarly, we define $\ddot{r}(\theta)=\nabla^2_\theta r(\theta)$. 

Define $f(\boldsymbol{x}^\top\theta)$ as the prediction model $f:\mathbb{R}\to\mathbb{R}$, with parameter $\theta$. Polynomials of $\log(n)$ are denoted by $\mathrm{polylog}(n)$. We define $\hat{q}_{n,\alpha}^+\{d_i\}$ as the $\lceil (1 - \alpha)(n + 1) \rceil$-th smallest value of $d_1, \ldots, d_n$, and $\hat{q}_{n, \alpha}^- \{d_i\}$ as the $\lfloor \alpha(n + 1) \rfloor$-th smallest value of $d_1, \ldots, d_n$. We use $o(1)$ to denote a deterministic sequence that vanishes as $n \to \infty$, and $o_{\mathbb{P}}(1)$ to denote a sequence of random variables that converges to zero in probability.

\section{Preliminaries}
\subsection{Full conformal prediction}
Let $\mathcal{D}_n=\{(\boldsymbol{x}_i,y_i)\}_{i=1}^{n}$ be the training data, and
let $\boldsymbol{x}_{n+1}$ denote a test covariate for which the response
$y_{n+1}$ is unobserved. Full conformal prediction (FCP) outputs a prediction set
$\hat{C}^{\mathrm{FCP}}_{\alpha}(\boldsymbol{x}_{n+1})$ satisfying
\begin{equation}
\mathbb{P}\bigl\{y_{n+1}\in
\hat{C}^{\mathrm{FCP}}_{\alpha}(\boldsymbol{x}_{n+1})\bigr\}
\geq 1-\alpha,
\end{equation}
for any user-specified miscoverage level $\alpha\in(0,1)$, under the
exchangeability of
$(\boldsymbol{x}_1,y_1),\ldots,(\boldsymbol{x}_n,y_n),
(\boldsymbol{x}_{n+1},y_{n+1})$.

To construct the prediction set, FCP considers each candidate value
$y\in\mathbb{R}$ as a hypothetical response for the test point and refits the
model on the augmented dataset
\[
\mathcal{D}_n^{(y)}
=
\mathcal{D}_n\cup\{(\boldsymbol{x}_{n+1},y)\}.
\]
Let $\hat{\theta}(y)$ denote the estimator obtained from $\mathcal{D}_n^{(y)}$.
Define the corresponding nonconformity scores by
\[
\sigma_i(y)
:=
\bigl|f(\boldsymbol{x}_i^\top\hat{\theta}(y))-y_i\bigr|,
\qquad i=1,\ldots,n,
\]
and
\[
\sigma_{n+1}(y)
:=
\bigl|f(\boldsymbol{x}_{n+1}^\top\hat{\theta}(y))-y\bigr|.
\]
Let $\sigma_{(1)}(y)\leq\cdots\leq\sigma_{(n+1)}(y)$ denote the order statistics
of $\{\sigma_i(y)\}_{i=1}^{n+1}$. Then the full conformal prediction set is
given by
\begin{equation}
\hat{C}(\boldsymbol{x}_{n+1})
=
\left\{
y\in\mathbb{R}:
\sigma_{n+1}(y)
\leq
\sigma_{(\lceil(1-\alpha)(n+1)\rceil)}(y)
\right\}.
\label{e_full}
\end{equation}
Equivalently, FCP includes a candidate value $y$ if the residual of the
hypothetical test point conforms with the residuals computed from the augmented
sample. This method is distribution-free and enjoys finite-sample coverage, but
it is computationally expensive because $\hat{\theta}(y)$ must be recomputed for
each candidate value $y$, or for each value on a fine grid in practice; see
\citep{vovk2005,Angelopoulos2022}.

\subsection{Jackknife+}
\noindent

Specifically, we consider two variants of FCP in our paper, called Jackknife+ and Jackknife-minmax \cite{jackknife+}. The construction of prediction intervals is introduced in (\ref{orginal_conclusion}) and (\ref{original_conclusion_mm}), respectively. Let $R_i^{LOO}:=|y_i-f(\boldsymbol{x}_i^\top\hat{\theta}_{/i})|$, and define
\begin{align}\label{orginal_conclusion}
\hat{C}_{n,\alpha}^{\text{jackknife+}}
&:= \Big[
\hat{q}_{n,\alpha}^{-}
\big\{
f(\boldsymbol{x}_{n+1}^\top \hat{\theta}_{/i})
- R_i^{\text{LOO}}
\big\},
\hat{q}_{n,\alpha}^{+}
\big\{
f(\boldsymbol{x}_{n+1}^\top \hat{\theta}_{/i})
+ R_i^{\text{LOO}}
\big\}
\Big], 
\end{align}
\begin{align}\label{original_conclusion_mm}
\hat{C}_{n,\alpha}^{\text{jack-mm}}
&:= \Big[
\min_{i=1,\dots,n}
f(\boldsymbol{x}_{n+1}^\top \hat{\theta}_{/i})
- \hat{q}_{n,\alpha}^{+}
\big\{ R_i^{\text{LOO}} \big\},
\max_{i=1,\dots,n}
f(\boldsymbol{x}_{n+1}^\top \hat{\theta}_{/i})
+ \hat{q}_{n,\alpha}^{+}
\big\{ R_i^{\text{LOO}} \big\}
\Big],
\end{align}

\par
In \cite{jackknife+}, the authors established following results:
\begin{align}
\mathbb{P}\{y_{n+1} \in \hat{C}_{n,\alpha}^{\text{jackknife+}}\}
&\geq 1 - 2\alpha, 
\label{eq:jack_plus} \\
\mathbb{P}\{y_{n+1} \in \hat{C}_{n,\alpha}^{\text{jack-mm}}\}
&\geq 1 - \alpha.
\label{eq:jack_mm}
\end{align}

\subsection{Leave-one-out approximation}
Following \cite{RadMaleki2020}, 
using the Newton step and the Woodbury identity, the approximate leave-$i$-out estimator $\tilde{\theta}_{/i}\in\mathbb{R}^{p\times1}$ is defined as follows
\begin{align}\label{apps_est}
    \tilde{\theta}_{/i}
:= \hat{\theta} 
+ \frac{\mathbf{J}^{-1} \boldsymbol{x}_i \, \dot{\ell}_i( \hat{\theta})}
{1 - \boldsymbol{x}_i^\top \mathbf{J}^{-1} \boldsymbol{x}_i \, \ddot{\ell}_i(\hat{\theta})},
\end{align}
where $\mathbf{J}:=\left( \sum_{j=1}^n \boldsymbol{x}_j \boldsymbol{x}_j^\top \ddot{\ell}_j(\hat{\theta}) + \lambda \, \mathrm{diag}\left[ \ddot{r}(\hat{\theta}) \right] \right)$.

\subsection{Assumptions in \citep{RadMaleki2020} and \citep{Maleki2025}}

\begin{assumption}\label{assumption1}
(a) $r_0$ is a non-negative, convex and twice differentiable function.
\\
(b) Loss function $\ell(y,z)$ is non-negative, convex and continuously differentiable with respect to $z$.
\end{assumption}
\begin{remark}
    In our main analysis, we assume that $r_0$ is twice differentiable so that the Hessian is well-defined. For certain non-smooth regularizers, such as the $\ell_1$ penalty, smoothing approximations can be employed as in \citep{Maleki2025,RadMaleki2020} to construct ALO estimators with smoothed regularizers. While extending our results to this setting is possible, it lies beyond the main scope of the present paper.
\end{remark}

\begin{assumption}\label{assumption3}
(a) $X = (\boldsymbol{x}_1, \cdots, \boldsymbol{x}_n)^\top$ where $\boldsymbol{x}_i \in \mathbb{R}^p$ are i.i.d.\ $N(0,\Sigma)$ samples. Let $\rho_{\text{max}}$ denote the largest eigenvalue of $\Sigma$. Moreover, there exist constants $0 < c_X \leq C_X$ such that 
  \[
  p^{-1} c_X \leq \sigma_{\min}(\Sigma) \leq \sigma_{\max}(\Sigma) \leq p^{-1} C_X.
  \]
  (b) We assume observations are independent and identically
distributed draws from some unknown joint distribution $p(y_i|\boldsymbol{x}_i^\top\theta^*)q(\boldsymbol{x}_i)$, where $\theta^*\in\mathbb{R}^p$ represents the true parameter. 
\end{assumption}
\begin{remark}
We consider the regime where $n,p\to\infty$, $n/p=\delta_0\in(0,\infty)$, and elements of $\theta^*$ are $O(1)$; we have $\|\theta^*\|=O(\sqrt{p})$, and hence $\mathbb{E}[(\boldsymbol{x}^\top\theta^*)^2]=O(1)$ \citep{Maleki2025}.
\end{remark} 

\begin{assumption}\label{assumption4}
(a) We define
  \[
  \dot{\boldsymbol{\ell}}(\cdot):=(\dot{\ell}_1(\cdot), \ldots, \dot{\ell}_n(\cdot))^\top;
  \]
  \[
   \dot{\boldsymbol{\ell}}_{/i}(\cdot) := (\dot{\ell}_1(\cdot), \ldots, \dot{\ell}_{i-1}(\cdot), 
   \dot{\ell}_{i+1}(\cdot), \ldots, \dot{\ell}_n(\cdot))^{\top};
   \]
   \[
   \ddot{\boldsymbol{\ell}}_{/i}(\cdot) := (\ddot{\ell}_1(\cdot), \ldots, \ddot{\ell}_{i-1}(\cdot), 
   \ddot{\ell}_{i+1}(\cdot), \ldots, \ddot{\ell}_n(\cdot))^{\top}.
   \]
   We assume that $c_1(n)=O(\text{polylog(n)})$ and $c_2(n)=O(\text{polylog(n)})$, and $q_n \to 0$ are all functions of $n$, such that with probability at least $1-q_n$ for all $i=1,\ldots,n$
   \begin{align*}
   c_1(n) &> \|\dot{\boldsymbol{\ell}}(\hat{\theta})\|_\infty; \\
   c_2(n) &> \sup_{t \in [0,1]} 
   \frac{\|\ddot{\boldsymbol{\ell}}_{/i}\{(1-t)\hat{\theta}_{/i}+t\hat{\theta}\} - \ddot{\boldsymbol{\ell}}_{/i}(\hat{\theta})\|_2}
   {\|\hat{\theta}_{/i}-\hat{\theta}\|_2};  \\
   c_2(n) &> \sup_{t \in [0,1]} 
   \frac{\|\ddot{r}\{(1-t)\hat{\theta}_{/i}+t\hat{\theta}\} - \ddot{r}(\hat{\theta})\|_2}
   {\|\hat{\theta}_{/i}-\hat{\theta}\|_2}. 
   \end{align*}\\
   (b) There exists a constant $\nu>0$ and a sequence $\tilde{q}_n\to0$ such that for all $i= 1,...,n$
   \begin{align*}
   \inf_{t \in [0,1]} \;
   &\sigma_{\min}\!\Bigg(
   \lambda \, \mathrm{diag}\!\big[\ddot{r}(t\hat{\theta} + (1-t)\hat{\theta}_{/i})\big]+ X_{/i}^\top \, \mathrm{diag}\!\big[\ddot{\boldsymbol{\ell}}_{/i}(t\hat{\theta} + (1-t)\hat{\theta}_{/i})\big] \, X_{/i}
   \Bigg)
   \;\geq\; \nu
\end{align*}
with probability at least $1-\tilde{q}_n$.
\end{assumption}
\begin{remark}
For part (a) of the assumption, \citep[Section~4]{RadMaleki2020} provided its justification for the ridge, elastic net, logistic regression, robust regression, and Poisson regression by choosing $c_1(n)$ and $c_2(n)$ to be polynomials of $n$.
For part (b) of the assumption, since we chose $r(\theta)=(1 - \eta) r_0(\theta) + \eta \theta^{\top} \theta$, we have $\nu=2\lambda\eta$.  
\end{remark}

\subsection{Approximation error bound in \citep{RadMaleki2020}}\label{section2.4}
\par
Under Assumption~\ref{assumption1} to Assumption~\ref{assumption4}, \citep{RadMaleki2020} proved that
\[
\max_{1 \leq i \leq n} \left| x_i^\top \hat{\theta}_{/i} - x_i^\top \hat{\theta} 
- \left( \frac{\dot{\ell}_i(\hat{\theta})}{\ddot{\ell}_i(\hat{\theta})} \right) 
\left( \frac{H_{ii}}{1-H_{ii}} \right) \right|
\leq \frac{C_0}{\sqrt{p}}
\]
holds with probability at least $1 - 4ne^{-p} - \frac{8n}{p^3} - \frac{8n}{(n-1)^3} - q_n - \tilde{q}_n$, where $C_0$ is defined as same as \eqref{c0}.
This result however is not sufficient for our application to conformal prediction, and we will adapt the proof technique to bound the approximation error of $x_{n+1}^\top \hat{\theta}_{/i} $.
\subsection{Lemma~25 in \citep{RadMaleki2020}}\label{lemma_25_maleki}
    \begin{lemma}[Lemma 25 in \citep{RadMaleki2020}]
    Assume that $X^\top(D+\Gamma)X$ and $X^\top D X$ are positive definite, and define:
\[
\Gamma :=\mathrm{diag}(\gamma), \quad 
\bar{\omega}_{\max} := \sigma_{\max}(X X^\top), \quad 
\nu_{\min} := \sigma_{\min}(X^\top(D+\Gamma)X), \quad 
A := X^\top D X.
\]
Then,
\begin{align}\label{lemma1.a}
 \left|z^\top \big(X^\top(D+\Gamma)X\big)^{-1} z
- z^\top\big(X^\top D X\big)^{-1} z \right|
\;\;\le\;\; 
\Big( \|\gamma\|_2 + \frac{\bar{\omega}_{\max}}{\nu_{\min}} \|\gamma\|_4^2 \Big)\,
\big\|X A^{-1} z\big\|_4^2 .
\end{align}
\end{lemma}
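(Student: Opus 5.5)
The plan is a resolvent-difference argument: write $B:=X^\top(D+\Gamma)X$ and $A:=X^\top D X$, which are both positive definite by hypothesis, and use
\[
B^{-1}-A^{-1} \;=\; -B^{-1}(B-A)A^{-1} \;=\; -B^{-1}X^\top\Gamma X A^{-1}.
\]
This is cheap algebra rather than a full Woodbury expansion. Set $u:=XA^{-1}z\in\mathbb{R}^n$. Expanding $B^{-1}$ once more, $B^{-1}=A^{-1}-B^{-1}X^\top\Gamma XA^{-1}$, gives the second-order decomposition
\[
z^\top B^{-1}z - z^\top A^{-1}z \;=\; -u^\top\Gamma u \;+\; u^\top \Gamma X B^{-1} X^\top \Gamma u .
\]
I will bound the two terms separately, and the goal is for each to produce exactly one summand of \eqref{lemma1.a}.

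For the first term, write $u^\top\Gamma u=\sum_j\gamma_j u_j^2$. Cauchy--Schwarz then gives $|u^\top\Gamma u|\le\|\gamma\|_2\big(\sum_j u_j^4\big)^{1/2}=\|\gamma\|_2\|u\|_4^2$, which is the first summand.

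For the second term, let $v:=\Gamma u$, so the term is $v^\top XB^{-1}X^\top v$. Since $B\succeq \nu_{\min}I$, we have $v^\top XB^{-1}X^\top v\le \nu_{\min}^{-1}\|X^\top v\|_2^2$. Next, $\|X^\top v\|_2^2=v^\top XX^\top v\le\bar\omega_{\max}\|v\|_2^2$. Finally, Cauchy--Schwarz gives $\|v\|_2^2=\sum_j\gamma_j^2u_j^2\le\big(\sum_j\gamma_j^4\big)^{1/2}\big(\sum_j u_j^4\big)^{1/2}=\|\gamma\|_4^2\|u\|_4^2$. Together these yield $\frac{\bar\omega_{\max}}{\nu_{\min}}\|\gamma\|_4^2\|u\|_4^2$, and the triangle inequality then finishes the proof.

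The only delicate point is the second-order term. We must not bound $B^{-1}$ through $A$ or through $D$, since $D$ and $\Gamma$ need not be nonnegative individually. This is why the one-step expansion must leave $B^{-1}$ in the middle, where only $\nu_{\min}=\sigma_{\min}(B)$ is needed. I would also check that the sign structure does not matter, because absolute values are taken term by term.
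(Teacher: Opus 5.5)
Your proposal is correct and complete. Writing $u=XA^{-1}z$, you use the exact identity
$z^\top B^{-1}z-z^\top A^{-1}z=-u^\top\Gamma u+u^\top\Gamma XB^{-1}X^\top\Gamma u$.
You then apply Cauchy--Schwarz to $\sum_j\gamma_ju_j^2$, and use the chain $0\le v^\top XB^{-1}X^\top v\le\nu_{\min}^{-1}\bar\omega_{\max}\|v\|_2^2\le\nu_{\min}^{-1}\bar\omega_{\max}\|\gamma\|_4^2\|u\|_4^2$. This reproduces both summands of the bound exactly. It uses only the invertibility and symmetry of $A$ and the positive definiteness of $B$ (so that $B^{-1}\preceq\nu_{\min}^{-1}I$), with no sign conditions on $D$ or $\Gamma$.

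The paper gives no proof of its own to compare against. It imports this lemma verbatim from \citep{RadMaleki2020} and uses it as a black box when bounding the $\Phi_1,\Phi_2$ terms in the proof of Theorem~\ref{Theorem.1}. Your derivation therefore supplies the missing justification. The shape of the bound, a first-order $\|\gamma\|_2$ term plus a second-order $\bar\omega_{\max}\|\gamma\|_4^2/\nu_{\min}$ term, is exactly what your two-step resolvent expansion produces.

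One small presentational fix concerns the identity you quote, $B^{-1}=A^{-1}-B^{-1}X^\top\Gamma XA^{-1}$. Substituting it into the left factor of $-B^{-1}X^\top\Gamma XA^{-1}$ literally gives the non-symmetric remainder $z^\top B^{-1}X^\top\Gamma XA^{-1}X^\top\Gamma u$. You need the same vector $v=\Gamma u$ on both sides of $B^{-1}$ for your second bound to apply, so the symmetric remainder $u^\top\Gamma XB^{-1}X^\top\Gamma u$ is what you want. To get it, substitute the transposed identity $B^{-1}=A^{-1}-A^{-1}X^\top\Gamma XB^{-1}$ instead. Alternatively, note that $B^{-1}X^\top\Gamma XA^{-1}=A^{-1}X^\top\Gamma XB^{-1}=A^{-1}-B^{-1}$, so the two forms coincide. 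Also state explicitly that symmetry of $A^{-1}$ is what lets you write $z^\top A^{-1}X^\top=u^\top$.
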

\section{Main results}
\begin{assumption}\label{assumption5}
Under assumptions~\ref{assumption1} to \ref{assumption4}, there exists a large enough constant $L<\infty$ such that $\mathbb{P}\{y\in[a,b]|\boldsymbol{x}^\top\theta\}\leq L\cdot|b-a|$ and $|f(b)-f(a)|\leq L\cdot|b-a|$ for $\forall a,b\in\mathbb{R}$.
\end{assumption}
Compared with Assumptions~\ref{assumption1}-\ref{assumption4}, the newly added Assumption~\ref{assumption5} is used to control the effect of the deviation of approximate intervals from the exact ones 
on the resulting coverage guarantee. 
For simplicity, under Assumption~\ref{assumption4}, we define
\begin{align}
&p_1(n):=\left(\frac{16n}{(n-1)^3}+\frac{16n}{p^3}+8ne^{-p}\right)+q_n+\tilde{q}_n \label{def_p1},\\
&p_2(n):=\left(\frac{8n}{(n-1)^3}+\frac{8n}{p^3}+4ne^{-p}\right)+q_n+\tilde{q}_n \label{def_p2},
\end{align}
and when $n/p=\delta_0\in(0,\infty)$ it is clear that
\begin{align}
p_1(n)\xrightarrow{\;\text{$n\to\infty$}\;} 0,\quad  p_2(n)\xrightarrow{\;\text{$n\to\infty$}\;} 0.
\end{align}

\subsection{Algorithm}
\noindent

Motivated by  \eqref{apps_est}, we introduce $\tilde{R}_i^{LOO}:=|y_i-f(\boldsymbol{x}_i^\top\tilde{\theta}_{/i})|$, $i=1,2,\dots,n$. 
Then the jackknife+ prediction interval based on estimators in (\ref{apps_est}) is constructed as in (\ref{ourmethod}),
\begin{equation}\label{ourmethod}
\begin{aligned}
\tilde{C}_{n,\alpha}^{\text{jackknife+}}
&:= \Big[
\hat{q}_{n,\alpha}^{-}
\big\{
f(\boldsymbol{x}_{n+1}^\top \tilde{\theta}_{/i})
- \tilde{R}_i^{\text{LOO}}
\big\},
\hat{q}_{n,\alpha}^{+}
\big\{
f(\boldsymbol{x}_{n+1}^\top \tilde{\theta}_{/i})
+ \tilde{R}_i^{\text{LOO}}
\big\}
\Big].
\end{aligned}
\end{equation}
and the jackknife-minmax prediction interval based on estimators in (\ref{apps_est}) is constructed as in (\ref{ourmethod-mm})
\begin{equation}\label{ourmethod-mm}
\begin{aligned}
\tilde{C}_{n,\alpha}^{\text{jack-mm}}
&:= \Big[
\min_{i=1,\dots,n}
f(\boldsymbol{x}_{n+1}^\top \tilde{\theta}_{/i})
- \hat{q}_{n,\alpha}^{+}
\big\{ \tilde{R}_i^{\text{LOO}} \big\},
\max_{i=1,\dots,n}
f(\boldsymbol{x}_{n+1}^\top \tilde{\theta}_{/i})
+ \hat{q}_{n,\alpha}^{+}
\big\{ \tilde{R}_i^{\text{LOO}} \big\}
\Big].
\end{aligned}
\end{equation}

For simplicity, let 
$\tilde{C}_{n,\alpha}^{\text{jackknife+}}=[\tilde{L}_{jk+},\tilde{U}_{jk+}]$, 
$\hat{C}_{n,\alpha}^{\text{jackknife+}}=[L_{jk+},U_{jk+}]$, 
$\tilde{C}_{n,\alpha}^{\text{jack-mm}}=[\tilde{L}_{jkm}, \tilde{U}_{jkm}]$, 
and $\hat{C}_{n,\alpha}^{\text{jack-mm}}=[L_{jkm},U_{jkm}]$.

\begin{algorithm}[H]
\caption{Accelerating Jackknife+ with ALO Estimators}
\begin{algorithmic}
\Statex \textbf{Input:} $(\boldsymbol{x}_1,y_1),(\boldsymbol{x}_2,y_2),\ldots,(\boldsymbol{x}_n,y_n),\boldsymbol{x}_{n+1}, \alpha$
\Statex \textbf{Output:} Prediction interval for $y_{n+1}$
\State 1. Calculate the full-sample estimator $\hat{\theta}$ (see Eq.~\eqref{optimization_1}).
\State 2. Calculate the approximate leave-$i$-out estimator $\tilde{\theta}_{/i}$ (see Eq.~\eqref{apps_est}) for $i\in\{1,2,...,n\}$.
\State 3. Obtain the approximate $i$-th LO residual $\tilde{R}_i^{LOO}:=|y_i-f(\boldsymbol{x}_i^\top\tilde{\theta}_{/i})|$
and the approximate \\leave-$i$-out estimation $f(\boldsymbol{x}_{n+1}^\top\tilde{\theta}_{/i})$ with $\boldsymbol{x}_{n+1}$ for $i\in\{1,2,...,n\}$.
\State 4. Obtain the prediction interval for $y_{n+1}$, defined as $[\tilde{L}_{jk+},\tilde{U}_{jk+}]$.
\end{algorithmic}
\end{algorithm}
\begin{algorithm}[H]
\caption{Accelerating Jackknife-minmax with ALO Estimators}
\begin{algorithmic}
\Statex \textbf{Input:} $(\boldsymbol{x}_1,y_1),(\boldsymbol{x}_2,y_2),\ldots,(\boldsymbol{x}_n,y_n),\boldsymbol{x}_{n+1}, \alpha$
\Statex \textbf{Output:} Prediction interval for $y_{n+1}$
\State 1. Calculate the full-sample estimator $\hat{\theta}$ (see Eq.~\eqref{optimization_1}).
\State 2. Calculate the approximate leave-$i$-out estimator $\tilde{\theta}_{/i}$ (see Eq.~\eqref{apps_est}) for $i\in\{1,2,...,n\}$.
\State 3. Obtain the approximate $i$-th LO residual $\tilde{R}_i^{LOO}:=|y_i-f(\boldsymbol{x}_i^\top\tilde{\theta}_{/i})|$
and the approximate \\leave-$i$-out estimation $f(\boldsymbol{x}_{n+1}^\top\tilde{\theta}_{/i})$ with $\boldsymbol{x}_{n+1}$ for $i\in\{1,2,...,n\}$.
\State 4. Obtain the prediction interval for $y_{n+1}$, defined as $[\tilde{L}_{jkm},\tilde{U}_{jkm}]$.
\end{algorithmic}
\end{algorithm}

\subsection{Approximation guarantees}
\begin{theorem}\label{Theorem.1}
Let Assumption~\ref{assumption1} to Assumption~\ref{assumption4} hold with $\rho_{\text{max}}=c/p$ and $n/p=\delta_0\in(0,\infty)$. Moreover, suppose that n is large enough such that $q_n+\tilde{q}_n<0.5$. Then with probability at least $1-p_1(n)$, where $p_1(n)$ is defined in (\ref{def_p1}), 
the following bound is valid:
\begin{align}
    \max_{1\leq i\leq n}|\boldsymbol{x}_{n+1}^\top\tilde{\theta}_{/i}-\boldsymbol{x}_{n+1}^\top\hat{\theta}_{/i}|\leq\frac{C_0}{\sqrt{p}},
\end{align}
where
\begin{align}\label{c0}
&C_0 := \; 
\left( \frac{216 c^{3/2}}{\nu^{3}} \right)
\left( 1 + \sqrt{\delta_{0}}(\sqrt{\delta_{0}}+3)^{2} 
       \frac{c \log n}{\log p} \right)\left( c_{1}^{2}(n)c_{2}(n) 
+ c_{1}^{3}(n)c_{2}^{2}(n) \,
       \frac{5\big(c^{1/2} + c^{3/2}(\sqrt{\delta_{0}}+3)^{2}\big)}{\nu^{2}} 
\right).
\end{align}
\end{theorem}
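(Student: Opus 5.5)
The plan is to follow the Rad--Maleki argument, replacing the training covariate $\boldsymbol{x}_i$ by the independent test covariate $\boldsymbol{x}_{n+1}$. The key structural fact is that $\boldsymbol{x}_{n+1}$ is independent of $(X,\mathbf{y})$ and of every $\hat\theta$, $\hat\theta_{/i}$ and $\mathbf{J}$. This makes it easier to control than $\boldsymbol{x}_i$, which does not fully decouple from $\hat\theta$.

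\textbf{Step 1: Woodbury reduction.} Write the exact difference via the first-order conditions. Since $\sum_{j\ne i}\dot\ell_j(\hat\theta_{/i})\boldsymbol{x}_j+\lambda\nabla r(\hat\theta_{/i})=0$ and the full-sample gradient vanishes at $\hat\theta$, the mean value theorem gives
\[
\hat\theta_{/i}-\hat\theta=\mathbf{J}_{/i}(t)^{-1}\boldsymbol{x}_i\dot\ell_i(\hat\theta),
\]
where $\mathbf{J}_{/i}(t)$ is the integrated leave-$i$-out Hessian along the segment between $\hat\theta_{/i}$ and $\hat\theta$. By Sherman--Morrison,
\[
\tilde\theta_{/i}-\hat\theta=\mathbf{J}_{/i}^{-1}\boldsymbol{x}_i\dot\ell_i(\hat\theta),
\]
with $\mathbf{J}_{/i}:=\mathbf{J}-\ddot\ell_i(\hat\theta)\boldsymbol{x}_i\boldsymbol{x}_i^\top$. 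Hence
\[
\boldsymbol{x}_{n+1}^\top(\tilde\theta_{/i}-\hat\theta_{/i})=\dot\ell_i(\hat\theta)\,\boldsymbol{x}_{n+1}^\top\big(\mathbf{J}_{/i}^{-1}-\mathbf{J}_{/i}(t)^{-1}\big)\boldsymbol{x}_i .
\]
The bilinear form $u^\top(A^{-1}-B^{-1})v$ is then split by polarization, $u^\top M v=\tfrac14[(u+v)^\top M(u+v)-(u-v)^\top M(u-v)]$, with $u=\boldsymbol{x}_{n+1}$ and $v=\boldsymbol{x}_i$. This reduces everything to quadratic forms $z^\top(A^{-1}-B^{-1})z$ with $z=\boldsymbol{x}_{n+1}\pm\boldsymbol{x}_i$, which is exactly the setting of Lemma~25 (Section~\ref{lemma_25_maleki}).

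\textbf{Step 2: bounding the perturbation.} Apply the lemma with $D$ the diagonal of $\ddot{\boldsymbol\ell}_{/i}(\hat\theta)$ and $\Gamma$ the perturbation $\ddot{\boldsymbol\ell}_{/i}$(path)$-\ddot{\boldsymbol\ell}_{/i}(\hat\theta)$. The ridge part $\lambda\ddot r$ is absorbed by augmenting $X$ with the rows of $\sqrt{\lambda\ddot r}$; its perturbation is handled by the third inequality of Assumption~\ref{assumption4}(a). By Assumption~\ref{assumption4}(a),
\[
\|\gamma\|_2\le\|\gamma\|_4\ \text{(up to }\sqrt{\ }\text{)}\le c_2(n)\|\hat\theta_{/i}-\hat\theta\|_2\le c_2(n)\,c_1(n)\,\|\boldsymbol{x}_i\|/\nu .
\]
Since $\|\boldsymbol{x}_i\|^2\le 2c$ uniformly in $i$ with probability $1-ne^{-p}$ (chi-square tail), $\|\gamma\|_2=O(c_1c_2\sqrt c/\nu)$. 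For the ratio $\bar\omega_{\max}/\nu_{\min}$, use $\sigma_{\max}(XX^\top)\le c(\sqrt{\delta_0}+3)^2$ w.h.p.\ (Gaussian singular value concentration) together with $\nu_{\min}\ge\nu$ from Assumption~\ref{assumption4}(b). These produce the factor $5(c^{1/2}+c^{3/2}(\sqrt{\delta_0}+3)^2)/\nu^2$ in $C_0$.

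\textbf{Step 3: the $\ell_4$ factor, which I expect to be the main obstacle.} It remains to show $\|X_{/i}A^{-1}z\|_4^2=O(\mathrm{polylog}(n)/\sqrt p)$ uniformly in $i$. For the $\boldsymbol{x}_i$ part, reuse the Rad--Maleki bound, which exploits the independence of $\boldsymbol{x}_i$ from $X_{/i}$ and $A$. For the $\boldsymbol{x}_{n+1}$ part, condition on the training data: $X_{/i}A^{-1}\boldsymbol{x}_{n+1}$ is Gaussian with covariance $X_{/i}A^{-1}\Sigma A^{-1}X_{/i}^\top$. Each coordinate then has variance at most $\|\boldsymbol{x}_j\|^2\,\rho_{\max}/\nu^2=O(c^2/(p\nu^2))$, using $\|A^{-1}\|\le1/\nu$ and $\rho_{\max}=c/p$. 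A Gaussian maximal inequality gives every coordinate $O(\sqrt{c\log n/p}\,)$, so the fourth powers summed over $n$ coordinates are $O(n(\log n)^2/p^2)$, and the square root of that sum is $O(\sqrt{\delta_0}\log n/\sqrt p)$. This is the source of the $\sqrt{\delta_0}(\sqrt{\delta_0}+3)^2c\log n/\log p$ term. The delicate point is that the conditioning must be uniform over $i$, so a union bound over $n$ is needed, contributing the $n/p^3$ and $n/(n-1)^3$ probabilities; $p_1$ doubles $p_2$ because the bound is applied to both polarization terms.

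\textbf{Step 4: assembly.} Multiply the bounds: $|\dot\ell_i|\le c_1$ times the Lemma~25 bound times the $\ell_4$ bound, which yields $C_0/\sqrt p$. Finally, collect the failure events into $p_1(n)$; the hypothesis $q_n+\tilde q_n<1/2$ ensures the combined event is nonempty.
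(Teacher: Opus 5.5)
Your skeleton matches the paper's: Sherman--Morrison to reduce to $\dot\ell_i(\hat\theta)\,\boldsymbol{x}_{n+1}^\top M\boldsymbol{x}_i$, polarization into quadratic forms, Lemma~25, Lemma~27-type $\ell_4$ bounds, and a union bound over $i$. However, Steps~2--3 contain a genuine gap in the choice of base point for the Lemma~25 expansion.

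You take $D=\mathrm{diag}[\ddot{\boldsymbol\ell}_{/i}(\hat\theta)]$, so the unperturbed matrix is $A=\boldsymbol{J}_{/i}(\hat\theta)$. The right-hand side of Lemma~25 is then controlled by $\|\bar X_{/i}\boldsymbol{J}_{/i}^{-1}(\hat\theta)z\|_4^2$. Because of polarization, $z=\boldsymbol{x}_{n+1}\pm\boldsymbol{x}_i$, so you unavoidably need $\|\bar X_{/i}\boldsymbol{J}_{/i}^{-1}(\hat\theta)\boldsymbol{x}_i\|_4^2=O(\mathrm{polylog}(n)/\sqrt p)$.

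But $\hat\theta$ is fitted on the full sample, so $\boldsymbol{J}_{/i}(\hat\theta)$ depends on $\boldsymbol{x}_i$. Your claimed ``independence of $\boldsymbol{x}_i$ from $X_{/i}$ and $A$'' is false for this $A$, and Lemma~27 (Section~\ref{lemma27}), which needs exactly that independence, does not apply. The workarounds you hint at also fail:
\begin{itemize}
\item Quoting Rad--Maleki's Theorem~3 as a black box only controls $\boldsymbol{x}_i^\top M\boldsymbol{x}_i$, not the mixed forms $(\boldsymbol{x}_{n+1}\pm\boldsymbol{x}_i)^\top M(\boldsymbol{x}_{n+1}\pm\boldsymbol{x}_i)$.
\item Assumption~\ref{assumption4}(a) only controls $\|\gamma\|_2$, which is of order $\mathrm{polylog}(n)$, not small. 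So a crude operator-norm comparison of $\boldsymbol{J}_{/i}^{-1}(\hat\theta)$ with an $\boldsymbol{x}_i$-independent matrix does not recover the $1/\sqrt p$ rate.
\end{itemize}
So the $\boldsymbol{x}_{n+1}$ part, which you flag as the main obstacle, is actually the easy one, since $\boldsymbol{x}_{n+1}$ is independent of everything. The delicate part is the $\boldsymbol{x}_i$ contribution that polarization reintroduces.

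The missing device, used by Rad--Maleki and by the paper, is to re-center at the leave-one-out fit. Split $\boldsymbol{J}_{/i}^{-1}(\hat\theta)-\bigl(\int_0^1\boldsymbol{J}_{/i}(\hat\theta_{/i}-(1-t)\Delta^*_{/i})\,dt\bigr)^{-1}=\Phi_1-\Phi_2$, where $\Phi_1=\boldsymbol{J}_{/i}^{-1}(\hat\theta_{/i})-\bigl(\int_0^1\boldsymbol{J}_{/i}(\hat\theta_{/i}-(1-t)\Delta^*_{/i})\,dt\bigr)^{-1}$ and $\Phi_2=\boldsymbol{J}_{/i}^{-1}(\hat\theta_{/i})-\boldsymbol{J}_{/i}^{-1}(\hat\theta)$. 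Apply Lemma~25 to each with base matrix $\boldsymbol{J}_{/i}(\hat\theta_{/i})=\bar X_{/i}^\top\mathbf{D}_{/i}(\hat\theta_{/i})\bar X_{/i}$. This matrix is a function of $(X_{/i},\mathbf{y}_{/i})$ only, hence independent of both $\boldsymbol{x}_i$ and $\boldsymbol{x}_{n+1}$. The perturbations are now measured from $\hat\theta_{/i}$, while Assumption~\ref{assumption4}(a) is stated relative to $\hat\theta$. Passing through $\hat\theta$ by the triangle inequality gives $\|\gamma\|_2\le 2c_2(n)\|\Delta^*_{/i}\|_2$.

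With this change the rest of your plan goes through, and your conditional-Gaussian argument for $\boldsymbol{x}_{n+1}$ is a fine substitute for citing Lemma~27. Two smaller corrections:
\begin{itemize}
\item The inequality you need for the $\|\gamma\|_4^2$ term is $\|\gamma\|_4\le\|\gamma\|_2$, not the reverse.
\item In the paper, $p_1(n)$ doubles the constants of $p_2(n)$ because each $i$ requires both events $\tilde E_i$ and $\tilde E_{n+1,i}$. The condition $q_n+\tilde q_n<1/2$ is used to bound the factor $1/(1-q_n-\tilde q_n)$ that arises from conditioning on the event where the assumptions hold.
\end{itemize}
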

To prove Theorem~\ref{Theorem.2}, we introduce the following lemma.
\begin{lemma}\label{lemma.2}
    Let \(\{a_i\}_{i=1}^n\) be a sorted sequence such that \(a_1 \le a_2 \le \dots \le a_n\), and suppose there exist sequences \(\{b_i\}_{i=1}^n\) satisfying \(|b_i - a_i| \le  \varepsilon\), for some $\varepsilon>0$ and all \(i = 1, \dots, n\). Then for any \(q \in \{1,2,...,n\}\) we have
\[
|b_{(q)}-a_q|\le \epsilon,
\]
where $b_{(q)}$ is the $q$-th smallest value of $\{b_i\}_{i=1}^n$.
\end{lemma}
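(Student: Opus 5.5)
The claim is that sorting is $1$-Lipschitz in the $\ell_\infty$ norm. I will prove it by a counting argument that compares how many of the $b_i$ lie below or above each threshold. Fix $q\in\{1,\dots,n\}$. It suffices to prove the two one-sided inequalities
\[
b_{(q)} \le a_q+\varepsilon \qquad\text{and}\qquad b_{(q)} \ge a_q-\varepsilon,
\]
each obtained by exhibiting enough indices on the correct side.

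\emph{Upper bound.} Since the $a_i$ are sorted, the indices $i=1,\dots,q$ all satisfy $a_i\le a_q$. The hypothesis $|b_i-a_i|\le\varepsilon$ then gives $b_i\le a_i+\varepsilon\le a_q+\varepsilon$ for these $i$. So at least $q$ of the $b_i$ are at most $a_q+\varepsilon$. By definition of the order statistic, the $q$-th smallest value $b_{(q)}$ is therefore at most $a_q+\varepsilon$.

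\emph{Lower bound.} Symmetrically, the indices $i=q,\dots,n$ satisfy $a_i\ge a_q$, hence $b_i\ge a_i-\varepsilon\ge a_q-\varepsilon$. This gives at least $n-q+1$ values of $b$ that are $\ge a_q-\varepsilon$. Then at most $q-1$ values of $b$ can be strictly smaller than $a_q-\varepsilon$. Hence the $q$-th smallest satisfies $b_{(q)}\ge a_q-\varepsilon$.

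Combining the two bounds yields $|b_{(q)}-a_q|\le\varepsilon$. The only point needing care is the precise counting with ties: one should state that ``$b_{(q)}\le t$ iff $\#\{i:b_i\le t\}\ge q$'' and ``$b_{(q)}\ge t$ iff $\#\{i:b_i<t\}\le q-1$.'' These characterizations hold for any tie-breaking convention in the definition of order statistics. No other obstacle is expected, and the $\varepsilon$ versus $\epsilon$ notation in the statement refers to the same quantity.
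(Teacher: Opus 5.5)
Your proof is correct, but it is organized differently from the paper's. The paper uses an extremal argument. Since $b_i\le a_i+\varepsilon$ for every $i$, it asserts that $b_{(q)}$ is largest when $b_i=a_i+\varepsilon$ for all $i$. That shifted sequence is still sorted, so its $q$-th smallest value is $a_q+\varepsilon$, and the lower bound is obtained symmetrically.

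The paper's version is shorter, but it relies without comment on the map $(b_1,\dots,b_n)\mapsto b_{(q)}$ being nondecreasing in each coordinate; this is never stated or justified. Your counting argument avoids that gap: at least $q$ indices satisfy $b_i\le a_q+\varepsilon$, and at most $q-1$ indices satisfy $b_i<a_q-\varepsilon$. Combined with the two characterizations of the order statistic you spell out, this is fully self-contained. In effect it proves the needed monotonicity along the way and handles ties explicitly.
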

Proof of the lemma~\ref{lemma.2} is deferred to Section~\ref{prooflemmas} in the appendix.
\begin{theorem}\label{Theorem.2}
Under Assumption~\ref{assumption1} to Assumption~\ref{assumption5}, when $n,p\to\infty$, $n/p=\delta_0\in(0,\infty)$, with probability at least $(1-p_1(n)-p_2(n))$,
\[
|\tilde{U}_{jk+}-U_{jk+}| = o(1),\;|\tilde{L}_{jk+}-L_{jk+}| = o(1);
\]
\[
|\tilde{U}_{jkm}-U_{jkm}| = o(1),\;|\tilde{L}_{jkm}-L_{jkm}| = o(1).
\]
\end{theorem}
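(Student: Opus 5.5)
The plan is to combine the two approximation guarantees already available, namely the bound of Section~\ref{section2.4} at the training covariates and Theorem~\ref{Theorem.1} at the test covariate. These give, on a high-probability event, uniform closeness of the approximate and exact leave-one-out quantities; Lemma~\ref{lemma.2} then transfers this closeness to the quantiles, minima and maxima that define the four endpoints.

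\emph{Step 1 (the good event).} Let $E_1$ be the event of Theorem~\ref{Theorem.1}, so $P(E_1) \ge 1-p_1(n)$. Let $E_2$ be the event of Section~\ref{section2.4}, so $P(E_2) \ge 1-p_2(n)$. The Section~\ref{section2.4} bound controls $\boldsymbol{x}_i^\top\hat{\theta}_{/i}$ against the ALO formula. By the Woodbury identity, $\boldsymbol{x}_i^\top\tilde{\theta}_{/i}-\boldsymbol{x}_i^\top\hat\theta$ equals
\[
\frac{\dot{\ell}_i(\hat{\theta})}{\ddot{\ell}_i(\hat{\theta})}\cdot\frac{H_{ii}}{1-H_{ii}},
\qquad H_{ii}=\ddot\ell_i(\hat\theta)\,\boldsymbol{x}_i^\top\mathbf J^{-1}\boldsymbol{x}_i .
\]
A one-line algebraic check confirms this: it is just $\boldsymbol{x}_i^\top$ applied to \eqref{apps_est}. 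Hence on $E_2$,
\[
\max_i|\boldsymbol{x}_i^\top\tilde{\theta}_{/i}-\boldsymbol{x}_i^\top\hat{\theta}_{/i}|\le C_0/\sqrt p .
\]
A union bound gives $P(E_1\cap E_2)\ge 1-p_1(n)-p_2(n)$.

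\emph{Step 2 (pointwise perturbations).} On $E_1\cap E_2$, the Lipschitz property of $f$ in Assumption~\ref{assumption5} gives
\[
|f(\boldsymbol{x}_{n+1}^\top\tilde\theta_{/i})-f(\boldsymbol{x}_{n+1}^\top\hat\theta_{/i})|\le LC_0/\sqrt p .
\]
The reverse triangle inequality, $||a|-|b||\le|a-b|$, then gives $|\tilde R_i^{LOO}-R_i^{LOO}|\le LC_0/\sqrt p$. Consequently the sequences $d_i^\pm:=f(\boldsymbol{x}_{n+1}^\top\hat\theta_{/i})\pm R_i^{LOO}$ and their tilde versions differ uniformly in $i$ by at most $\varepsilon_n:=2LC_0/\sqrt p$.

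\emph{Step 3 (order statistics).} Apply Lemma~\ref{lemma.2} with $a$ the sorted $d^\pm$ and $b$ the tilde sequence. The lemma is stated with $b$ indexed consistently with the sorted $a$, so we relabel indices by the permutation sorting $a$; the hypothesis $|b_i-a_i|\le\varepsilon$ is preserved under a common relabeling. With $q=\lceil(1-\alpha)(n+1)\rceil$ and $q=\lfloor\alpha(n+1)\rfloor$, this yields
\[
|\tilde U_{jk+}-U_{jk+}|\le\varepsilon_n,\qquad |\tilde L_{jk+}-L_{jk+}|\le\varepsilon_n .
\]
For the minmax interval, the same lemma applied to $R_i^{LOO}$ bounds the difference of the $\hat q^+$ terms by $LC_0/\sqrt p$. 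The $\min_i$ and $\max_i$ are the cases $q=1$ and $q=n$ of Lemma~\ref{lemma.2}, or elementary, giving another $LC_0/\sqrt p$. Hence every endpoint moves by at most $2LC_0/\sqrt p$.

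\emph{Step 4 (rate) — the main obstacle.} It remains to show $C_0/\sqrt p=o(1)$. By \eqref{c0}, $C_0$ is a product of constants, the factor $(1+\sqrt{\delta_0}(\sqrt{\delta_0}+3)^2 c\log n/\log p)$, which is $O(1)$ since $\log n/\log p\to1$ when $n/p=\delta_0$, and polynomials in $c_1(n),c_2(n)$. By Assumption~\ref{assumption4}(a) those polynomials are $\mathrm{polylog}(n)$. Thus $C_0=O(\mathrm{polylog}(n))$, and $C_0/\sqrt p=O(\mathrm{polylog}(n)/\sqrt n)\to0$, a deterministic $o(1)$.

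The care needed is to ensure $\nu$ and $c$ are fixed constants, and that $n$ is large enough for the condition $q_n+\tilde q_n<0.5$ in Theorem~\ref{Theorem.1}, which holds eventually since $q_n,\tilde q_n\to0$. The only other delicate point is the bookkeeping in Lemma~\ref{lemma.2}, namely matching indices when the $a_i$ are not presorted, which the relabeling in Step~3 handles.
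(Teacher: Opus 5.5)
Your proposal is correct and follows essentially the same route as the paper. The training-point ALO bound of \citep{RadMaleki2020} (on an event of probability at least $1-p_2(n)$) and Theorem~\ref{Theorem.1} (on an event of probability at least $1-p_1(n)$) are combined by a union bound, passed to the residuals and test predictions through the Lipschitz condition in Assumption~\ref{assumption5}, and then to the quantiles (and to the min/max for jackknife-minmax) via Lemma~\ref{lemma.2}. Your write-up is more explicit than the paper's on three points the paper leaves implicit: you verify via Woodbury that $\boldsymbol{x}_i^\top\tilde{\theta}_{/i}-\boldsymbol{x}_i^\top\hat{\theta}$ matches the correction term in Section~\ref{section2.4}, you handle the index relabeling needed to apply Lemma~\ref{lemma.2}, and you check that $C_0/\sqrt{p}=O(\mathrm{polylog}(n)/\sqrt{n})\to 0$.
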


\begin{theorem}\label{Theorem.3}
Under Assumption~\ref{assumption1} to Assumption~\ref{assumption5}, when $n,p\to\infty$, $n/p=\delta_0\in(0,\infty)$, we have
\begin{align}
    \mathbb{P}\{y_{n+1}\in\tilde{C}_{n,\alpha}^{jackknife+}\}\geq 1-2\alpha-o(1).
\end{align}
\end{theorem}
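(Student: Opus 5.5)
Theorem~\ref{Theorem.3} follows by combining the finite-sample guarantee \eqref{eq:jack_plus} for the exact interval with the endpoint closeness in Theorem~\ref{Theorem.2}. The anti-concentration part of Assumption~\ref{assumption5} will convert a small endpoint perturbation into a small loss of coverage. The plan is to sandwich the approximate interval between slightly shrunk and slightly enlarged versions of the exact one.

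\textbf{Step 1 (endpoint control).} Fix a deterministic sequence $\varepsilon_n\to 0$ with
\[
|\tilde U_{jk+}-U_{jk+}|\le\varepsilon_n \quad\text{and}\quad |\tilde L_{jk+}-L_{jk+}|\le\varepsilon_n
\]
on an event $E_n$ with $\mathbb{P}(E_n)\ge 1-p_1(n)-p_2(n)$. This is the content of Theorem~\ref{Theorem.2}: its $o(1)$ is deterministic, coming from $L C_0/\sqrt p$ and similar terms via Lemma~\ref{lemma.2} and the Lipschitz property of $f$.

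\textbf{Step 2 (sandwich).} On $E_n$ we have $[L_{jk+}+\varepsilon_n,\,U_{jk+}-\varepsilon_n]\subseteq \tilde C^{jackknife+}_{n,\alpha}$. Hence
\[
\mathbb{P}\{y_{n+1}\in\tilde C^{jackknife+}_{n,\alpha}\}\ge \mathbb{P}\{y_{n+1}\in \hat C^{jackknife+}_{n,\alpha}\}-\mathbb{P}\{y_{n+1}\in[L_{jk+},L_{jk+}+\varepsilon_n]\cup[U_{jk+}-\varepsilon_n,U_{jk+}]\}-\mathbb{P}(E_n^c).
\]
The first term is at least $1-2\alpha$ by \eqref{eq:jack_plus}, and the last term is at most $p_1(n)+p_2(n)\to 0$.

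\textbf{Step 3 (boundary mass).} This is the key step. The endpoints $L_{jk+},U_{jk+}$ are functions of $\mathcal D_n$ and $\boldsymbol{x}_{n+1}$ only, and $y_{n+1}$ is drawn from $p(\cdot\mid \boldsymbol{x}_{n+1}^\top\theta^*)$ independently of $\mathcal D_n$. We therefore condition on $(\mathcal D_n,\boldsymbol{x}_{n+1})$ and apply the first part of Assumption~\ref{assumption5}, which gives conditional probability at most $2L\varepsilon_n$ for the union of the two short intervals. Taking expectations bounds the middle term by $2L\varepsilon_n=o(1)$.

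The main obstacle is the care needed with this conditioning. The bound holds because the intervals are $\sigma(\mathcal D_n,\boldsymbol{x}_{n+1})$-measurable with length $\varepsilon_n$ deterministic, and Assumption~\ref{assumption5} is read as a bound on the conditional law of $y$ given the covariates. Combining Steps 2 and 3 gives coverage at least $1-2\alpha-2L\varepsilon_n-p_1(n)-p_2(n)=1-2\alpha-o(1)$.
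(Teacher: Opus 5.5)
Your proposal is correct and follows the paper's route. It combines the endpoint closeness of Theorem~\ref{Theorem.2}, the anti-concentration half of Assumption~\ref{assumption5} to absorb the $O(L\varepsilon_n)$ endpoint shift, and the finite-sample bound \eqref{eq:jack_plus} for the exact jackknife+ interval. Your sandwich inclusion, the additive union bound with $\mathbb{P}(E_n^c)\le p_1(n)+p_2(n)$, and the explicit conditioning on $(\mathcal{D}_n,\boldsymbol{x}_{n+1})$ are a more rigorous version of the paper's step, which asserts the coverage identity ``with probability at least $1-p_1(n)-p_2(n)$'' and then multiplies by that probability.
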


\begin{theorem}\label{Theorem.4}
Under Assumption~\ref{assumption1} to Assumption~\ref{assumption5}, when $n,p\to\infty$, $n/p=\delta_0\in(0,\infty)$, we have
\begin{align}
    \mathbb{P}\{y_{n+1}\in\tilde{C}_{n,\alpha}^{jack-mm}\}\geq 1-\alpha-o(1).
\end{align}
\end{theorem}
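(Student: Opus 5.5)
We would prove Theorem~\ref{Theorem.4} by transferring the finite-sample guarantee \eqref{eq:jack_mm} for the exact jackknife-minmax interval to the approximate interval $\tilde{C}_{n,\alpha}^{\text{jack-mm}}$, using the endpoint closeness in Theorem~\ref{Theorem.2}. The only price is a small slab around the exact endpoints, and its probability will be controlled through the anti-concentration part of Assumption~\ref{assumption5}.

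First, we make the $o(1)$ in Theorem~\ref{Theorem.2} explicit. By Theorem~\ref{Theorem.1} and the Lipschitz property of $f$ in Assumption~\ref{assumption5}, with probability at least $1-p_1(n)$ we have
\[
\max_i|f(\boldsymbol{x}_{n+1}^\top\tilde{\theta}_{/i})-f(\boldsymbol{x}_{n+1}^\top\hat{\theta}_{/i})|\le LC_0/\sqrt{p}.
\]
By the result of \citep{RadMaleki2020} recalled in Section~\ref{section2.4} (which accounts for $p_2(n)$), we also have $\max_i|\tilde R_i^{LOO}-R_i^{LOO}|\le LC_0/\sqrt p$. Lemma~\ref{lemma.2} shows that the quantile $\hat q^+_{n,\alpha}$ moves by at most the same amount, and the min and max over $i$ also move by at most this amount. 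So on an event $E_n$ with $\mathbb{P}(E_n^c)\le p_1(n)+p_2(n)$,
\[
|\tilde L_{jkm}-L_{jkm}|\le\varepsilon_n, \qquad |\tilde U_{jkm}-U_{jkm}|\le\varepsilon_n,
\]
where $\varepsilon_n:=2LC_0/\sqrt p$. Since $C_0=O(\mathrm{polylog}(n))$, we get $\varepsilon_n\to0$.

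Next, on $E_n$, the event $y_{n+1}\in[L_{jkm}+\varepsilon_n,U_{jkm}-\varepsilon_n]$ implies $y_{n+1}\in\tilde C_{n,\alpha}^{\text{jack-mm}}$. Hence
\[
\mathbb{P}\{y_{n+1}\in\tilde C^{\text{jack-mm}}_{n,\alpha}\}\ge \mathbb{P}\{y_{n+1}\in\hat C^{\text{jack-mm}}_{n,\alpha}\}-\mathbb{P}\{|y_{n+1}-L_{jkm}|\le\varepsilon_n\}-\mathbb{P}\{|y_{n+1}-U_{jkm}|\le\varepsilon_n\}-\mathbb{P}(E_n^c).
\]
The first term is at least $1-\alpha$ by \eqref{eq:jack_mm}.

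For the slab terms, note that $L_{jkm}$ and $U_{jkm}$ are measurable with respect to $\mathcal{D}_n$ and $\boldsymbol{x}_{n+1}$. By Assumption~\ref{assumption3}(b), $y_{n+1}$ is conditionally independent of $\mathcal{D}_n$ given $\boldsymbol{x}_{n+1}$, and its law depends only on $\boldsymbol{x}_{n+1}^\top\theta^*$. We condition on $(\mathcal{D}_n,\boldsymbol{x}_{n+1})$ and apply Assumption~\ref{assumption5} with the interval $[L_{jkm}-\varepsilon_n,L_{jkm}+\varepsilon_n]$. This bounds each slab probability by $2L\varepsilon_n$, and taking expectations preserves the bound. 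Combining everything gives
\[
\mathbb{P}\{y_{n+1}\in\tilde C^{\text{jack-mm}}_{n,\alpha}\}\ge 1-\alpha-4L\varepsilon_n-p_1(n)-p_2(n)=1-\alpha-o(1).
\]

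The main subtle step is the conditioning argument. It requires that the random endpoints be independent of $y_{n+1}$ given $\boldsymbol{x}_{n+1}$, which holds because they are built only from training data and $\boldsymbol{x}_{n+1}$. It also requires that the bound in Assumption~\ref{assumption5} be uniform over the conditioning value, which it is as stated. A second point to check is that the residual error bound from Section~\ref{section2.4} controls $|\boldsymbol{x}_i^\top\tilde\theta_{/i}-\boldsymbol{x}_i^\top\hat\theta_{/i}|$. This holds because \eqref{apps_est} gives exactly
\[
\boldsymbol{x}_i^\top\tilde\theta_{/i}=\boldsymbol{x}_i^\top\hat\theta+\frac{\dot\ell_i}{\ddot\ell_i}\,\frac{H_{ii}}{1-H_{ii}}
\]
with $H_{ii}=\ddot\ell_i\boldsymbol{x}_i^\top\mathbf J^{-1}\boldsymbol{x}_i$. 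The same argument with the factor $2$ gives Theorem~\ref{Theorem.3} from \eqref{eq:jack_plus}.
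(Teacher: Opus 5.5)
Your proposal is correct and follows the paper's route for Theorem~\ref{Theorem.4}, which the paper handles by saying it is ``similar to Theorem~\ref{Theorem.3}''. That route combines endpoint closeness from Theorem~\ref{Theorem.2} (via Theorem~\ref{Theorem.1}, the residual bound of Section~\ref{section2.4}, and Lemma~\ref{lemma.2}) with the anti-concentration in Assumption~\ref{assumption5} and the guarantee \eqref{eq:jack_mm}. Your explicit slab-and-conditioning step with deterministic $\varepsilon_n=2LC_0/\sqrt{p}$ and an additive union bound is a cleaner version of the paper's looser step, which equates coverage probabilities ``with probability at least $1-p_1(n)-p_2(n)$'' and then multiplies by that factor.
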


\section{Numerical experiments}
\subsection{Synthetic data}
In this section, we conduct simulations to support the results in Theorem~\ref{Theorem.3}-\ref{Theorem.4}. We report the mean of coverage, mean of operation time, mean of interval length and mean of Jaccard index of original methods (labeled as ``JK+'' and ``JK-minmax'', where ``JK'' stands for "Jackknife") and accelerated methods (labeled as ``Fast JK+'' and ``Fast JK-minmax''). Recall that the Jaccard index between two sets
$\mathcal{S}_1, \mathcal{S}_2$ is defined as
\begin{equation*}
    \mathcal{J}(\mathcal{S}_1,\mathcal{S}_2) = \frac{|\mathcal{S}_1 \cap \mathcal{S}_2|}{|\mathcal{S}_1 \cup \mathcal{S}_2|} \in [0,1].
\end{equation*}
Values closer to $1$ indicate more precise approximations. In Table~\ref{table3} we report the Jaccard index between Jackknife+ and Fast Jackknife+, Jackknife-minmax and Fast Jackknife-minmax.

In our simulation, we set $\alpha=0.1$, which means our target coverage level is $1-\alpha=0.9$ (in the jackknife+ case, as shown in \citep{jackknife+}, the coverage level is $1-2\alpha=0.8$). We use training sample size $n =100$, test sample size $n_{test}=100$, and repeat the experiment at each dimension $p=50,100,200$, with i.i.d.~data points $(\boldsymbol{x}_i,y_i)$ generated as $\boldsymbol{x}_i\sim \mathcal{N}(0,I_p/\sqrt{p})$ and $y_i|\boldsymbol{x}_i\sim\mathcal{N}(\boldsymbol{x}_i^\top\theta,1)$. The true coefficient vector $\theta$ is randomly generated from a standard normal distribution . For the Ridge regression model, we define the loss function as $\ell(y,\boldsymbol{x}^\top\theta)=(y-\boldsymbol{x}^\top\theta)^2/2$ and the regularization term as $r(\theta)=\frac{1}{2}r_0(\beta)+\frac{1}{2}\theta^{\top} \theta$, where Pseudo-Huber regularizer (this setting allows our simulation results to be compared with those reported in \citep{CP-AMP}) $r_0(\beta)=\sum_{j=1}^p4(\sqrt{1+\frac{\beta_j^2}{4}}-1)$, with the ridge parameter $\lambda=0.1,1$, separately. To obtain stable results, we repeat the procedures above for 50 iterations and report the averaged outcomes.

\begin{table}[H]
\centering
\caption{Comparison of JK+ and Fast JK+.}\label{table1}
\label{tab:jkplus_results}
\scriptsize
\setlength{\tabcolsep}{2.8pt}
\renewcommand{\arraystretch}{1.1}
\begin{tabular}{llrrr}
\toprule
\textbf{Parameters} & \textbf{Model} & \textbf{Coverage} & \textbf{Time(s)} & \textbf{Length} \\
\midrule
(n=100, p=50, \(\lambda\)=1)  & JK+       & 0.880 & 0.075 &  3.808 \\
(n=100, p=100, \(\lambda\)=1) & JK+       & 0.892 & 0.109 &  4.189 \\
(n=100, p=200, \(\lambda\)=1) & JK+       & 0.897 & 0.185 &  4.409 \\
(n=100, p=50, $\lambda$=0.1)  & JK+       & 0.872 & 0.091 &  4.135 \\
(n=100, p=100, $\lambda$=0.1) & JK+       & 0.888 & 0.172 &  4.729 \\
(n=100, p=200, $\lambda$=0.1) & JK+       & 0.889 & 0.354 &  4.662 \\
\midrule
(n=100, p=50, \(\lambda\)=1)  & Fast JK+  & 0.880 & 0.004 &  3.809 \\
(n=100, p=100, \(\lambda\)=1) & Fast JK+  & 0.893 & 0.008 &  4.190 \\
(n=100, p=200, \(\lambda\)=1) & Fast JK+  & 0.897 & 0.043 &  4.410 \\
(n=100, p=50, $\lambda$=0.1)  & Fast JK+  & 0.872 & 0.005 &  4.135 \\
(n=100, p=100, $\lambda$=0.1) & Fast JK+  & 0.888 & 0.009 &  4.734 \\
(n=100, p=200, $\lambda$=0.1) & Fast JK+  & 0.890 & 0.044 &  4.672 \\
\bottomrule
\end{tabular}%
\end{table}
\begin{table}[H]
\centering
\caption{Comparison of JK-minmax and Fast JK-minmax.}\label{table2}
\label{tab:jkminmax_results}
\scriptsize
\setlength{\tabcolsep}{2.8pt}
\renewcommand{\arraystretch}{1.1}
\begin{tabular}{llrrr}
\toprule
\textbf{Parameters} & \textbf{Model} & \textbf{Coverage} & \textbf{Time(s)} & \textbf{Length} \\
\midrule
(n=100, p=50, \(\lambda\)=1)  & JK-minmax      & 0.911 & 0.086 &  4.187 \\
(n=100, p=100, \(\lambda\)=1) & JK-minmax      & 0.920 & 0.103 &  4.549 \\
(n=100, p=200, \(\lambda\)=1) & JK-minmax      & 0.918 & 0.189 &  4.718 \\
(n=100, p=50, $\lambda$=0.1)  & JK-minmax      & 0.926 & 0.094 &  4.925 \\
(n=100, p=100, $\lambda$=0.1) & JK-minmax      & 0.942 & 0.164 &  5.574 \\
(n=100, p=200, $\lambda$=0.1) & JK-minmax      & 0.937 & 0.388 &  5.387 \\
\midrule
(n=100, p=50, \(\lambda\)=1)  & Fast JK-minmax & 0.911 & 0.004 &  4.188 \\
(n=100, p=100, \(\lambda\)=1) & Fast JK-minmax & 0.920 & 0.008 &  4.550 \\
(n=100, p=200, \(\lambda\)=1) & Fast JK-minmax & 0.918 & 0.042 &  4.719 \\
(n=100, p=50, $\lambda$=0.1)  & Fast JK-minmax & 0.926 & 0.004 &  4.925 \\
(n=100, p=100, $\lambda$=0.1) & Fast JK-minmax & 0.943 & 0.008 &  5.750 \\
(n=100, p=200, $\lambda$=0.1) & Fast JK-minmax & 0.937 & 0.047 &  5.400 \\
\bottomrule
\end{tabular}%
\end{table}
\begin{table}[H]
\centering
\caption{Prediction-interval Overlap (Jaccard Index) of Fast JK+ \& JK+ and Fast JK-minmax \& JK-minmax.}\label{table3}
\label{tab:ji_fast_only}
\scriptsize
\setlength{\tabcolsep}{4pt}
\renewcommand{\arraystretch}{1.1}
\begin{tabular}{lrr}
\toprule
\textbf{Parameters} & \textbf{Fast JK+ \& JK+} & \textbf{Fast JK-minmax \& JK-minmax} \\
\midrule
(n=100, p=50, \(\lambda\)=1)   & 0.9997 & 0.9997 \\
(n=100, p=100, \(\lambda\)=1)  & 0.9997 & 0.9996 \\
(n=100, p=200, \(\lambda\)=1)  & 0.9998 & 0.9997 \\
(n=100, p=50, $\lambda$=0.1)   & 0.9996 & 0.9996 \\
(n=100, p=100, $\lambda$=0.1)  & 0.9983 & 0.9982 \\
(n=100, p=200, $\lambda$=0.1)  & 0.9978 & 0.9976 \\
\bottomrule
\end{tabular}%
\end{table}
\par
Table~\ref{table1} and Table~\ref{table2} present the mean of coverage, mean of operation time and mean of interval length of  Jackknife+, Fast Jackknife+, Jackknife-minmax and Fast Jackknife-minmax, respectively. Table~\ref{table3} shows the similarity, measured by Jaccard Index, between the prediction intervals constructed by the accelerated methods and their corresponding original methods. 
\par
We find that our accelerated methods substantially reduce the average computational time while maintaining coverage, and without significantly altering the length (efficiency) of the prediction intervals. The prediction intervals constructed by the accelerated methods exhibit a high degree of similarity to those from the original methods. We can observe that, in most cases, a smaller value of $\lambda$ tends to result in a wider interval, and the interval length also increases as $p$ grows. Moreover, our methods exhibit more significant acceleration when the dimensionality of the covariates is higher.

Compared with the synthetic-data simulation results reported in Table~1 of \citet{CP-AMP}, both of our acceleration methods achieve higher coverage than Taylor-AMP, while also providing more efficient prediction (in terms of shorter average interval length) compared with Taylor-AMP, SCP \citep{vovk2005} and CQP \citep{CQP}. 
With respect to Table~2 of \citet{CP-AMP}, fast Jackknife+ and fast Jackknife-minmax exhibit higher Jaccard similarity to exact LOO than Taylor-AMP and SCP. 
Under Gaussian settings (Table~3 of \citet{CP-AMP}), our methods deliver more efficient prediction than Bayes posterior and FCP combined with Taylor-AMP. 
In comparison with Table~4 of \citet{CP-AMP}, both of our accelerated procedures achieve higher prediction efficiency and higher coverage than Taylor-AMP and approximate homotopy \citep{app_homotopy}, while exhibiting comparable computation time. 
Furthermore, our framework systematically explores multiple dimensions ($p=50,100,200$), corresponding to different $n/p$ ratios, and two regularization strengths ($\lambda=0.1,1$), demonstrating consistent performance across regimes.

These improvements can be attributed to two main factors. 
First, unlike \citet{CP-AMP}, which assumes i.i.d.~features with a diagonal covariance matrix, our framework accommodates a general covariance structure $\Sigma$. This relaxation is more realistic in practice and ensures that the resulting ALO characterization remains accurate even when the features are correlated, thereby preventing the deterioration observed for AMP and Taylor-AMP under non-isotropic designs. Second, while AMP-based methods rely on an ALO heuristic whose approximation error is not rigorously controlled, our approach benefits from a explicit and tighter ALO error bound derived via the Newton step and Woodbury identity. This leads to more accurate leave-one-out predictions, which in turn improves the quality of the prediction intervals and enhances efficiency.

\subsection{Application to Real Data}

We evaluate the proposed methods on the Concrete Compressive Strength Dataset \citep{data_concrete} and the Energy Efficiency Dataset \citep{data_energy}. For both datasets, we consider a linear prediction model
\[
f(\boldsymbol{x}^{\top}\theta)=\boldsymbol{x}^{\top}\theta,
\]
where the full sample estimator is obtained by minimizing the regularized empirical risk objective
\[
\hat{\theta}
=
\arg\min_{\theta\in\mathbb{R}^{p}}
\left\{
\frac{1}{2}\sum_{i=1}^{n}\left(y_i-\boldsymbol{x}_i^{\top}\theta\right)^2
+
\lambda
\left[
2\sum_{j=1}^{p}
\left(
\sqrt{1+\frac{\theta_j^2}{4}}-1
\right)
+
\frac{1}{2}\|\theta\|_2^2
\right]
\right\},
\]
with the regularization parameter fixed at $\lambda=0.1$. For each dataset, the first $80\%$ of the observations are used for training, and the remaining $20\%$ are used for testing.

As shown in Table~\ref{tab:concrete_results} and Table~\ref{tab:energy_results}, the accelerated methods achieve nearly identical empirical coverage and comparable average interval lengths to the original Jackknife+ and Jackknife-minmax methods, while substantially reducing the computational time.

\begin{table}[H]
\centering
\caption{Performance on the Concrete Compressive Strength Dataset}
\label{tab:concrete_results}
\begin{tabular}{lccc}
\toprule
\textbf{Method} & \textbf{Coverage Rate} & \textbf{Operation Time (s)} & \textbf{Average Interval Length} \\
\midrule
Jackknife+              & 0.9660 & 1.7275  & 37.0917 \\
Fast Jackknife+         & 0.9660 & 0.0651 & 37.0851 \\
Jackknife-minmax        & 0.9709 & 1.5735  & 37.8552 \\
Fast Jackknife-minmax   & 0.9709 & 0.0414 & 37.8671 \\
\bottomrule
\end{tabular}
\end{table}

\begin{table}[H]
\centering
\caption{Performance on the Energy Efficiency Dataset}
\label{tab:energy_results}
\begin{tabular}{lccc}
\toprule
\textbf{Method} & \textbf{Coverage Rate} & \textbf{Operation Time (s)} & \textbf{Average Interval Length} \\
\midrule
Jackknife+                & 0.9351 & 1.6567  & 12.0404 \\
Fast Jackknife+           & 0.9351 & 0.0509 & 12.0403 \\
Jackknife-minmax          & 0.9481 & 1.0640  & 12.2182 \\
Fast Jackknife-minmax     & 0.9481 & 0.0371 & 12.2181 \\
\bottomrule
\end{tabular}
\end{table}

\subsection{Code}
The code used to produce the results in Table~\ref{table1}, Table~\ref{table2}, Table~\ref{table3}, Table~\ref{tab:concrete_results} and Table~\ref{tab:energy_results} can be found in the following github repository:
\url{https://github.com/JiachenCong/Accelerating_Conformal_Prediction_via_Approximate_Leave-One-Out.git}. All experiments were run on an Apple M4 laptop with 16 GB of memory.
\section{Discussion}
In this section, we address two natural questions regarding our algorithm's asymptotic performance in high-dimensional settings \cite{ElKaroui2018}:
\begin{description}
\item[Q1:] We applied the leave-one-out approximation formula to accelerate jackknife+ and jackknife-mm, rather than full conformal prediction,
so that rigorous results in \citep{RadMaleki2020,Maleki2023,Maleki2025} can be used to control the approximation error.
Does this lead to worse efficiency (longer intervals)?

\item[Q2:] 
\citep{ElKaroui2013}
used leave-one-out to prove the central limit theorem M-estimators.
Is it possible to further accelerate our algorithm by directly using the central limit theorem result?
\end{description}
In this section, we address these questions by studying the asymptotic performances of full conformal prediction, split conformal prediction, jackknife+, and jackknife-mm in a high-dimensional setting  rigorously studied in \citep{ElKaroui2018},
although some of the conclusions are expected to hold in broader settings.
Following \citep{ElKaroui2018}, we consider the model $y_i = X_i^{\top} \beta_0 + \epsilon_i$, and
\begin{align}
\hat{\beta} = \arg\min_{\beta \in \mathbb{R}^p} 
\frac{1}{n} \sum_{i=1}^n \rho_i\!\left(y_i - X_i^{\top} \beta\right) 
+ \frac{\tau}{2}\|\beta\|^2. 
\label{e_ka}
\end{align}
Similarly, we define $\hat{\beta}_{/i}$ as the estimator calculated without the $i$-th sample.
Note that, following \citep{ElKaroui2018}, we assume the scaling $\|X_i\|_2=\Theta(\sqrt{n})$ and $\|\beta\|_2=\Theta(1)$, which is consistent with the scaling in the preceding sections if we take $X_i:=\sqrt{n}x_i$   and $\beta:=\frac1{\sqrt{n}}\theta$.
Within the general framework in the preceding sections, \eqref{e_ka} can be understood as the special case where the regularizer is ridge, the log-likelihood $\ell(y,z)=\rho(y-z)$ for some $\rho$,
and $y_i=X_i^{\top}\beta+\epsilon_i$ under $p(y_i|X_i^{\top}\beta)$, where $\epsilon_1,\dots,\epsilon_n$ are i.i.d.\ noise.
The main contribution of \citep{ElKaroui2018} was to use leave-one-out (both observation and predictor) to prove the asymptotic distribution of $\hat{\beta}$.
Though more general sufficient conditions can be found in \citep{ElKaroui2018},
for simplicity let us assume the following condition (see \citep[Section~2.1]{ElKaroui2018}):
\begin{description}
\item[C1]  
$p/n$ has a finite nonzero limit.
\item[C2]
$\rho$ is convex, whose first and second derivatives are uniformly bounded: $\|\rho'\|_{\infty}<\infty$, 
$\|\rho''\|_{\infty}<\infty$.
Furthermore, $\sign(\rho'(t))=\sign(t)$ and $\rho(t)\ge \rho(0)=0$ for all $t\in \mathbb{R}$.
\item[C3] 
$X_1,\dots,X_n$ are i.i.d.\ following $\mathcal{N}(0,I_p)$,
or with i.i.d.\ entries with bounded support,
symmetric density,
and unit variance.
\item[C4] 
$\epsilon_1,\dots,\epsilon_n$ are i.i.d.\ following a differentiable, symmetric, unimodal distribution with variance $\sigma_{\epsilon}^2$, whose density $f$ satisfies $\lim_{t\to\infty}tf(t)=0$.
\item[C5] 
$\|\beta_0\|_2$ remains bounded as $n\to\infty$, and 
$\|\beta_0\|_{\infty}=O(n^{-\epsilon})$ for some $\epsilon>1/4$.
\end{description}
We denote by $\prox_{\rho}$ the proximal map:
\begin{align}
\prox_{\rho}(x):=
\argmin_{y\in\mathbb{R}}
\{\rho(y)+\frac1{2}(x-y)^2\}.
\end{align}

Under the conditions above, the general result in Theorem 2.1 in \citep{ElKaroui2018} implies the following:

\begin{theorem} \textup{\citep{ElKaroui2013}}
\label{thm_el}
Let $\delta:=\lim_{n\to\infty} n/p\in(0,\infty)$,
and $b_0:=\lim_{n\to\infty}
\|\beta_0\|_2^2$.
Let $c, r > 0$ be the  solution to the following equations:
\begin{align}
&\mathbb{E}\!\left[(\prox_{c\rho}'(Z)\right] = 1 - \delta^{-1}+\tau, 
\\
&\delta^{-1}\mathbb{E}\!\left[(Z - \prox_{c\rho}(Z))^2\right]
+\tau^2c^2b_0
=
\delta^{-2} r^2 
\end{align}
where we defined the scalar $\epsilon \sim \mathcal{N}(0,1)$ and 
$Z \sim \mathcal{N}(0,r^2) + \epsilon$.
Then we have 
$\lim_{p \to \infty} \|\hat{\beta}-\beta_0\|=r$ and $\lim_{p \to \infty}\var(\|\hat{\beta}-\beta_0\|^2)=0$ in probability.
\end{theorem}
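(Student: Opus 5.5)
This statement is quoted from \citep{ElKaroui2013,ElKaroui2018}, so the plan is to reconstruct their leave-one-out argument rather than anything new to this paper. Write $R_i:=y_i-X_i^\top\hat{\beta}$ for the full-sample residuals, $\psi:=\rho'$, and
\[
S_i:=\frac1n\sum_{j\neq i}\rho''(R_j)X_jX_j^\top+\tau I_p .
\]
\textbf{Step 1 (observation leave-one-out).} Expand the first-order condition of \eqref{e_ka} around $\hat{\beta}_{/i}$, in the same spirit as the Newton-step and Woodbury construction behind \eqref{apps_est}. This should give the proximal representation
\[
R_i=\prox_{c_i\rho}\big(\tilde r_{i}\big)+o_{\mathbb{P}}(1),\qquad \tilde r_{i}:=\epsilon_i-X_i^\top(\hat{\beta}_{/i}-\beta_0),\qquad c_i:=\tfrac1n X_i^\top S_i^{-1}X_i ,
\]
uniformly in $i$ up to polylogarithmic factors. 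Because $X_i$ is independent of $\hat{\beta}_{/i}$, conditionally on the other data $X_i^\top(\hat{\beta}_{/i}-\beta_0)\sim\mathcal{N}(0,\|\hat{\beta}_{/i}-\beta_0\|^2)$. Stability, namely $\sup_i\|\hat{\beta}_{/i}-\hat{\beta}\|=O_{\mathbb{P}}(\mathrm{polylog}(n)/\sqrt n)$, then lets us replace $\|\hat{\beta}_{/i}-\beta_0\|$ by $\|\hat{\beta}-\beta_0\|$. Hence $\tilde r_i$ is distributed approximately as $Z$.

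\textbf{Step 2 (the two equations).} First, show that $c_i$ concentrates around a deterministic $c$, uniformly in $i$. The quadratic form $\frac1n X_i^\top S_i^{-1}X_i$ concentrates around $\frac1n\mathrm{tr}(S_i^{-1})$ by Hanson--Wright, since $S_i$ is independent of $X_i$. Rank-one perturbation bounds then give $|\mathrm{tr}(S_i^{-1})-\mathrm{tr}(S^{-1})|/n=O(1/n)$.

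The first equation comes from the identity $\mathrm{tr}(S^{-1}S)=p$. Expanding $S$ and applying Sherman--Morrison to each summand gives
\[
\frac pn=\frac1n\sum_i\frac{c_i\rho''(R_i)}{1+c_i\rho''(R_i)}+\frac{\tau}{n}\mathrm{tr}(S^{-1}).
\]
The proximal identity $\prox'_{c\rho}(x)=1/(1+c\rho''(\prox_{c\rho}(x)))$, together with a law of large numbers over the nearly independent $\tilde r_i$, converts this into $\mathbb{E}[\prox'_{c\rho}(Z)]=1-\delta^{-1}+\tau$. The constant in front of $\tau$ depends on how $\tau$ is normalized relative to $c$, and I would track it carefully here.

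For the second equation, write the KKT condition as $\tau\hat{\beta}=\frac1n\sum_iX_i\psi(R_i)$, and use $\psi(R_i)=(\tilde r_i-R_i)/c_i$ from the prox representation. Next take the squared norm of $\hat{\beta}-\beta_0$. Here one needs the leave-one-\emph{predictor}-out argument of \citep{ElKaroui2018}: the cross terms involving column $k$ of $X$ are decoupled by refitting without predictor $k$, which yields
\[
\|\hat{\beta}-\beta_0\|^2\approx \delta^{-1}\,\mathbb{E}\big[(Z-\prox_{c\rho}(Z))^2\big]\,\delta^2/r^2\cdot(\cdots)+\tau^2c^2\|\beta_0\|^2 .
\]
Rearranging this gives the second displayed equation, and in particular identifies $r$ as the limit of $\|\hat{\beta}-\beta_0\|$.

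\textbf{Step 3 (concentration and identification).} Show $\var(\|\hat{\beta}-\beta_0\|^2)\to0$ by Efron--Stein: the leave-one-out stability bounds from Step 1 control each resampling increment by $O(\mathrm{polylog}(n)/n)$. Then argue that the limiting system has a unique solution $(c,r)$, using monotonicity of $c\mapsto\mathbb{E}[\prox'_{c\rho}(Z)]$ and a contraction argument in $r$, so that every subsequential limit coincides with it.

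The main obstacle is Step 1's uniformity. We need $\sup_i|c_i-c|\to0$ and $\sup_i\|\hat{\beta}_{/i}-\hat{\beta}\|\to0$ simultaneously, while $\rho''(R_j)$ depends on all the data. I would first try a Hessian-perturbation argument, using $\tau>0$ to give $\sigma_{\min}(S_i)\ge\tau$, in the same way Assumption~\ref{assumption4}(b) is used in \citep{RadMaleki2020}. The predictor-leave-out decoupling in Step 2 is the second delicate point.
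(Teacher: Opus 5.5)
The paper does not prove this statement. It imports it as the specialization of \citep[Theorem~2.1]{ElKaroui2018} to C1--C5, so reconstructing El Karoui's argument is the right plan, and your architecture matches his. As written, though, the reconstruction has real gaps.

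\textbf{The independence of $S_i$ from $X_i$.} Your $S_i$ is weighted by the full-sample residuals $R_j=y_j-X_j^\top\hat{\beta}$. Since $\hat{\beta}$ depends on $X_i$, $S_i$ is \emph{not} independent of $X_i$, and your Hanson--Wright step for $c_i$ has no basis. Step~3 needs the same independence. The bound $\sup_i\|\hat{\beta}-\hat{\beta}_{/i}\|=O_{\mathbb{P}}(\polylog(n)/\sqrt{n})$ alone only gives Efron--Stein increments $\bigl|\|\hat{\beta}-\beta_0\|^2-\|\hat{\beta}_{/i}-\beta_0\|^2\bigr|$ of order $\polylog(n)/\sqrt{n}$. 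That yields only $\var(\|\hat{\beta}-\beta_0\|^2)=O(\polylog(n))$, which does not vanish. Increments of order $\polylog(n)/n$ require two things: the second-order expansion $\hat{\beta}-\hat{\beta}_{/i}=\frac1n\psi(R_i)S_i^{-1}X_i+O(\polylog(n)/n)$, and $X_i^\top S_i^{-1}(\hat{\beta}_{/i}-\beta_0)=O_{\mathbb{P}}(1)$. The latter again uses independence of $X_i$ from $S_i$.

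El Karoui weights $S_i$ by the leave-$i$-out residuals $y_j-X_j^\top\hat{\beta}_{/i}$, which is also what a Newton step from $\hat{\beta}_{/i}$ produces. Your exact Sherman--Morrison trace identity, however, needs the full-residual weights, so you must bridge the two. One way is through the bound $\frac1n\|X\|_{\mathrm{op}}^2\max_{j\neq i}|\rho''(R_j)-\rho''(y_j-X_j^\top\hat{\beta}_{/i})|$. This needs a Lipschitz-type bound on $\rho''$, which C2 does not give (it only assumes boundedness). It also needs $\max_{j\neq i}|X_j^\top(\hat{\beta}-\hat{\beta}_{/i})|=O_{\mathbb{P}}(\polylog(n)/\sqrt{n})$.

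\textbf{The first equation.} Your identity gives $\frac{\tau}{n}\mathrm{tr}(S^{-1})\to\tau c$, so what you actually derive is $\mathbb{E}[\prox_{c\rho}'(Z)]=1-\delta^{-1}+\tau c$. The printed $+\tau$ is a typo. For $\rho(t)=t^2/2$, the $\tau c$ version is exactly the Marchenko--Pastur equation for $c=\lim\frac1n\mathrm{tr}((\frac1nX^\top X+\tau I)^{-1})$, whereas $+\tau$ would force $c<0$ once $\tau>\delta^{-1}$. Do not try to ``convert'' to it.

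\textbf{The second equation.} This is where the main idea is missing. Your display is a placeholder, and its $\tau^2c^2\|\beta_0\|^2$ term lacks a factor $\delta^2$. The target is $\|\hat{\beta}-\beta_0\|^2\approx\delta\,\mathbb{E}[(Z-\prox_{c\rho}(Z))^2]+\delta^2\tau^2c^2\|\beta_0\|^2$. What closes it is identifying the denominator in the leave-one-predictor-out representation. Write $V_k$ for the $k$-th column of $X$ and $X_{[k]}$ for the remaining columns. Then $\hat{\beta}_k-\beta_{0,k}\approx\bigl(\frac1nV_k^\top\psi(\tilde{r}_{[k]})-\tau\beta_{0,k}\bigr)/(\tau+\xi_k)$, where:
\begin{itemize}
\item $\tilde{r}_{[k]}$ are the residuals of the ridge-penalized fit of $y-V_k\beta_{0,k}$ on $X_{[k]}$;
\item $D_k=\mathrm{diag}(\rho''(\tilde{r}_{[k]}))$;
\item $\xi_k=\frac1nV_k^\top\bigl(D_k-D_kX_{[k]}(X_{[k]}^\top D_kX_{[k]}+n\tau I)^{-1}X_{[k]}^\top D_k\bigr)V_k$.
\end{itemize}
Hanson--Wright in $V_k$, plus Sherman--Morrison on each diagonal entry, gives $\xi_k\approx\frac1n\sum_j\rho''(R_j)/(1+c\rho''(R_j))$. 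By the corrected first equation this equals $(\delta^{-1}-\tau c)/c$, so $\tau+\xi_k\approx 1/(c\delta)$. Squaring, summing over $k$, and substituting $\psi(R_j)=(r_{j,(j)}-R_j)/c$ then yields the target. This identity is the bridge from the predictor side to $c$; with the printed $+\tau$ it would not close.

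\textbf{Non-Gaussian designs.} The exact conditional Gaussian law in Step~1 covers only the Gaussian branch of C3. For bounded-entry designs you need a Berry--Esseen argument for $X_i^\top(\hat{\beta}_{/i}-\beta_0)$, which requires $\|\hat{\beta}_{/i}-\beta_0\|_\infty\to0$. That follows from the same coordinate representation together with C5, which your outline never uses.
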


Denote by $U_{\rm J}$ and $U_{\rm mm}$ the upper limits of the conformal prediction intervals for the $(n+1)$-th observation produced by the jackknife+ and jackknife-minmax algorithms. For any given $M>0$, let $U_{\rm f}(M)$ be the upper limit of the discretized interval $\hat{C}(X_{n+1})\cap [-M,M]\cap \{0,\pm\frac1{M},\frac2{M},\dots\}$, where $\hat{C}(X_{n+1})$ is the full conformal interval. Similarly, define $L_{\rm f}(M)$, $L_{\rm J}$, $L_{\rm mm}$ as the lower limits. Under our assumptions, the differences of these intervals vanish as $n, M \to \infty$:
\begin{theorem}\label{thm6}
Define $U:=X_{n+1}^{\top}\hat{\beta} + \sqrt{\sigma_{\epsilon}^2+r^2}z_{\alpha/2}$, where $z_{\alpha/2}$ denotes the $(1-\alpha/2)$ quantile of a standard normal random variable. Let $a_n:=\max\{|U-U_{\rm J}|,\, |U-U_{\rm mm}|\}$ and $b_{n,M}:=\max_{i=1,\dots,n}|U_{\rm f}-U|$. Then in the sense of weak convergence:
\begin{align}
\lim_{n\to\infty}a_n&=0
\label{e_al}
\\
\lim_{M\to\infty }\lim_{n\to\infty} b_{n,M}&=0
\label{e_bl}
\end{align}
The same holds for the lower bounds, with $L:=X_{n+1}^{\top}\hat{\beta} - \sqrt{\sigma_{\epsilon}^2+r^2}z_{\alpha/2}$.
\end{theorem}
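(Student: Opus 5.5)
The strategy is to show that every jackknife-type upper limit, and the full conformal one, concentrates at the common value $U = X_{n+1}^\top\hat\beta + \sqrt{\sigma_\epsilon^2+r^2}\,z_{\alpha/2}$. The two ingredients are that leave-one-out fits barely move predictions at $X_{n+1}$, and that the leave-one-out residuals have an empirical distribution converging to that of $|\epsilon + \sqrt{r^2}\,Z'|$ with $Z'\sim\mathcal N(0,1)$ independent of $\epsilon$.

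\textbf{Step 1: stability of leave-one-out predictions.} I will use the leave-one-out analysis of \citep{ElKaroui2018} (their Theorem 2.1 and supporting lemmas). Under C1--C5 it gives $\max_i \|\hat\beta_{/i}-\hat\beta\|_2 = O_{\mathbb P}(\polylog(n)/\sqrt n)$. Since $X_{n+1}$ is independent of the training data, $X_{n+1}^\top(\hat\beta_{/i}-\hat\beta)$ is conditionally Gaussian with standard deviation $\|\hat\beta_{/i}-\hat\beta\|_2$. A union bound over $i$ then yields
\[
\max_i |X_{n+1}^\top\hat\beta_{/i}-X_{n+1}^\top\hat\beta| = O_{\mathbb P}\bigl(\sqrt{\log n}\,\polylog(n)/\sqrt n\bigr)=o_{\mathbb P}(1).
\]
Consequently, in both the jackknife+ and the jackknife-minmax intervals, every $f(X_{n+1}^\top\hat\beta_{/i})$, including the min and max over $i$, can be replaced by $X_{n+1}^\top\hat\beta$ at cost $o_{\mathbb P}(1)$. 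This uses Lemma~\ref{lemma.2} with $\varepsilon$ equal to the maximum above.

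\textbf{Step 2: the quantile of the LOO residuals.} After Step 1, both $U_{\rm J}$ and $U_{\rm mm}$ equal $X_{n+1}^\top\hat\beta + \hat q^+_{n,\alpha}\{R_i^{LOO}\} + o_{\mathbb P}(1)$. Write $R_i^{LOO} = |\epsilon_i - X_i^\top(\hat\beta_{/i}-\beta_0)|$. Here $X_i$ is independent of $\hat\beta_{/i}$, so $X_i^\top(\hat\beta_{/i}-\beta_0)\sim \mathcal N(0,\|\hat\beta_{/i}-\beta_0\|^2)$, independent of $\epsilon_i$. By Theorem~\ref{thm_el} and Step 1's stability, $\|\hat\beta_{/i}-\beta_0\|\to r$ uniformly in $i$. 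Hence each $R_i^{LOO}$ is marginally close to $|\epsilon+rZ'|$.

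To show that the empirical CDF $\hat F_n$ of $\{R_i^{LOO}\}$ converges to $F$, the CDF of $|\epsilon+rZ'|$, I will compute $\mathbb E\hat F_n(t)$ and $\operatorname{Var}\hat F_n(t)$. The variance requires approximate pairwise independence of $(R_i,R_j)$. I would obtain this with the leave-two-out estimator $\hat\beta_{/ij}$: replace $\hat\beta_{/i}$ and $\hat\beta_{/j}$ by $\hat\beta_{/ij}$ at cost $O_{\mathbb P}(\polylog/\sqrt n)$, after which the two residuals are conditionally independent given $\hat\beta_{/ij}$. The limit $F$ is continuous and strictly increasing because $\epsilon$ has a density. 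So the empirical $(1-\alpha)$-quantile converges in probability to $F^{-1}(1-\alpha)$, which by symmetry is the two-sided Gaussian-type constant written in $U$. I expect this matching to require interpreting $z_{\alpha/2}$ through $F$ when $\epsilon$ is Gaussian. This proves \eqref{e_al}; the lower limits follow by symmetry.

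\textbf{Step 3: full conformal.} For each grid value $y\in[-M,M]$, the augmented fit $\hat\beta(y)$ differs from $\hat\beta$ by $O(1/\sqrt n)$ in norm, uniformly in $|y|\le M$, because $\rho'$ is bounded (C2); this is the same one-point perturbation bound as in Step 1. Hence $\sigma_i(y) = |y_i - X_i^\top\hat\beta| + o(1)$ uniformly in $i$ and $y$. I also need the full-sample residuals $|y_i-X_i^\top\hat\beta|$ to share the limit law $F$. They do not do so directly, since in-sample residuals are shrunk via the prox. The fix is to use the augmented problem's own leave-one-out structure. For the $(n+1)$-th point, $\sigma_{n+1}(y)=|y-X_{n+1}^\top\hat\beta(y)|$ with $X_{n+1}^\top\hat\beta(y) = \prox$-type shrinkage toward $y$, and the other scores have the same prox form. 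The inclusion condition $\sigma_{n+1}(y)\le \sigma_{(\lceil(1-\alpha)(n+1)\rceil)}(y)$ therefore compares one prox residual with the quantile of prox residuals. Because $t\mapsto t-\prox_{c\rho}(t)$ is monotone, this is equivalent to comparing the un-shrunk LOO residual $|y-X_{n+1}^\top\hat\beta|$ with the quantile of the $R_i^{LOO}$. By Step 2 that quantile converges to $F^{-1}(1-\alpha)$, which gives $U_{\rm f}(M)\to U$ after $n\to\infty$ and then $M\to\infty$, where the grid mesh $1/M\to0$ and the truncation at $M$ do not bind.

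The main obstacle is Step 3's monotone-prox equivalence made uniform over the grid in $y$. The uniform concentration in Step 2 via leave-two-out is the secondary difficulty.
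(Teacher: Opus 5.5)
\textbf{Steps 1 and 2.} These are sound, and they take a more elementary route than the paper. The paper derives stability at $X_{n+1}$ from El Karoui's expansion $\hat\beta-\hat\beta_{/i}\approx\eta_i$ in \eqref{e38}, and it cites \citep{ElKaroui2018} for asymptotic pairwise independence of the LOO residuals. You use only the strong-convexity bound $\max_i\|\hat\beta_{/i}-\hat\beta\|=O_{\mathbb P}(n^{-1/2})$, which is immediate from $\tau>0$ and $\|\rho'\|_\infty<\infty$. You add independence of the held-out design vector and a leave-two-out variance computation. Exact conditional Gaussianity of $X_i^\top(\hat\beta_{/ij}-\beta_0)$ uses the Gaussian option in C3; for bounded-entry designs you would need a Lindeberg step using delocalization of $\hat\beta_{/ij}-\beta_0$. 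Your caveat that $F^{-1}(1-\alpha)=\sqrt{\sigma_\epsilon^2+r^2}\,z_{\alpha/2}$ requires Gaussian $\epsilon$ is correct. The paper's identification of the limit law with $\mathcal N(0,\sigma_\epsilon^2+r^2)$ assumes this tacitly. The genuine gap is in Step 3, which is where the paper does its real work.

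\textbf{The gap in Step 3.} First, the inference ``$\|\hat\beta(y)-\hat\beta\|=O(n^{-1/2})$, hence $\sigma_i(y)=|y_i-X_i^\top\hat\beta|+o(1)$'' does not follow. $X_i$ is a training point for both fits, so the independence you exploited in Step 1 is gone. Cauchy--Schwarz then gives only $\|X_i\|\,\|\hat\beta(y)-\hat\beta\|=O(\sqrt p)\,O(n^{-1/2})=O(1)$. You need the first-order expansion $\hat\beta(y)-\hat\beta=\frac1n(S+\tau I)^{-1}X_{n+1}\psi(\cdot)+O(\polylog(n)/n)$, where $S$ is the $\rho''$-weighted Gram matrix of the training points. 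The error is then governed by $\frac1n X_i^\top(S+\tau I)^{-1}X_{n+1}$, which is small because $X_{n+1}$ is independent of $(X_i,S)$. You must also check that this expansion survives a deterministic response $y$ at $X_{n+1}$; this is the content of \eqref{e43} and of the paper's discussion of \citep[Lemma~3.7]{ElKaroui2018}. Going through the augmented leave-$i$-out fit instead would restore independence for this particular comparison, but it does not remove what follows.

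Second, and more importantly, ``the other scores have the same prox form'' is the missing idea, not a consequence of anything you proved. The reduction requires three facts:
\begin{itemize}
\item $R_i=\prox_{c_i\rho}(R_i^{LOO})+o(1)$ uniformly in $i$;
\item $y-X_{n+1}^\top\hat\beta(y)=\prox_{c\rho}(y-X_{n+1}^\top\hat\beta)+o(1)$ uniformly over the grid (Lemma~\ref{l64});
\item $\max_i|c_i-c|=o(1)$.
\end{itemize}
The paper attributes the last fact to a Sherman--Morrison argument; it rests on concentration of $\frac1n X_i^\top(S_i+\tau I)^{-1}X_i$, with $X_i$ independent of $S_i$, around a common normalized trace. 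Without a common shrinkage constant, the test residual and the training residuals pass through different maps, and the quantile cannot be pulled through a single monotone function.

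Finally, the relevant property is not monotonicity of $t\mapsto t-\prox_{c\rho}(t)$. What you need is that $\prox_{c\rho}$ is increasing with Lipschitz inverse $x\mapsto x+c\psi(x)$, whose constant is $1+c\|\rho''\|_\infty$ with $c=O(1)$. This Lipschitz inverse turns the $o(1)$ approximation errors and the grid error into an $o(1)$ error in $y_{\max}-X_{n+1}^\top\hat\beta$. Only then does Step 2's quantile limit yield \eqref{e_bl}.
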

The proof of Theorem~\ref{thm6} can be found in the appendix.

This theorem demonstrates that under the idealized assumptions in \cite{ElKaroui2018}, all considered conformal prediction algorithms are asymptotically equivalent, partially answering Q1. 

Regarding Q2, we note that the asymptotic approximation in  Theorem~\ref{thm6} relies on the CLT result, which requires independence of $X_1,\dots,X_n$ and is much stronger than leave-observation-out approximation. Conversely, jackknife+ (and our proposed ALO version) remains valid in the general exchangeable case, which partially answers Q2.
It is an interesting direction for future research to identify settings in which leave-one-out approximations remain asymptotically valid while asymptotic normality fails. A related work \cite{liu2024stability} shows that leave-one-feature-out approximations remain asymptotically accurate for general correlated sub-Gaussian feature vectors, and the techniques developed there may be extendable to the leave-one-observation-out setting.

\vfill
\newpage
\bibliography{reference}
\newpage
\appendix

\section*{\centering\LARGE\textbf{Appendix}}
\renewcommand{\thesection}{S.\arabic{section}}
In this appendix, we provide detailed proofs of the theorems presented in the main paper.

\section*{Organization}
Table~\ref{Appendix_table_1} presents the organization of the appendix.
\begin{table}[h!]
\centering
\caption{Organization of Supplementary Materials}\label{Appendix_table_1}
\begin{tabular}{ll}
\toprule
\textbf{Section}  & \textbf{Purpose} \\
\midrule
Appendix~\hyperref[existing results]{S.1} &  Presents existing results adapted from \citep{RadMaleki2020} for our proof.
\\
Appendix~\hyperref[theorem1appendix]{S.2} & Provides the complete proof of Theorem~\ref{Theorem.1}.\\
Appendix~\hyperref[theorem2appendix]{S.3} & Provides the complete proof of Theorem~\ref{Theorem.2}.\\
Appendix~\hyperref[theorem3appendix]{S.4} & Provides the complete proof of Theorem~\ref{Theorem.3} and Theorem~\ref{Theorem.4}.\\
Appendix~\hyperref[prooflemmas]{S.5} & Provides the complete proof of Lemma~\ref{lemma.2}.\\
Appendix~\hyperref[theorem6appendix]{S.6} & Provides the complete proof of Theorem~\ref{thm6} and relative lemmas.\\
\bottomrule
\end{tabular}
\end{table}

\section{Existing Results}\label{existing results}
In this section, we present existing results used in our proof, which are adapted from \citep{RadMaleki2020}.

\subsection{\citep[Eq.136]{RadMaleki2020}}\label{eq136}
Define $g_{/i}(\theta)=\lambda\dot{r}(\theta)+X_{/i}^\top \dot{\boldsymbol{\ell}}_{/i}(\theta)$.
The leave-one-out estimate, $\hat{\theta}_{/i} = \hat{\theta} + \Delta_{/i}^*$, satisfies $g_{/i}(\Delta_{/i}^*+\hat{\theta}) = 0$. The multivariate mean-value Theorem yields
\begin{equation}
0 = g_{/i}(\hat{\theta} + \Delta_{/i}^*) = g_{/i}(\hat{\theta}) + \left( \int_0^1 \boldsymbol{J}_{/i}(\hat{\theta} + t \Delta_{/i}^*) dt \right) \Delta_{/i}^*, 
\end{equation}
where the Jacobian is
\begin{equation}
\boldsymbol{J}_{/i}(\theta) = \lambda \, \mathrm{diag}[\ddot{r}(\theta)] + X_{/i}^{\top} \, \mathrm{diag}[\ddot{\boldsymbol{\ell}}_{/i}(\theta)] \, X_{/i}.
\end{equation}
Moreover, $\hat{\beta}$ satisfies
\begin{equation}
0 = \lambda \dot{r}(\hat{\theta}) + X^{\top} \dot{\boldsymbol{\ell}}(\hat{\theta}) = g_{/i}(\hat{\theta}) + \dot{\boldsymbol{\ell}}_i(\hat{\theta}) x_i.
\end{equation}
We obtain
\begin{equation}
\dot{\boldsymbol{\ell}}_i(\hat{\theta}) x_i = - \left( \int_0^1 \boldsymbol{J}_{/i}(\hat{\theta} + t \Delta_{/i}^*) dt \right) \Delta_{/i}^*,
\end{equation}
so that
\begin{equation}
\Delta_{/i}^* = - \dot{\boldsymbol{\ell}}_i(\hat{\theta}) \left( \int_0^1 \boldsymbol{J}_{/i}(\hat{\theta} + t \Delta_{/i}^*) dt \right)^{-1} x_i,
\end{equation}
leading to the following inequality
\begin{equation}
\|\Delta_{/i}^*\|_2 \le \left( \frac{|\dot{\ell}_i(\hat{\theta})|}{\nu} \right) \|\boldsymbol{x}_i\|_2.
\end{equation}
By Assumption 3.(b), we obtain the Eq.136 in \citep{RadMaleki2020}
\begin{align}
    \|\Delta_{/i}^*\|_2 \leq\frac{c_1(n)}{\nu} \|\boldsymbol{x}_i\|_2
\end{align}

\subsection{\citep[Lemma 27]{RadMaleki2020}}
\label{lemma27}
\begin{lemma}[Lemma 27 in \citep{RadMaleki2020}]
Let $x \sim \mathcal{N}(0, \Sigma)$ with $\rho_{\max} := \sigma_{\max}(\Sigma)$, where $\Sigma \in \mathbb{R}^{p \times p}$, then
\begin{equation}
\mathbb{P}\left[ \|x\|_4^2 > 2(1 + c) \rho_{\max} \sqrt{p \log p} \right] \le \frac{2}{p^c}.
\end{equation}

Moreover, if 
\begin{align}
\omega_{\max} &:= \sigma_{\max}(XX^{\top}), \notag\\
\nu_{\min} &:= \sigma_{\min}(J),
\end{align}
where $x$ is independent of the symmetric matrix $J \in \mathbb{R}^{p \times p}$ and $X \in \mathbb{R}^{m \times p}$, then
\begin{align}
\mathbb{P}\left[ \|J^{-1}x\|_4^2 > 2(1 + c)\left(\frac{\rho_{\max}}{\nu_{\min}^2}\right) \sqrt{p \log p} \right] &< \frac{2}{p^c}, \label{lemma_27_1}\\
\mathbb{P}\left[ \|XJ^{-1}x\|_4^2 > 2(1 + c)\left(\frac{\rho_{\max}\omega_{\max}}{\nu_{\min}^2}\right) \sqrt{m \log m} \right] &< \frac{2}{m^c}.\label{lemma_27_2}
\end{align}
\end{lemma}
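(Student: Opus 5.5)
The plan is to reduce all three inequalities to one Gaussian estimate: if $g\sim\mathcal{N}(0,S)$ in $\mathbb{R}^d$ and $\sigma_{\max}(S)\le\rho$, then
\[
\mathbb{P}\bigl[\|g\|_4^2>2(1+c)\rho\sqrt{d\log d}\bigr]\le \frac{2}{d^c}.
\]
The first claim is this estimate with $d=p$ and $S=\Sigma$. The other two follow by conditioning. I assume $X$, like $J$, is independent of $x$; this is how the lemma is used in the paper, since $X$ and $J$ there come from leave-$i$-out quantities.

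\textbf{Step 1: split the $\ell_4$ norm.} I would use the interpolation bound
\[
\|g\|_4^4=\sum_j g_j^4\le \Bigl(\max_j g_j^2\Bigr)\|g\|_2^2 .
\]
It then suffices to show that, outside an event of probability at most $2d^{-c}$,
\[
\max_j g_j^2\le 2(1+c)\rho\log d
\quad\text{and}\quad
\|g\|_2^2\le 2(1+c)\rho\, d .
\]
Multiplying these gives $\|g\|_4^4\le 4(1+c)^2\rho^2 d\log d$, and taking square roots gives exactly the stated threshold.

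\textbf{Step 2: the maximum.} Each coordinate satisfies $g_j\sim\mathcal{N}(0,S_{jj})$ with $S_{jj}\le\rho$. Set $t^2=2(1+c)\rho\log d$. The Mills-ratio bound $\mathbb{P}(|Z|>s)\le\sqrt{2/\pi}\,e^{-s^2/2}/s$ gives
\[
\mathbb{P}(g_j^2>t^2)\le \sqrt{2/\pi}\,d^{-(1+c)}\big/\sqrt{2(1+c)\log d}.
\]
A union bound over the $d$ coordinates then bounds this part by a small multiple of $d^{-c}$.

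\textbf{Step 3: the Euclidean norm.} Write $g=S^{1/2}z$ with $z\sim\mathcal{N}(0,I_d)$, so $\|g\|_2^2\le\rho\|z\|_2^2$. The Laurent--Massart inequality with $u=c\log d$ gives
\[
\|z\|_2^2\le d+2\sqrt{du}+2u \le 2d+3c\log d\le 2(1+c)d
\]
with probability at least $1-d^{-c}$. The middle inequality uses $2\sqrt{du}\le d+u$, and the last uses $3\log d\le 2d$.

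\textbf{Main obstacle: the constant in the failure probability.} Steps 2 and 3 together give roughly $(1+o(1))d^{-c}$ plus a small multiple of $d^{-c}$. To stay within $2d^{-c}$ I rely on the Mills-ratio refinement in Step 2 rather than the crude bound $2e^{-s^2/2}$. If that is still not tight enough for small $d$, the fallback is to allot $d^{-c}$ to each of Steps 2 and 3 and absorb the slack by enlarging the threshold slightly. That change only alters constants that are absorbed into $C_0$ anyway.

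\textbf{Step 4: the two conditioned claims.} For \eqref{lemma_27_1}, condition on $J$. By independence, $J^{-1}x\sim\mathcal{N}(0,J^{-1}\Sigma J^{-1})$, and
\[
\sigma_{\max}\bigl(J^{-1}\Sigma J^{-1}\bigr)\le \rho_{\max}/\nu_{\min}^2 .
\]
Applying the basic estimate with $d=p$ and then integrating over $J$ gives \eqref{lemma_27_1}.

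For \eqref{lemma_27_2}, condition on $(X,J)$. Then $XJ^{-1}x\in\mathbb{R}^m$ is Gaussian with covariance $XJ^{-1}\Sigma J^{-1}X^\top$, whose top eigenvalue is at most $\rho_{\max}\|X\|_{op}^2/\nu_{\min}^2=\rho_{\max}\omega_{\max}/\nu_{\min}^2$.

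Steps 1--3 go through with $d=m$: the maximum is over $m$ coordinates, and $\|S^{1/2}z\|_2^2\le\rho\|z\|_2^2$ still holds for $z\sim\mathcal{N}(0,I_m)$ even when the covariance is degenerate, i.e. has rank below $m$. This yields the bound with $\sqrt{m\log m}$ and failure probability $2/m^c$.
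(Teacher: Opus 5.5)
Your proof is correct, but it takes a different route from the one the paper relies on. The paper gives no proof of this lemma and defers to Rad--Maleki.

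\textbf{The cited route.} The natural argument for that lemma, and the one that produces exactly the constants $2(1+c)$ and $2/p^{c}$, is a single union bound on the sup-norm:
\[
\mathbb{P}\bigl(\|x\|_\infty^2>2(1+c)\rho_{\max}\log p\bigr)\le 2p\cdot p^{-(1+c)}=2p^{-c},
\]
using the crude Gaussian tail with no Mills ratio. This is followed by $\|x\|_4^4\le p\,\|x\|_\infty^4$ and the same conditioning on $J$ or $(X,J)$ as in your Step~4. That route only delivers the threshold $2(1+c)\rho_{\max}\sqrt{p}\,\log p$. This is in fact the form invoked in the proof of Theorem~\ref{Theorem.1}, where the thresholds read $\sqrt{p}\log p$ and $\sqrt{n-1}\log(n-1)$. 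It cannot reach the smaller $\sqrt{p\log p}$ printed in the statement, because bounding every coordinate by the maximum wastes a factor $\sqrt{\log p}$.

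\textbf{Your route.} Your splitting $\|g\|_4^4\le\|g\|_\infty^2\|g\|_2^2$ recovers that factor. The price is a second concentration inequality (Laurent--Massart) and the sharper Mills-ratio tail to stay within the failure budget. In exchange, it proves the lemma literally as stated. Since $\sqrt{p\log p}\le\sqrt{p}\,\log p$ for $p\ge 3$, it also covers the version used downstream. Representing $XJ^{-1}x$ as $S^{1/2}z$ with $z\sim\mathcal{N}(0,I_m)$, rather than as $XJ^{-1}\Sigma^{1/2}w$ with $w\in\mathbb{R}^p$, is what puts $m$ rather than $p$ in your $\ell_2$ bound. That detail is essential.

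\textbf{The constant check.} The step you flag as the main obstacle needs no fallback. The union bound in Step~2 gives $d^{-c}/\sqrt{\pi(1+c)\log d}$, and $\pi(1+c)\log d\ge\pi\log 2>1$ for every $d\ge 2$. Together with the $d^{-c}$ from Step~3, the failure probability is therefore strictly below $2d^{-c}$, which also yields the strict inequalities in \eqref{lemma_27_1} and \eqref{lemma_27_2}. For $d=1$ the threshold is $0$ and the claim is trivial. State this explicitly, since your fallback of enlarging the threshold would no longer prove the lemma as stated.
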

The proof of this lemma can be found on pages 60 and 61 of \citep{RadMaleki2020}.

\subsection{\citep[Lemma 11]{RadMaleki2020}}\label{lemma11}
\begin{lemma}[Lemma 11 in \citep{RadMaleki2020}]
Let $x \sim \mathcal{N}(0, \Sigma)$ with $\rho_{\max} \triangleq \sigma_{\max}(\Sigma)$, where $\Sigma \in \mathbb{R}^{p \times p}$, then
\begin{equation}
\mathbb{P}\left[ \|x\|_2^2 > 5p\rho_{\max} \right] \le e^{-p}.
\end{equation}
Furthermore, if $X \in \mathbb{R}^{n \times p}$ is composed of independently distributed $\mathcal{N}(0, \tfrac{1}{n})$ entries, then
\begin{equation}
\mathbb{P}\left[ \sqrt{\sigma_{\max}(X^{\top} X)} \ge 1 + \sqrt{\frac{p}{n}} + t \right] 
\le e^{-\frac{n t^2}{2}}.
\end{equation}
\end{lemma}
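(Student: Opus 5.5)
The lemma has two independent parts, and in both I would reduce to standard Gaussian facts: a $\chi^2$ tail bound for the first, and a Gordon-type expectation bound plus Gaussian concentration for the second.

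\textbf{First claim.} Write $x=\Sigma^{1/2}g$ with $g\sim\mathcal{N}(0,I_p)$. Then
\[
\|x\|_2^2=g^\top\Sigma g\le \rho_{\max}\|g\|_2^2 ,
\]
so it suffices to show $\mathbb{P}[\|g\|_2^2>5p]\le e^{-p}$. Here $\|g\|_2^2\sim\chi^2_p$. I will invoke the Laurent--Massart inequality
\[
\mathbb{P}\bigl[\chi^2_p\ge p+2\sqrt{pt}+2t\bigr]\le e^{-t}.
\]
Taking $t=p$ gives the threshold $p+2p+2p=5p$ and the tail $e^{-p}$, exactly as claimed. If a self-contained argument is preferred, a Chernoff bound with the $\chi^2$ moment generating function $(1-2s)^{-p/2}$ and an explicit choice such as $s=1/4$ gives a bound of the form $e^{-cp}$. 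It reaches the stated constant only after checking that $\tfrac{5}{4}-\tfrac12\log 2\ge 1$, which holds.

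\textbf{Second claim.} Write $X=G/\sqrt n$ with $G$ an $n\times p$ matrix of i.i.d.\ $\mathcal{N}(0,1)$ entries, so that
\[
\sqrt{\sigma_{\max}(X^\top X)}=\frac{s_{\max}(G)}{\sqrt n}.
\]
The argument has two steps.
\begin{enumerate}
\item Gordon's comparison inequality (equivalently, the Davidson--Szarek bound via Slepian/Gordon applied to the process $u^\top G v$ over unit spheres) gives $\mathbb{E}\,s_{\max}(G)\le\sqrt n+\sqrt p$.
\item The map $G\mapsto s_{\max}(G)$ is $1$-Lipschitz with respect to the Frobenius norm, since $|s_{\max}(A)-s_{\max}(B)|\le\|A-B\|_{op}\le\|A-B\|_F$. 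Gaussian concentration for Lipschitz functions (Tsirelson--Ibragimov--Sudakov) then yields
\[
\mathbb{P}\bigl[s_{\max}(G)\ge \mathbb{E}\,s_{\max}(G)+u\bigr]\le e^{-u^2/2}.
\]
\end{enumerate}
Setting $u=t\sqrt n$ and dividing by $\sqrt n$ gives
\[
\mathbb{P}\Bigl[\sqrt{\sigma_{\max}(X^\top X)}\ge 1+\sqrt{p/n}+t\Bigr]\le e^{-nt^2/2}.
\]

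\textbf{Main obstacle.} The only nontrivial input is the expectation bound $\mathbb{E}\,s_{\max}(G)\le\sqrt n+\sqrt p$. I would cite Gordon's theorem for it rather than re-derive it. If a derivation were needed, I would apply the Sudakov--Fernique inequality comparing $X_{u,v}=u^\top Gv$ with $Y_{u,v}=\langle g,u\rangle+\langle h,v\rangle$, where $g\sim\mathcal{N}(0,I_n)$ and $h\sim\mathcal{N}(0,I_p)$ are independent. The increment condition is verified from $\|uv^\top-u'v'^\top\|_F^2\le\|u-u'\|^2+\|v-v'\|^2$ for unit vectors. This gives $\mathbb{E}\,s_{\max}(G)\le\mathbb{E}\|g\|_2+\mathbb{E}\|h\|_2\le\sqrt n+\sqrt p$.
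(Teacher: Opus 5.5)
Your main argument is correct and is the standard one. The paper does not prove this lemma itself; it imports it from \citep{RadMaleki2020}. For the first claim, the Laurent--Massart $\chi^2_p$ tail with $t=p$ gives it directly. For the second, Sudakov--Fernique/Gordon gives $\mathbb{E}\,s_{\max}(G)\le\sqrt n+\sqrt p$, and Gaussian concentration of the $1$-Lipschitz map $G\mapsto s_{\max}(G)$ then gives the bound, as in Davidson--Szarek. Your increment check, $\|uv^\top-u'v'^\top\|_F^2$ versus $\|u-u'\|^2+\|v-v'\|^2$, is also right: the difference is $2(1-\langle u,u'\rangle)(1-\langle v,v'\rangle)\ge 0$.

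One correction to your optional Chernoff remark. The claimed check fails: $\tfrac54-\tfrac12\log 2\approx 0.903<1$. So $s=1/4$ only yields $\mathbb{P}[\chi^2_p\ge 5p]\le e^{-0.90p}$, which is weaker than the stated $e^{-p}$. The optimizing choice is $s=2/5$, which gives $\exp\{-p(2-\tfrac12\log 5)\}\le e^{-1.19p}$ and does suffice. This does not affect your proof, since the main line uses Laurent--Massart.
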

\subsection{\citep[Lemma 12]{RadMaleki2020}}\label{lemma12}
\begin{lemma}[Lemma 12 in \citep{RadMaleki2020}] If
$X \in \mathbb{R}^{n \times p}$ is composed of independently distributed $\mathcal{N}(0, \Sigma)$ rows, with $\rho_{\max} := \sigma_{\max}(\Sigma)$, where $\Sigma \in \mathbb{R}^{p \times p}$, then
\[
\mathbb{P}\left[ \sigma_{\max}(X X^{\top}) \ge (\sqrt{n} + 3\sqrt{p})^2 \rho_{\max} \right] \le e^{-p}.
\]
\end{lemma}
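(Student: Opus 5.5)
The plan is to reduce to a matrix with i.i.d.\ standard Gaussian entries and then apply the Gaussian operator-norm tail bound already recorded in Lemma~11 of \citep{RadMaleki2020} (Section~\ref{lemma11}).

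\textbf{Step 1 (whitening).} Write $X = Z\Sigma^{1/2}$, where $Z\in\mathbb{R}^{n\times p}$ has i.i.d.\ $\mathcal{N}(0,1)$ entries. This representation is valid because the rows of $X$ are i.i.d.\ $\mathcal{N}(0,\Sigma)$. Then
\[
\sigma_{\max}(XX^\top)=\|Z\Sigma^{1/2}\|_{\mathrm{op}}^2\le \|Z\|_{\mathrm{op}}^2\,\|\Sigma^{1/2}\|_{\mathrm{op}}^2=\rho_{\max}\,\sigma_{\max}(Z^\top Z).
\]
It therefore suffices to show $\mathbb{P}\bigl[\sqrt{\sigma_{\max}(Z^\top Z)}\ge \sqrt n+3\sqrt p\bigr]\le e^{-p}$.

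\textbf{Step 2 (apply Lemma~11).} Set $W:=Z/\sqrt n$. Its entries are i.i.d.\ $\mathcal{N}(0,1/n)$, so the second part of Lemma~11 gives
\[
\mathbb{P}\Bigl[\sqrt{\sigma_{\max}(W^\top W)}\ge 1+\sqrt{p/n}+t\Bigr]\le e^{-nt^2/2}.
\]
Choose $t=2\sqrt{p/n}$. Multiplying the event through by $\sqrt n$, it becomes $\sqrt{\sigma_{\max}(Z^\top Z)}\ge \sqrt n+3\sqrt p$, and its probability is at most $e^{-n\cdot 4p/(2n)}=e^{-2p}\le e^{-p}$.

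\textbf{Step 3 (combine).} On the complement of this event,
\[
\sigma_{\max}(XX^\top)\le \rho_{\max}\,\sigma_{\max}(Z^\top Z)<(\sqrt n+3\sqrt p)^2\rho_{\max},
\]
which is the claim.

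There is no real obstacle; the only point to check is the scaling bookkeeping in Step~2. If one prefers a self-contained argument instead of citing Lemma~11, replace Step~2 as follows. The map $Z\mapsto\|Z\|_{\mathrm{op}}$ is $1$-Lipschitz in Frobenius norm, and Gordon's inequality gives $\mathbb{E}\|Z\|_{\mathrm{op}}\le\sqrt n+\sqrt p$. Gaussian concentration then yields $\mathbb{P}[\|Z\|_{\mathrm{op}}\ge\sqrt n+\sqrt p+s]\le e^{-s^2/2}$, and taking $s=2\sqrt p$ gives the same bound $e^{-2p}\le e^{-p}$.
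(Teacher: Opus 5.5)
Your proof is correct and is essentially the standard argument the paper relies on; the paper gives no proof of its own and defers to pages 34--35 of \citep{RadMaleki2020}. That argument whitens via $X=Z\Sigma^{1/2}$, bounds $\sigma_{\max}(XX^\top)\le\rho_{\max}\,\sigma_{\max}(Z^\top Z)$, and applies the Gaussian operator-norm tail of Lemma~11 to $Z/\sqrt{n}$. Your choice $t=2\sqrt{p/n}$ correctly gives the threshold $\sqrt{n}+3\sqrt{p}$ and probability $e^{-2p}\le e^{-p}$. Your self-contained alternative (Gordon's bound $\mathbb{E}\|Z\|_{\mathrm{op}}\le\sqrt{n}+\sqrt{p}$ plus Gaussian concentration of the $1$-Lipschitz map $Z\mapsto\|Z\|_{\mathrm{op}}$) is just the usual proof of that Lemma~11 bound, so it changes nothing substantive.
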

The proof of this lemma can be found on pages 34 and 35 of \citep{RadMaleki2020}.

\subsection{\citep[Theorem 3]{RadMaleki2020}}\label{Theorem3maleki}
\begin{theorem}[Theorem 3 in \citep{RadMaleki2020}]
Let $n/p = \delta_0$ and \citep[Assumption~5]{RadMaleki2020} hold with $\rho_{\max} = c/p$. Moreover, suppose that \citep[Assumption~6 and ~7]{RadMaleki2020} are satisfied, and that $n$ is large enough such that $q_n + \tilde{q}_n < 0.5$. Then with probability at least 
\begin{align}
1 - 4ne^{-p} - \frac{8n}{p^3} - \frac{8n}{(n-1)^3} - q_n - \tilde{q}_n,
\end{align}
the following bound is valid:
\begin{align}
\max_{1 \le i \le n} 
\left|
x_i^\top \hat{\theta}_{/i} - x_i^\top \hat{\theta} -
\left( 
\frac{\dot{\ell}_i(\hat{\theta})}{\ddot{\ell}_i(\hat{\theta})}
\right)
\left(
\frac{H_{ii}}{1 - H_{ii}}
\right)
\right|
\le
\frac{Q_0}{\sqrt{p}},
\end{align}
where
\begin{align}
Q_0 := \; 
\left( \frac{72 c^{3/2}}{\nu^{3}} \right)
\left( 1 + \sqrt{\delta_{0}}(\sqrt{\delta_{0}}+3)^{2} 
       \frac{c \log n}{\log p} \right)\left( c_{1}^{2}(n)c_{2}(n) 
+ c_{1}^{3}(n)c_{2}^{2}(n) \,
       \frac{5\big(c^{1/2} + c^{3/2}(\sqrt{\delta_{0}}+3)^{2}\big)}{\nu^{2}} 
\right).
\end{align}
\end{theorem}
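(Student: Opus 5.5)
The plan is to reduce the claimed bound to a comparison between two quadratic forms in $x_i$: one built from the exact averaged Jacobian along the segment $[\hat\theta,\hat\theta_{/i}]$, and one built from the Jacobian frozen at a single point. I would then control that difference using the $\ell_4$ bounds of Lemma~27, Lemma~25 and Lemma~12.

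\textbf{Step 1 (exact representation).} Write $\Delta^*_{/i}=\hat\theta_{/i}-\hat\theta$. By Section~\ref{eq136},
\[
\Delta^*_{/i}=-\dot\ell_i(\hat\theta)\,\bar{\mathbf J}_{/i}^{-1}x_i,
\qquad
\bar{\mathbf J}_{/i}:=\int_0^1\mathbf J_{/i}(\hat\theta+t\Delta^*_{/i})\,dt ,
\]
so that
\[
x_i^\top\Delta^*_{/i}=-\dot\ell_i(\hat\theta)\,x_i^\top\bar{\mathbf J}_{/i}^{-1}x_i .
\]
Section~\ref{eq136} also gives $\|\Delta^*_{/i}\|_2\le c_1(n)\|x_i\|_2/\nu$.

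\textbf{Step 2 (Newton step via Woodbury).} Let $A:=\mathbf J_{/i}(\hat\theta)$, so that $\mathbf J=A+\ddot\ell_i(\hat\theta)x_ix_i^\top$. Sherman--Morrison gives
\[
x_i^\top A^{-1}x_i=\frac{x_i^\top\mathbf J^{-1}x_i}{1-\ddot\ell_i x_i^\top\mathbf J^{-1}x_i}.
\]
With $H_{ii}=\ddot\ell_i x_i^\top\mathbf J^{-1}x_i$ this says $-\dot\ell_i\,x_i^\top A^{-1}x_i$ equals $\frac{\dot\ell_i}{\ddot\ell_i}\frac{H_{ii}}{1-H_{ii}}$ up to the sign convention for $\dot\ell$. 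Hence the quantity to be bounded is
\[
|\dot\ell_i|\,\bigl|x_i^\top(\bar{\mathbf J}_{/i}^{-1}-A^{-1})x_i\bigr| .
\]

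\textbf{Step 3 (perturbation split).} Write
\[
\bar{\mathbf J}_{/i}-A=X_{/i}^\top\mathrm{diag}(\gamma)X_{/i}+\lambda\,\mathrm{diag}(\boldsymbol\delta_r),
\]
where $\gamma$ is the averaged change of $\ddot{\boldsymbol\ell}_{/i}$ and $\boldsymbol\delta_r$ is the averaged change of $\ddot r$. By Assumption~\ref{assumption4}(a) and Step~1,
\[
\|\gamma\|_2,\ \|\boldsymbol\delta_r\|_2\le c_2(n)\,\|\Delta^*_{/i}\|_2\le \frac{c_1(n)c_2(n)}{\nu}\|x_i\|_2 .
\]

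\emph{Loss part.} Apply Lemma~25 with $D=\mathrm{diag}(\ddot{\boldsymbol\ell}_{/i}(\hat\theta))$ and $z=x_i$. The regularizer term can be absorbed into $D$ by viewing it as extra rows, or handled by the resolvent identity. Use $\|\gamma\|_4^2\le\|\gamma\|_2^2$.

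\emph{Regularizer part.} Do not use the crude bound $\|x_i\|_2^2\|\boldsymbol\delta_r\|/\nu^2$: it is only $O(1)$. Instead use
\[
\bigl|x_i^\top A^{-1}\mathrm{diag}(\boldsymbol\delta_r)\bar{\mathbf J}_{/i}^{-1}x_i\bigr|\le\|\boldsymbol\delta_r\|_2\,\|A^{-1}x_i\|_4\,\|\bar{\mathbf J}_{/i}^{-1}x_i\|_4 ,
\]
which is Cauchy--Schwarz applied entrywise.

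In both parts the key quantities are $\|X_{/i}A^{-1}x_i\|_4^2$ and $\|A^{-1}x_i\|_4^2$. By \eqref{lemma_27_1}--\eqref{lemma_27_2}, with $c=3$ and $\rho_{\max}=c/p$, these are
\[
O\!\Bigl(\sqrt{\log p/p}\,/\nu^2\Bigr)
\quad\text{and}\quad
O\!\Bigl((\sqrt{\delta_0}+3)^2\sqrt{\log n/p}\,/\nu^2\Bigr),
\]
where Lemma~12 supplies $\omega_{\max}\le(\sqrt n+3\sqrt p)^2\rho_{\max}$. Combining with $\|x_i\|_2^2\le 5c$ (Lemma~11) and $|\dot\ell_i|\le c_1(n)$ gives an overall rate of $\mathrm{polylog}(n)/\sqrt p$. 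Tracking constants yields the factors $c_1^2c_2$ and $c_1^3c_2^2$ in $Q_0$.

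\textbf{Main obstacle.} Lemma~27 requires $x_i$ to be independent of $J$ and $X$. However, $A=\mathbf J_{/i}(\hat\theta)$ depends on $x_i$ through $\hat\theta$. I would first try re-centering at $\hat\theta_{/i}$, which is independent of $x_i$. Concretely, set $A':=\mathbf J_{/i}(\hat\theta_{/i})$ and split
\[
\bar{\mathbf J}_{/i}^{-1}-A^{-1}=(\bar{\mathbf J}_{/i}^{-1}-A'^{-1})+(A'^{-1}-A^{-1}).
\]
Both pieces are again perturbations of size $c_2\|\Delta^*_{/i}\|$, and Lemma~27 now applies to $A'^{-1}x_i$ and $X_{/i}A'^{-1}x_i$. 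The remaining $\ell_4$ norms of $\bar{\mathbf J}_{/i}^{-1}x_i$ would be handled by a second resolvent expansion around $A'$. This re-centering is also where the factor $2$ in the leading constant and the squared perturbation term come from.

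Finally, take a union bound over $i=1,\dots,n$ of the failure probabilities:
\begin{itemize}
\item $e^{-p}$ from Lemmas~11 and~12;
\item $2/p^3$ and $2/(n-1)^3$, each used twice, from Lemma~27;
\item $q_n+\tilde q_n$ from Assumption~\ref{assumption4}.
\end{itemize}
This gives the stated probability $1-4ne^{-p}-\frac{8n}{p^3}-\frac{8n}{(n-1)^3}-q_n-\tilde q_n$.
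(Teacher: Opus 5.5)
Your proposal is correct and is essentially the Rad--Maleki argument that the paper reproduces in Appendix~\ref{theorem1appendix}. It has the same steps: the mean-value representation of $\Delta^*_{/i}$; Sherman--Morrison to rewrite the ALO step as $\dot\ell_i(\hat\theta)\,\mathbf J_{/i}(\hat\theta)^{-1}x_i$; re-centering at $\mathbf J_{/i}(\hat\theta_{/i})$, which is independent of $x_i$, to split the error into the two perturbations $\Phi_1,\Phi_2$; Lemma~25 with the regularizer absorbed through $\bar X_{/i}=[X_{/i};I]$, which is exactly your ``second resolvent expansion''; and Lemmas~27, 11, 12 with a union bound. The differences are only bookkeeping: the sign you patch by a ``convention'' is a typo inherited from Section~\ref{eq136} (since $g_{/i}(\hat\theta)=-\dot\ell_i(\hat\theta)x_i$, in fact $\Delta^*_{/i}=+\dot\ell_i(\hat\theta)\bar{\mathbf J}_{/i}^{-1}x_i$, which is what produces the difference of inverses), and your direct union bound gives a smaller failure probability than the stated one, whose extra factors of $2$ and hypothesis $q_n+\tilde q_n<1/2$ come only from that argument's conditioning on $K_i$ and on the Assumption~\ref{assumption4} event.
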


\section{Proof of Theorem~\ref{Theorem.1}}\label{theorem1appendix}
We introduce the following definitions:
\begin{itemize}
    \item (1) $\Delta_{/i}^*:=\hat{\theta}_{/i} -\hat{\theta} $,
    and $\boldsymbol{J}_{/i}:=
\lambda \, \mathrm{diag}\!\big[\ddot{r}_s(\hat{\theta}_{/i})\big]
+ X_{/i}^\top \, \mathrm{diag}\!\big[\ddot{\boldsymbol{\ell}}_{/i}(\hat{\theta}_{/i})\big] \, X_{/i}$, where $X_{/i}=(\boldsymbol{x}_{1},...,\boldsymbol{x}_{i-1},\boldsymbol{x}_{i+1},...,\boldsymbol{x}_{n})^\top$. 
    \item (2) Events $\tilde{E}_i:=\left\{ 
  3\bar{C}_i \left( 
    \left\| X_{/i} \boldsymbol{J}_{/i}^{-1}(\hat{\theta}_{/i} ) \boldsymbol{x}_i \right\|_4^2 
    + 
    \left\| \boldsymbol{J}_{/i}^{-1}(\hat{\theta}_{/i} ) \boldsymbol{x}_{i} \right\|_4^2 
  \right) 
  > C \frac{\log p}{\sqrt{p}} 
\right\}$, \\ and 
$\tilde{E}_{n+1,i}:=\left\{ 
  3\bar{C}_{i} \left( 
    \left\| X_{/i} \boldsymbol{J}_{/i}^{-1}(\hat{\theta}_{/i} ) \boldsymbol{x}_{n+1} \right\|_4^2 
    + 
    \left\| \boldsymbol{J}_{/i}^{-1}(\hat{\theta}_{/i} ) \boldsymbol{x}_{n+1} \right\|_4^2 
  \right) 
  > C \frac{\log p}{\sqrt{p}} 
\right\}$.
\item (3) $\bar{C}_{i}:=4\|\boldsymbol{x}_{i}\|_2 
\left( \frac{c_1^2(n) c_2(n)}{\nu} \right) 
\left( 1 + \frac{2c_1(n) c_2(n)}{\nu^2} (1 + \omega_{\max}) \|\boldsymbol{x}_{i}\|_2 \right)$.
\item (4) Define $\Phi_1:=\boldsymbol{J}_{/i}^{-1}(\hat{\theta}_{/i} )
-  \left( \boldsymbol{J}_{/i}(\hat{\theta}_{/i} ) 
+ \bar{X}_{/i}^\top \, \mathrm{diag}\!\left[\int_0^1 \gamma_{ {-(1-t)\Delta_{/i}^*} \, /i }(\hat{\theta}_{/i} ) dt \right] \bar{X}_{/i}\right)^{-1}$ and \\$\Phi_2:=\boldsymbol{J}_{/i}^{-1}(\hat{\theta}_{/i} )
-  \left( \boldsymbol{J}_{/i}(\hat{\theta}_{/i} )
+ \bar{X}_{/i}^\top \, \mathrm{diag}\!\left[ \gamma_{ {-\Delta_{/i}^*} \, /i } (\hat{\theta}_{/i} ) \right] \bar{X}_{/i} \right)^{-1}$ 
, where for $\delta,\theta\in\mathbb{R}^p$
\[
\gamma_{\delta / i}(\theta):=
\begin{bmatrix}
\ddot{\boldsymbol{\ell}}_{/i}(\theta + \delta) - \ddot{\boldsymbol{\ell}}_{/i}(\theta) \\
\lambda \big( \ddot{r}(\theta + \delta) - \ddot{r}(\theta) \big)
\end{bmatrix},\quad\bar{X}_{/i} :=
\begin{bmatrix}
X_{/i} \\
I
\end{bmatrix}
\in \mathbb{R}^{(n-1+p)\times p}, 
\]
\[
\quad\mathbf{D}_{/i}(\theta) :=
\mathrm{diag}\!\begin{bmatrix}
\ddot{\boldsymbol{\ell}}_{/i}(\theta) \\
\lambda \ddot{r}(\theta)
\end{bmatrix}
\in \mathbb{R}^{(n-1+p)\times(n-1+p)} .
\]
Define $\boldsymbol{J}_{/i}(\theta):=\bar{X}_{/i}^\top \mathbf{D}_{/i}(\theta)\bar{X}_{/i}$.
\end{itemize}

Consider
\begin{align}
&\quad\boldsymbol{x}_{n+1}^\top\tilde{\theta}_{/i} -\boldsymbol{x}_{n+1}^\top\hat{\theta}_{/i} 
\nonumber
\\
&
=\dot{\ell}_i(\hat{\theta }) \boldsymbol{x}_{n+1}^\top 
M\boldsymbol{x}_{i} 
\\
&=\frac1{2}\dot{\ell}_i(\hat{\theta }) (\boldsymbol{x}_i+\boldsymbol{x}_{n+1})^\top 
M
(\boldsymbol{x}_i+\boldsymbol{x}_{n+1})
-
\frac1{2}\dot{\ell}_i(\hat{\theta }) \boldsymbol{x}_{n+1}^\top 
M\boldsymbol{x}_{n+1} 
-\frac1{2}\dot{\ell}_i(\hat{\theta }) \boldsymbol{x}_i^\top 
M\boldsymbol{x}_i
\\
&=A_1-A_2-A_3,
\label{e218}
\end{align}
where we defined 
\begin{align}
M:=\left( \int_0^1 \boldsymbol{J}_{/i}(\hat{\theta}_{/i}  - (1-t)\Delta_{/i}^*) dt \right)^{-1}
- \boldsymbol{J}_{/i}^{-1}(\hat{\theta}_{/i}  - \Delta_{/i}^*). 
\end{align}
Clearly we have
\begin{align}
|\boldsymbol{x}_{n+1}^\top\tilde{\theta}_{/i} -\boldsymbol{x}_{n+1}^\top\hat{\theta}_{/i} |
\le 
|A_1|+|A_2|+|A_3|.
\end{align}

    By Assumption~\ref{assumption4}.(a), we have
    \begin{equation}
        \dot{\ell}_i(\hat{\theta} )=O(\text{polylog}(n)).
    \end{equation}
    By the definition of $\gamma_{\delta/i}(\theta)$, we have
    \begin{align}\label{aux_definition}
        \boldsymbol{J}_{/i}(\theta + \delta)
= \boldsymbol{J}_{/i}(\theta)
+ \bar{X}_{/i}^{\top} 
  \operatorname{diag}\!\big[\gamma_{\delta/i}(\theta)\big]
  \bar{X}_{/i}.
    \end{align}
For the new $A_1$ term, we have
\begin{align}\label{newa1}
    &\left|(\boldsymbol{x}_i+\boldsymbol{x}_{n+1})^\top M(\boldsymbol{x}_i+\boldsymbol{x}_{n+1})\right| \nonumber\\
    =&\left|(\boldsymbol{x}_{i}+\boldsymbol{x}_{n+1})^\top\left[ \left( \int_0^1 \boldsymbol{J}_{/i}(\hat{\theta}_{/i}  - (1-t)\Delta_{/i}^*) dt \right)^{-1} - \boldsymbol{J}_{/i}^{-1}(\hat{\theta}_{/i}  - \Delta_{/i}^*) \right](\boldsymbol{x}_{i}+\boldsymbol{x}_{n+1})\right|\nonumber\\
    =&\left|(\boldsymbol{x}_{i}+\boldsymbol{x}_{n+1})^\top\left[ \left( \int_0^1 \boldsymbol{J}_{/i}(\hat{\theta}_{/i}  - (1-t)\Delta_{/i}^*) dt \right)^{-1} - \boldsymbol{J}_{/i}^{-1}(\hat{\theta}_{/i}) \right](\boldsymbol{x}_{i}+\boldsymbol{x}_{n+1})\right| \notag\\&+\left|(\boldsymbol{x}_{i}+\boldsymbol{x}_{n+1})^\top\left[\boldsymbol{J}_{/i}^{-1}(\hat{\theta}_{/i})-\boldsymbol{J}_{/i}^{-1}(\hat{\theta}_{/i}  - \Delta_{/i}^*)  \right](\boldsymbol{x}_{i}+\boldsymbol{x}_{n+1}) \right|\notag \\
    \leq^1&\left|(\boldsymbol{x}_{i}+\boldsymbol{x}_{n+1})^\top\left[ \boldsymbol{J}_{/i}^{-1}(\hat{\theta}_{/i} ) -\left( \boldsymbol{J}_{/i}(\hat{\theta}_{/i} ) + \bar{X}_{/i}^\top \, \mathrm{diag}\!\left[\int_0^1 \gamma_{ {-(1-t)\Delta_{/i}^*} \, /i } (\hat{\theta}_{/i} ) dt \right] \bar{X}_{/i}\right)^{-1}\right](\boldsymbol{x}_{i}+\boldsymbol{x}_{n+1})\right| \notag\\
    &+\left|(\boldsymbol{x}_{i}+\boldsymbol{x}_{n+1})^\top\left[ \boldsymbol{J}_{/i}^{-1}(\hat{\theta}_{/i} ) -\left(\boldsymbol{J}_{/i}(\hat{\theta}_{/i} )
+ \bar{X}_{/i}^\top \, \mathrm{diag}\!\left[ \gamma_{ {-\Delta_{/i}^*} \, /i }(\hat{\theta}_{/i} ) \right] \bar{X}_{/i}\right)^{-1}\right](\boldsymbol{x}_{i}+\boldsymbol{x}_{n+1})\right| \notag\\
=&\left|(\boldsymbol{x}_i+\boldsymbol{x}_{n+1})^\top\Phi_1(\boldsymbol{x}_i+\boldsymbol{x}_{n+1}) \right|+\left|(\boldsymbol{x}_i+\boldsymbol{x}_{n+1})^\top\Phi_2(\boldsymbol{x}_i+\boldsymbol{x}_{n+1}) \right|,
\end{align}
where $\leq^1$ is due to (\ref{aux_definition}).

For the $\left|(\boldsymbol{x}_i+\boldsymbol{x}_{n+1})^\top\Phi_2(\boldsymbol{x}_i+\boldsymbol{x}_{n+1}) \right|$ term, using \citep[Lemma~25]{RadMaleki2020} we have 
\begin{align}\label{xixnphi2}
&\quad
\left|(\boldsymbol{x}_i+\boldsymbol{x}_{n+1})^\top
\Phi_2
(\boldsymbol{x}_i+\boldsymbol{x}_{n+1}) \right|
\nonumber\\
&\leq
\left(
\left\| \gamma_{ {-\Delta_{/i}^*} \, /i }(\hat{\theta}_{/i} ) \right\|_2
+
\left( \frac{\bar{\omega}_{\max,i}}{\nu} \right)
\left\| \gamma_{ {-\Delta_{/i}^*} \, /i }(\hat{\theta}_{/i} ) \right\|_2^2
\right)
\left\| \bar{X}_{/i} \boldsymbol{J}_{/i}^{-1}(\hat{\theta}_{/i} ) (\boldsymbol{x}_i
+\boldsymbol{x}_{n+1}) \right\|_4^2\nonumber
\\
&\le 
\left(
\left\| \gamma_{ {-\Delta_{/i}^*} \, /i }(\hat{\theta}_{/i} ) \right\|_2
+
\left( \frac{\bar{\omega}_{\max,i}}{\nu} \right)
\left\| \gamma_{ {-\Delta_{/i}^*} \, /i }(\hat{\theta}_{/i} ) \right\|_2^2
\right)
\left(\left\| \bar{X}_{/i} \boldsymbol{J}_{/i}^{-1}(\hat{\theta}_{/i} ) \boldsymbol{x}_i\right\|_4
+
\left\| \bar{X}_{/i} \boldsymbol{J}_{/i}^{-1}(\hat{\theta}_{/i} ) 
\boldsymbol{x}_{n+1}\right\|_4
\right)^2\nonumber
\\
&\le 
2\left(
\left\| \gamma_{ {-\Delta_{/i}^*} \, /i }(\hat{\theta}_{/i} ) \right\|_2
+
\left( \frac{\bar{\omega}_{\max,i}}{\nu} \right)
\left\| \gamma_{ {-\Delta_{/i}^*} \, /i }(\hat{\theta}_{/i} ) \right\|_2^2
\right)
\left\| \bar{X}_{/i} \boldsymbol{J}_{/i}^{-1}(\hat{\theta}_{/i} ) \boldsymbol{x}_i \right\|_4^2
\nonumber\\
&\quad+
2\left(
\left\| \gamma_{ {-\Delta_{/i}^*} \, /i }(\hat{\theta}_{/i} ) \right\|_2
+
\left( \frac{\bar{\omega}_{\max,i}}{\nu} \right)
\left\| \gamma_{ {-\Delta_{/i}^*} \, /i }(\hat{\theta}_{/i} ) \right\|_2^2
\right)
\left\| \bar{X}_{/i} \boldsymbol{J}_{/i}^{-1}(\hat{\theta}_{/i} ) \boldsymbol{x}_{n+1} \right\|_4^2.
    \end{align}
Similarly, for the term $\left|(\boldsymbol{x}_i+\boldsymbol{x}_{n+1})^\top\Phi_1(\boldsymbol{x}_i+\boldsymbol{x}_{n+1}) \right|$, we can prove the following bound
\begin{align}\label{xixnphi1}
    &\left|(\boldsymbol{x}_i+\boldsymbol{x}_{n+1})^\top\Phi_1(\boldsymbol{x}_i+\boldsymbol{x}_{n+1}) \right|\notag\\
    \leq&2\left(
\left\| \int_0^1 \gamma_{ {-(1-t)\Delta_{/i}^*} \, /i }(\hat{\theta}_{/i} ) \, dt \right\|_2
+
\left( \frac{\bar{\omega}_{\max,i}}{\nu} \right)
\left\| \int_0^1 \gamma_{ {-(1-t)\Delta_{/i}^*} \, /i }(\hat{\theta}_{/i} ) \, dt\right\|_2^2
\right)
\left\| \bar{X}_{/i} \boldsymbol{J}_{/i}^{-1}(\hat{\theta}_{/i} ) \boldsymbol{x}_i \right\|_4^2
\nonumber\\
&+
2\left(
\left\| \int_0^1 \gamma_{ {-(1-t)\Delta_{/i}^*} \, /i }(\hat{\theta}_{/i} ) \, dt \right\|_2
+
\left( \frac{\bar{\omega}_{\max,i}}{\nu} \right)
\left\| \int_0^1 \gamma_{ {-(1-t)\Delta_{/i}^*} \, /i }(\hat{\theta}_{/i} ) \, dt \right\|_2^2
\right)
\left\| \bar{X}_{/i} \boldsymbol{J}_{/i}^{-1}(\hat{\theta}_{/i} ) \boldsymbol{x}_{n+1} \right\|_4^2.
\end{align}
Under Assumption~\ref{assumption5}, we have
    \begin{align}\label{c2_1}
        \left\| \gamma_{ {-\Delta_{/i}^*} \, /i }(\hat{\theta}_{/i} ) \right\|_2
&\leq
\bigl\| \ddot{\boldsymbol{\ell}}_{/i}(\hat{\theta}_{/i}  - \Delta^*_{/i}) - \ddot{\boldsymbol{\ell}}_{/i}(\hat{\theta}_{/i} ) \bigr\|_2
+ \bigl\| \lambda \bigl( \ddot{r}_s(\hat{\theta}_{/i}  - \Delta^*_{/i}) - \ddot{r}_s(\hat{\theta}_{/i} ) \bigr) \bigr\|_2 \nonumber \\
&\leq 2 c_2(n) \, \bigl\| \Delta^*_{/i} \bigr\|_2 .
    \end{align}
    Likewise,
    \begin{align}\label{c2_2}
\Biggl\| \int_{0}^{1} 
\gamma_{-(1-t)\Delta^*_{/i}/i}(\hat{\theta}_{/i} ) \, dt 
\Biggr\|_2
&\leq \int_{0}^{1} 
\bigl\| \gamma_{-(1-t)\Delta^*_{/i}/i}(\hat{\theta}_{/i} ) \bigr\|_2 \, dt \nonumber \\
&\leq \int_{0}^{1} 
\bigl\| \ddot{\boldsymbol{\ell}}_{/i}(\hat{\theta}_{/i}  - (1-t)\Delta^*_{/i})
     - \ddot{\boldsymbol{\ell}}_{/i}(\hat{\theta}_{/i} ) \bigr\|_2 \, dt \nonumber \\&+ \int_{0}^{1} 
\bigl\| \lambda \bigl( \ddot{r}_s(\hat{\theta}_{/i}  - (1-t)\Delta^*_{/i})
     - \ddot{r}_s(\hat{\theta}_{/i} ) \bigr) \bigr\|_2 \, dt \nonumber \\
&\leq 2 c_2(n) \, \bigl\| \Delta^*_{/i} \bigr\|_2 .
\end{align}
    \par
    By the result in Section~\ref{eq136}, we have
    \begin{align}\label{c1_1}
        \bigl\| \Delta^*_{/i} \bigr\|_2 
\;\leq\; 
\left( \frac{\bigl| \dot{\ell}_i(\hat{\theta} ) \bigr|}{\nu} \right) 
\bigl\| \boldsymbol{x}_i \bigr\|_2 \leq\frac{c_1(n)}{\nu} \|\boldsymbol{x}_i\|_2.
    \end{align}
Using results in (\ref{c2_1}), (\ref{c2_2}) and (\ref{c1_1}), we obtain
\begin{align}\label{newa1_result}
    |A_1|\leq&\frac{1}{2}|\dot{\ell}_i(\hat{\theta} )|\cdot\left|(\boldsymbol{x}_i+\boldsymbol{x}_{n+1})^\top\Phi_1(\boldsymbol{x}_i+\boldsymbol{x}_{n+1}) \right|+\frac{1}{2}|\dot{\ell}_i(\hat{\theta} )|\cdot\left|(\boldsymbol{x}_i+\boldsymbol{x}_{n+1})^\top\Phi_2(\boldsymbol{x}_i+\boldsymbol{x}_{n+1}) \right|\notag\\
    \leq&|\dot{\ell}_i(\hat{\theta} )|\cdot\left(
\left\| \gamma_{ {-\Delta_{/i}^*} \, /i }(\hat{\theta}_{/i} ) \right\|_2
+
\left( \frac{\bar{\omega}_{\max,i}}{\nu} \right)
\left\| \gamma_{ {-\Delta_{/i}^*} \, /i }(\hat{\theta}_{/i} ) \right\|_2^2
\right)
\left\| \bar{X}_{/i} \boldsymbol{J}_{/i}^{-1}(\hat{\theta}_{/i} ) \boldsymbol{x}_i \right\|_4^2
\nonumber\\
&+|\dot{\ell}_i(\hat{\theta} )|\cdot\left(
\left\| \gamma_{ {-\Delta_{/i}^*} \, /i }(\hat{\theta}_{/i} ) \right\|_2
+
\left( \frac{\bar{\omega}_{\max,i}}{\nu} \right)
\left\| \gamma_{ {-\Delta_{/i}^*} \, /i }(\hat{\theta}_{/i} ) \right\|_2^2
\right)
\left\| \bar{X}_{/i} \boldsymbol{J}_{/i}^{-1}(\hat{\theta}_{/i} ) \boldsymbol{x}_{n+1} \right\|_4^2\notag\\
&+|\dot{\ell}_i(\hat{\theta} )|\cdot\left(
\left\| \int_0^1 \gamma_{ {-(1-t)\Delta_{/i}^*} \, /i }(\hat{\theta}_{/i} ) \, dt \right\|_2
+
\left( \frac{\bar{\omega}_{\max,i}}{\nu} \right)
\left\| \int_0^1 \gamma_{ {-(1-t)\Delta_{/i}^*} \, /i }(\hat{\theta}_{/i} ) \, dt\right\|_2^2
\right)
\left\| \bar{X}_{/i} \boldsymbol{J}_{/i}^{-1}(\hat{\theta}_{/i} ) \boldsymbol{x}_i \right\|_4^2
\nonumber\\
&+|\dot{\ell}_i(\hat{\theta} )|\cdot\left(
\left\| \int_0^1 \gamma_{ {-(1-t)\Delta_{/i}^*} \, /i }(\hat{\theta}_{/i} ) \, dt \right\|_2
+
\left( \frac{\bar{\omega}_{\max,i}}{\nu} \right)
\left\| \int_0^1 \gamma_{ {-(1-t)\Delta_{/i}^*} \, /i }(\hat{\theta}_{/i} ) \, dt \right\|_2^2
\right)
\left\| \bar{X}_{/i} \boldsymbol{J}_{/i}^{-1}(\hat{\theta}_{/i} ) \boldsymbol{x}_{n+1} \right\|_4^2\notag\\
\leq&4\|\boldsymbol{x}_{i}\|_2 
\left( \frac{c_1^2(n) c_2(n)}{\nu} \right) 
\left( 1 + \frac{2c_1(n) c_2(n)}{\nu^2} (1 + \omega_{\max,i}) \|\boldsymbol{x}_{i}\|_2 \right) 
\left\| \bar{X}_{/i} \boldsymbol{J}_{/i}^{-1}(\hat{\theta}_{/i} ) \boldsymbol{x}_{n+1} \right\|_4^2\notag\\
      &+4\|\boldsymbol{x}_{i}\|_2 \left( \frac{c_1^2(n) c_2(n)}{\nu} \right) \left( 1 + \frac{2c_1(n) c_2(n)}{\nu^2} (1 + \omega_{\max,i}) \|\boldsymbol{x}_{i}\|_2 \right) \left\| \bar{X}_{/i} \boldsymbol{J}_{/i}^{-1}(\hat{\theta}_{/i} ) \boldsymbol{x}_{i} \right\|_4^2\notag\\
\leq&4\|\boldsymbol{x}_{i}\|_2 
\left( \frac{c_1^2(n) c_2(n)}{\nu} \right) 
\left( 1 + \frac{2c_1(n) c_2(n)}{\nu^2} (1 + \omega_{\max,i}) \|\boldsymbol{x}_{i}\|_2 \right) 
\sqrt{ \bigl\| X_{/i} \boldsymbol{J}_{/i}^{-1}(\hat{\theta}_{/i} ) \boldsymbol{x}_{n+1} \bigr\|_4^4 
      + \bigl\| \boldsymbol{J}_{/i}^{-1}(\hat{\theta}_{/i} ) \boldsymbol{x}_{n+1} \bigr\|_4^4 }\notag\\
      &+4\|\boldsymbol{x}_{i}\|_2 \left( \frac{c_1^2(n) c_2(n)}{\nu} \right) \left( 1 + \frac{2c_1(n) c_2(n)}{\nu^2} (1 + \omega_{\max,i}) \|\boldsymbol{x}_{i}\|_2 \right) \sqrt{ \bigl\| X_{/i} \boldsymbol{J}_{/i}^{-1}(\hat{\theta}_{/i} ) \boldsymbol{x}_{i} \bigr\|_4^4 
      + \bigl\| \boldsymbol{J}_{/i}^{-1}(\hat{\theta}_{/i} ) \boldsymbol{x}_{i} \bigr\|_4^4 }\notag\\
      \leq& \bar{C}_{i}\left[\bigl\| X_{/i} \boldsymbol{J}_{/i}^{-1}(\hat{\theta}_{/i} ) \boldsymbol{x}_{n+1} \bigr\|_4^2 
      + \bigl\| \boldsymbol{J}_{/i}^{-1}(\hat{\theta}_{/i} ) \boldsymbol{x}_{n+1} \bigr\|_4^2 \right]+\bar{C}_{i}\left[\bigl\| X_{/i} \boldsymbol{J}_{/i}^{-1}(\hat{\theta}_{/i} ) \boldsymbol{x}_{i} \bigr\|_4^2 
      + \bigl\| \boldsymbol{J}_{/i}^{-1}(\hat{\theta}_{/i} ) \boldsymbol{x}_{i} \bigr\|_4^2 \right],
\end{align}
where $\omega_{\max,i}:=\sigma_{\max}(X_{/i}X_{/i}^\top)$, $\omega_{\max}:=\sigma_{\max}(XX^\top)$ and $\bar{\omega}_{\max,i}
:= \sigma_{\max}\!\left(\bar{X}_{/i}\bar{X}_{/i}^\top\right)$. By this definition, we can derive that
\begin{align}
\bar{\omega}_{\max,i}
&= \sigma_{\max}\!\left(\bar{X}_{/i}\bar{X}_{/i}^\top\right)
= \sigma_{\max}\!\left(\bar{X}_{/i}^\top\bar{X}_{/i}\right)
= \sigma_{\max}\!\left(
\begin{bmatrix}
X_{/i}\\
I
\end{bmatrix}^\top
\begin{bmatrix}
X_{/i}\\
I
\end{bmatrix}
\right) \notag\\[4pt]
&= \sigma_{\max}\!\left(I + X_{/i}^\top X_{/i}\right)
\le 1 + \sigma_{\max}\!\left(X_{/i}^\top X_{/i}\right)
= 1 + \omega_{\max,i},
\end{align}

For $|A_2|$ and $|A_3|$, we can use a similar method to show that 
\begin{align}\label{newa2_result}
    |A_2|\leq\frac{1}{2}\bar{C}_{i}\left[\bigl\| X_{/i} \boldsymbol{J}_{/i}^{-1}(\hat{\theta}_{/i} ) \boldsymbol{x}_{n+1} \bigr\|_4^2 
      + \bigl\| \boldsymbol{J}_{/i}^{-1}(\hat{\theta}_{/i} ) \boldsymbol{x}_{n+1} \bigr\|_4^2 \right]
\end{align}
and
\begin{align}\label{newa3_result}
    |A_3|\leq\frac{1}{2}\bar{C}_{i}\left[\bigl\| X_{/i} \boldsymbol{J}_{/i}^{-1}(\hat{\theta}_{/i} ) \boldsymbol{x}_{i} \bigr\|_4^2 
      + \bigl\| \boldsymbol{J}_{/i}^{-1}(\hat{\theta}_{/i} ) \boldsymbol{x}_{i} \bigr\|_4^2 \right].
\end{align}
Combining the results in (\ref{newa1_result}), (\ref{newa2_result}) and (\ref{newa3_result}), we obtain
\begin{align}
    |\boldsymbol{x}_{n+1}^\top\tilde{\theta}_{/i} -\boldsymbol{x}_{n+1}^\top\hat{\theta}_{/i} |\leq&\frac{3}{2}\bar{C}_{i}\left[\bigl\| X_{/i} \boldsymbol{J}_{/i}^{-1}(\hat{\theta}_{/i} ) \boldsymbol{x}_{i} \bigr\|_4^2 
      + \bigl\| \boldsymbol{J}_{/i}^{-1}(\hat{\theta}_{/i} ) \boldsymbol{x}_{i} \bigr\|_4^2 \right]\nonumber
      \\
      +&\frac{3}{2}\bar{C}_{i}\left[\bigl\| X_{/i} \boldsymbol{J}_{/i}^{-1}(\hat{\theta}_{/i} ) \boldsymbol{x}_{n+1} \bigr\|_4^2 
      + \bigl\| \boldsymbol{J}_{/i}^{-1}(\hat{\theta}_{/i} ) \boldsymbol{x}_{n+1} \bigr\|_4^2 \right]\nonumber\\
      \leq&3\bar{C}_{i}\cdot\max_{j\in\{i,n+1\}}\left[\bigl\| X_{/i} \boldsymbol{J}_{/i}^{-1}(\hat{\theta}_{/i} ) \boldsymbol{x}_{j} \bigr\|_4^2 
      + \bigl\| \boldsymbol{J}_{/i}^{-1}(\hat{\theta}_{/i} ) \boldsymbol{x}_{j} \bigr\|_4^2 \right].
\end{align}
Next, 
we define the event
\[
G:=\left\{\max_{1\leq i\leq n}|\boldsymbol{x}_{n+1}^\top\hat{\theta}_{/i} -\boldsymbol{x}_{n+1}^\top\tilde{\theta}_{/i} |\geq C\frac{\log(p)}{\sqrt{p}}\right\}.
\]
\par
Adopting the reasoning on pages 54–55 of \citep{RadMaleki2020}, where $S$ denotes the event that Assumption~3 in our paper holds (trivially we have $\mathbb{P}(S^c)\leq q_n+\tilde{q}_n$), and $q_n$ and $\tilde{q}_n$ follow the definitions in Assumption~3, we have
\begin{align}
    \mathbb{P}[G]&\leq\mathbb{P}[G|S]+\mathbb{P}[S^c]\nonumber
    \\
    &\leq\mathbb{P}[\max_{1\leq i\leq n}(\max_{j\in\{i,n+1\}}|\mathbf{x}_{j}^\top\hat{\theta}_{/i} -\mathbf{x}_{j}^\top\tilde{\theta}_{/i} |>C\frac{\log(p)}{\sqrt{p}}|S]+q_n+\tilde{q}_n\nonumber
    \\
    &\leq\frac{\mathbb{P}[\max_{1\leq i\leq n}|\boldsymbol{x}_{i}^\top\hat{\theta}_{/i} -\boldsymbol{x}_{i}^\top\tilde{\theta}_{/i} |>C\frac{\log(p)}{\sqrt{p}}]+\mathbb{P}[\max_{1\leq i\leq n}|\boldsymbol{x}_{n+1}^\top\hat{\theta}_{/i} -\boldsymbol{x}_{n+1}^\top\tilde{\theta}_{/i} |>C\frac{\log(p)}{\sqrt{p}}]}{1-q_n-\tilde{q}_n}+q_n+\tilde{q}_n\nonumber
    \\
    &\leq\frac{\mathbb{P}[3\bar{C}_{i}\cdot \left[ \bigl\| X_{/i} \boldsymbol{J}_{/i}^{-1}(\hat{\theta}_{/i} ) \boldsymbol{x}_{i} \bigr\|_4^2
      + \bigl\| \boldsymbol{J}_{/i}^{-1}(\hat{\theta}_{/i} ) \boldsymbol{x}_{i} \bigr\|_4^2\right]>C\frac{\log(p)}{\sqrt{p}}]}{1-q_n-\tilde{q}_n}\nonumber
      \\
      &\quad+\frac{\mathbb{P}[3\bar{C}_{i}\cdot \left[ \bigl\| X_{/i} \boldsymbol{J}_{/i}^{-1}(\hat{\theta}_{/i} ) \boldsymbol{x}_{n+1} \bigr\|_4^2
      + \bigl\| \boldsymbol{J}_{/i}^{-1}(\hat{\theta}_{/i} ) \boldsymbol{x}_{n+1} \bigr\|_4^2\right]>C\frac{\log(p)}{\sqrt{p}}]}{1-q_n-\tilde{q}_n}+q_n+\tilde{q}_n.
\end{align}
For $\mathbb{P}[3\bar{C}_{i}\cdot \left[ \bigl\| X_{/i} \boldsymbol{J}_{/i}^{-1}(\hat{\theta}_{/i} ) \boldsymbol{x}_{i} \bigr\|_4^2
      + \bigl\| \boldsymbol{J}_{/i}^{-1}(\hat{\theta}_{/i} ) \boldsymbol{x}_{i} \bigr\|_4^2\right]>C\frac{\log(p)}{\sqrt{p}}]$ and
      \\$\mathbb{P}[3\bar{C}_{i}\cdot \left[ \bigl\| X_{/i} \boldsymbol{J}_{/i}^{-1}(\hat{\theta}_{/i} ) \boldsymbol{x}_{n+1} \bigr\|_4^2
      + \bigl\| \boldsymbol{J}_{/i}^{-1}(\hat{\theta}_{/i} ) \boldsymbol{x}_{n+1} \bigr\|_4^2\right]>C\frac{\log(p)}{\sqrt{p}}]$, we have 
\begin{align}
    &\mathbb{P}[3\bar{C}_{i}\cdot \left[ \bigl\| X_{/i} \boldsymbol{J}_{/i}^{-1}(\hat{\theta}_{/i} ) \boldsymbol{x}_{i} \bigr\|_4^2
      + \bigl\| \boldsymbol{J}_{/i}^{-1}(\hat{\theta}_{/i} ) \boldsymbol{x}_{i} \bigr\|_4^2\right]>C\frac{\log(p)}{\sqrt{p}}]\leq\sum_{i=1}^n\mathbb{P}[\tilde{E}_i];
    \\
    &\mathbb{P}[3\bar{C}_{i}\cdot \left[ \bigl\| X_{/i} \boldsymbol{J}_{/i}^{-1}(\hat{\theta}_{/i} ) \boldsymbol{x}_{n+1} \bigr\|_4^2
      + \bigl\| \boldsymbol{J}_{/i}^{-1}(\hat{\theta}_{/i} ) \boldsymbol{x}_{n+1} \bigr\|_4^2\right]>C\frac{\log(p)}{\sqrt{p}}]\leq\sum_{i=1}^n\mathbb{P}[\tilde{E}_{n+1,i}].
\end{align}
For simplicity, we define 
\begin{align}
C_i &:= 
6(1+c)\left( \frac{\rho_{\max}}{\nu^2} \right)
\left( 1 + \omega_{\max} \sqrt{\tfrac{n-1}{p} \, \tfrac{\log(n-1)}{\log p}} \right),\nonumber
\\
F_{n+1,i} &:=
\left\{3 \,\bar{C}_i \left( 
\left\| X_{/i} \boldsymbol{J}_{/i}^{-1} \boldsymbol{x}_{n+1} \right\|_4^2
+ \left\| \boldsymbol{J}_{/i}^{-1} \boldsymbol{x}_{n+1} \right\|_4^2
\right) > \bar{C}_i C_i \sqrt{p} \log p \,\right\},\nonumber
\\
K_i &:=
\left\{ \,\frac{C}{\sqrt{p}} \geq \bar{C}_i C_i \sqrt{p} \,\right\}, \nonumber
\\
W_{n+1} &:=
\left\{ \,\|\boldsymbol{x}_{n+1}\|_2^2 > 5p \rho_{\max} \,\right\}
\;\cup\;
\left\{ \,\omega_{\max} > (\sqrt{n} + 3\sqrt{p})^2 \rho_{\max} \,\right\}.\nonumber
\end{align}
\par
Hence, we now obtain an upper bound for $\mathbb{P}[\tilde{E}_{n+1,i}]$
\begin{align}
\mathbb{P}[\tilde{E}_{n+1,i}]
&\leq \mathbb{P}[\tilde{E}_{n+1,i} \mid K_i] + \mathbb{P}[K_i^c] \nonumber \\
&\leq \mathbb{P}[F_{n+1,i} \mid K_i] + \mathbb{P}[K_i^c]
   \;\leq\; \frac{\mathbb{P}(F_{n+1,i})}{\mathbb{P}(K_i)} + \mathbb{P}[K_i^c] \nonumber \\
&\leq \frac{\mathbb{P}\!\left[ \bigl\| X_{/i} \boldsymbol{J}_{/i}^{-1} \boldsymbol{x}_{n+1} \bigr\|_4^2 
    > 2(1+c)\left(\tfrac{\rho_{\max}}{\nu^2}\omega_{\max}\right) 
    \sqrt{n-1}\,\log(n-1) \right]}{\mathbb{P}(K_i)} \nonumber \\
&\quad + \frac{\mathbb{P}\!\left[ \bigl\| \boldsymbol{J}_{/i}^{-1} \boldsymbol{x}_{n+1} \bigr\|_4^2 
    > 2(1+c)\left(\tfrac{\rho_{\max}}{\nu^2}\right) \sqrt{p}\log p \right]}{\mathbb{P}(K_i)}
    + \mathbb{P}[K_i^c] \nonumber \\
&\leq^1 \left( \frac{2}{(n-1)^c} + \frac{2}{p^c} \right) \frac{1}{\mathbb{P}(K_i)} + \mathbb{P}[K_i^c],
\end{align}
where ${\leq}^1$ is due to (\ref{lemma_27_2}).
To bound $\mathbb{P}[K_i^c]$ we define
\[
C := 
96 \sqrt{5} \left( 
\frac{c_1^2(n)c_2(n)\,(p \rho_{\max})^{3/2}}{\nu^3}
\right)
\left( 1 + \left(\sqrt{\tfrac{n}{p}} + 3\right)^2 p \rho_{\max}
   \sqrt{\tfrac{n-1}{p}\,\tfrac{\log(n-1)}{\log p}} \right)
\]
\[
\times \left( 1 + \frac{2 c_1(n) c_2(n) \sqrt{5} 
   \left( 1 + \left(\sqrt{\tfrac{n}{p}} + 3\right)^2 p \rho_{\max}\right) 
   \sqrt{p \rho_{\max}}}{\nu^2} \right).
\]
obtained by setting $c=3$, and computing $p\bar{C}_i C_i$ after putting 
$\sqrt{5p\rho_{\max}}$ and $(\sqrt{n/p} + 3\sqrt{p})^2\rho_{\max}$ bounds 
in event $W_{n+1}$, into $\|\boldsymbol{x}_{n+1}\|_2$ and $\omega_{\max}$, respectively. Next,
\begin{align}
    \mathbb{P}[K_i^c] 
&= \mathbb{P}\!\left[ \frac{C}{p} < \bar{C}_i C_i \right] 
   \;\leq\; \mathbb{P}\!\left[ C < p \bar{C}_i C_i \,\big|\, W_{n+1}^c \right] + \mathbb{P}[W_{n+1}] \nonumber \\
&= \mathbb{P}[C < C] + \mathbb{P}[W_{n+1}] \;=\; \mathbb{P}[W_{n+1}] .
\end{align}
The term $\mathbb{P}[W_{n+1}]$ is exponentially small because $\boldsymbol{x}_{n+1} \sim N(0,\Sigma)$ 
with $\rho_{\max} = \sigma_{\max}(\Sigma)$, leading to
\[
\mathbb{P}[W_{n+1}] 
\;\leq\; \mathbb{P}\!\left[ \|\boldsymbol{x}_{n+1}\|_2^2 > 5p\rho_{\max} \right] 
+ \mathbb{P}\!\left[ \sigma_{\max}(XX^\top) > (\sqrt{n}+3\sqrt{p})^2 \rho_{\max} \right]
\;\leq\; 2e^{-p},
\]
due to result in Section~\ref{lemma11} and Section~\ref{lemma12}. 
In summary, since for $p \geq 1$ we have $\tfrac{1}{1-e^{-p}} < 2$, for $c=3$ we obtain
\[
\mathbb{P}[\tilde{E}_{n+1,i}] 
\;\leq\; \frac{4}{(n-1)^3} + \frac{4}{p^3} + 2e^{-p}.
\]
\par
Since $\boldsymbol{x}_{i}$ and $\boldsymbol{x}_{n+1}$ follow same distribution, that is $\boldsymbol{x}_i\sim N(0,\Sigma)$, the same argument implies that
\[
\mathbb{P}[\tilde{E}_{i}]\leq\frac{4}{(n-1)^3}+\frac{4}{p^3}+2e^{-p}.
\]
The detailed proof for the bound of $\mathbb{P}[\tilde{E}_{i}]$ can also be found in \citep{RadMaleki2020}.
\par
As a result, we have
\begin{align}
    \mathbb{P}[G]&\leq\frac{1}{1-q_n-\tilde{q}_n}\left(\frac{8n}{(n-1)^3}+\frac{8n}{p^3}+4ne^{-p}\right)+q_n+\tilde{q}_n\nonumber
    \\
    &\leq\left(\frac{16n}{(n-1)^3}+\frac{16n}{p^3}+8ne^{-p}\right)+q_n+\tilde{q}_n.
\end{align}
It is clear that $\left[\left(\frac{16n}{(n-1)^3}+\frac{16n}{p^3}+8ne^{-p}\right)+q_n+\tilde{q}_n\right]\to0$ when $n\to\infty$.
\par
Thus, we obtain a result similar to the result in Section~\ref{Theorem3maleki}. That is, with probability at least $1-\left(\frac{16n}{(n-1)^3}+\frac{16n}{p^3}+8ne^{-p}\right)-q_n-\tilde{q}_n$, the following bound is valid
\[
\max_{1\leq i\leq n}|\boldsymbol{x}_{n+1}^\top\tilde{\theta}_{/i} -\boldsymbol{x}_{n+1}^\top\hat{\theta}_{/i} |\leq\frac{C_0}{\sqrt{p}},
\]
where $C_0$ is defined as follows
\begin{align}
&C_0 := \; 
\left( \frac{216 c^{3/2}}{\nu^{3}} \right)
\left( 1 + \sqrt{\delta_{0}}(\sqrt{\delta_{0}}+3)^{2} 
       \frac{c \log n}{\log p} \right)\left( c_{1}^{2}(n)c_{2}(n) + c_{1}^{3}(n)c_{2}^{2}(n) \,
       \frac{5\big(c^{1/2} + c^{3/2}(\sqrt{\delta_{0}}+3)^{2}\big)}{\nu^{2}} 
\right).
\end{align}

Note that in the presentation of Theorem~\ref{Theorem.1}, we replaced respectively 
$96\sqrt{5}$ and $2\sqrt{3}$ with the upper-bounds $216$ and $5$, 
and we replaced 
$\sqrt{\frac{n-1}{p} \frac{\log(n-1)}{\log p}}$
with the upper bound 
$\sqrt{\frac{n}{p} \frac{\log n}{\log p}}$.
We also used $\delta_0$ to denote $n/p$.
\section{Proof of Theorem~\ref{Theorem.2}}\label{theorem2appendix}
Using result Section~\ref{Theorem3maleki}, with probability at least $1-p_2(n)$, the following holds
\begin{align}\label{prof_thm2_1}
    \max_{1\leq i\leq n}|\boldsymbol{x}_{i}^\top\tilde{\theta}_{/i} -\boldsymbol{x}_{i}^\top\hat{\theta}_{/i} |\leq\frac{Q_0}{\sqrt{p}}.
\end{align}
Then with probability at least $1-p_2(n)$, following holds
\begin{align}\label{prof_thm2_3}
    \max_{1\leq i\leq n}|\boldsymbol{x}_{i}^\top\tilde{\theta}_{/i} -\boldsymbol{x}_{i}^\top\hat{\theta}_{/i}|=o(1).
\end{align}
Plugging (\ref{prof_thm2_3}) into $\left|R_{i}^{LOO}-\tilde{R}_{i}^{LOO}\right|$, using Assumption~\ref{assumption5}, with probability at least $1-p_2(n)$ the following result holds
    \begin{align}\label{prof_thm2_4}
        \left|R_{i}^{LOO}-\tilde{R}_{i}^{LOO}\right|=&\left|(|y_i-f(\boldsymbol{x}_i^\top\hat{\theta}_{/i})|-|y_i-f(\boldsymbol{x}_i^\top\tilde{\theta}_{/i})|)\right|\notag
        \\
        \leq&\left|f(\boldsymbol{x}_i^\top\hat{\theta}_{/i})-f(\boldsymbol{x}_i^\top\tilde{\theta}_{/i})\right|\notag
        \\
        \leq& L\cdot\max_{1\leq i\leq n}|\boldsymbol{x}_i^\top\hat{\theta}_{/i}-\boldsymbol{x}_i^\top\tilde{\theta}_{/i}|\notag
        \\
        =&o(1).
    \end{align}
Using Theorem~\ref{Theorem.1} and Assumption~\ref{assumption5}, with probability at least $1-p_1(n)$ the following result holds
\begin{align}\label{prof_thm2_5}
    \left|f(\boldsymbol{x}_{n+1}^\top\hat{\theta}_{/i})-f(\boldsymbol{x}_{n+1}^\top\tilde{\theta}_{/i})\right|\leq L\cdot\left|\boldsymbol{x}_{n+1}^\top\tilde{\theta}_{/i} -\boldsymbol{x}_{n+1}^\top\hat{\theta}_{/i}\right|=o(1).
\end{align}
Then with probability at least $(1-p_1(n)-p_2(n))$, the following results hold for $i=1,2,...,n$:
\begin{align}\label{prof_thm2_6}
    \left|(f(\boldsymbol{x}_{n+1}^\top\tilde{\theta}_{/i})-\tilde{R}_i^{LOO})-(f(\boldsymbol{x}_{n+1}^\top\hat{\theta}_{/i})-R_i^{LOO})\right|=o(1);
\end{align}
\begin{align}\label{prof_thm2_7}
    \left|(f(\boldsymbol{x}_{n+1}^\top\tilde{\theta}_{/i})+\tilde{R}_i^{LOO})-(f(\boldsymbol{x}_{n+1}^\top\hat{\theta}_{/i})+R_i^{LOO})\right|=o(1).
\end{align}
With (\ref{prof_thm2_6}) and (\ref{prof_thm2_7}), using Lemma~\ref{lemma.2}, we can easily derive that
    \begin{align}\label{prof_thm2_8}
        \hat{q}_{n,\alpha}^-\{f(\boldsymbol{x}_{n+1}^\top\tilde{\theta}_{/i})-\tilde{R}_i^{LOO}\}=\hat{q}_{n,\alpha}^-\{f(\boldsymbol{x}_{n+1}^\top\hat{\theta}_{/i})-R_i^{LOO}\}+o(1);
    \end{align}
    \begin{align}\label{prof_thm2_9}
        \hat{q}_{n,\alpha}^+\{f(\boldsymbol{x}_{n+1}^\top\tilde{\theta}_{/i})+\tilde{R}_i^{LOO}\}=\hat{q}_{n,\alpha}^+\{f(\boldsymbol{x}_{n+1}^\top\hat{\theta}_{/i})+R_i^{LOO}\}+o(1).
    \end{align}
    That is, with probability at least $(1-p_1(n)-p_2(n))$, the following results hold
    \begin{align}
        \tilde{U}_{jk+}=U_{jk+}+o(1),\;\tilde{L}_{jk+}=L_{jk+}+o(1).
    \end{align}
    We can utilize similar method to conclude that, with probability at least $(1-p_1(n)-p_2(n))$, the following results hold
\begin{align}
\tilde{U}_{jkm}=U_{jkm}+o(1),\;\tilde{L}_{jkm}=L_{jkm}+o(1).
\end{align}
Theorem~\ref{Theorem.2} has been proved.
\section{Proof of Theorem~\ref{Theorem.3} \& \ref{Theorem.4}}\label{theorem3appendix}
Recall that in Assumption~\ref{assumption5} we defined that
\[
p_1(n)=\left(\frac{16n}{(n-1)^3}+\frac{16n}{p^3}+8ne^{-p}\right)+q_n+\tilde{q}_n
\]
and
\[
p_2(n)=\left(\frac{8n}{(n-1)^3}+\frac{8n}{p^3}+4ne^{-p}\right)+q_n+\tilde{q}_n.
\]
Under Assumption~\ref{assumption1} to Assumption~\ref{assumption5}, and the condition that $n,p\to\infty$, $n/p=\delta_0\in(0,\infty)$, it is clear that
\[
p_1(n)\xrightarrow{\;\text{$n\to\infty$}\;} 0,\quad  p_2(n)\xrightarrow{\;\text{$n\to\infty$}\;} 0.
\]
    Using (\ref{prof_thm2_8}), (\ref{prof_thm2_9}) and Assumption~\ref{assumption5}, we obtain that, with probability at least $(1-p_1(n)-p_2(n))$, the following holds
    \begin{align}
        \mathbb{P}\{y_{n+1}\in\tilde{C}_{n,\alpha}^{Jackknife+}\}=\mathbb{P}\{y_{n+1}\in\hat{C}_{n,\alpha}^{Jackknife+}\}+L\cdot o(1)=\mathbb{P}\{y_{n+1}\in\hat{C}_{n,\alpha}^{Jackknife+}\}+o(1).
    \end{align}
    According to \citep{jackknife+}, we have
    \begin{align}\label{orginal_conclusion_jk+_1}
    \hat{C}_{n,\alpha}^{Jackknife+}=[\hat{q}_{n,\alpha}^{-}\{f(\boldsymbol{x}_{n+1}^\top\hat{\theta}_{/i})-R_i^{LOO}\},\hat{q}_{n,\alpha}^{+}\{f(\boldsymbol{x}_{n+1}^\top\hat{\theta}_{/i})+R_i^{LOO}\}]
    \end{align}
    and
    \begin{align}\label{orginal_conclusion_jk+_2}
        \mathbb{P}\{y_{n+1}\in\hat{C}_{n,\alpha}^{Jackknife+}\}\geq 1-2\alpha.
    \end{align}
    \par
    As s result, using (\ref{orginal_conclusion_jk+_2}), we have
        \begin{align}
            \mathbb{P}\{y_{n+1}\in\tilde{C}_{n,\alpha}^{Jackknife+}\}\geq (1-2\alpha-o(1))\cdot(1-p_1(n)-p_2(n))=1-2\alpha-o(1).
        \end{align}
    \par
    Theorem~\ref{Theorem.3} has been proved.
    \par
    Proof of Theorem~\ref{Theorem.4} is similar to Theorem~\ref{Theorem.3}.
\section{Proof of lemmas}\label{prooflemmas}
\subsection{Proof of Lemma~\ref{lemma.2}}
Lemma~\ref{lemma.2} follows from the following more precise version:
\begin{lemma}\label{lem2}
    Let \(\{a_i\}_{i=1}^n\) be a sorted sequence such that \(a_1 \le a_2 \le \dots \le a_n\), and suppose there exist sequences \(\{b_i\}_{i=1}^n\) satisfying \(|b_i - a_i| \le  \varepsilon\), for some $\varepsilon>0$ and all \(i = 1, \dots, n\). Then for any \(q \in \{1,2,...,n\}\) we have
\[
|b_{(q)}-a_q|\le \epsilon,
\]
where $b_{(q)}$ is the $q$-th smallest value of $\{b_i\}_{i=1}^n$.
\end{lemma}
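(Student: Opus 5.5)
The plan is to reduce the claim to two one-sided counting inequalities. Both rest on the hypothesis $|b_i-a_i|\le\varepsilon$ and on the fact that the $q$-th order statistic is determined by how many entries lie below or above a threshold. Write $b_{(1)}\le\cdots\le b_{(n)}$ for the sorted values of $\{b_i\}_{i=1}^n$. We show separately that $b_{(q)}\le a_q+\varepsilon$ and $b_{(q)}\ge a_q-\varepsilon$; together these give $|b_{(q)}-a_q|\le\varepsilon$.

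\emph{Upper bound.} Since the $a_i$ are sorted, the indices $i=1,\dots,q$ satisfy $a_i\le a_q$. Hence for these $q$ indices
\[
b_i\le a_i+\varepsilon\le a_q+\varepsilon .
\]
So at least $q$ of the values $b_1,\dots,b_n$ are at most $a_q+\varepsilon$. By the definition of the $q$-th smallest value, this forces $b_{(q)}\le a_q+\varepsilon$. Formally, if $b_{(q)}>a_q+\varepsilon$, then at most $q-1$ of the $b_i$ (those among $b_{(1)},\dots,b_{(q-1)}$) could be $\le a_q+\varepsilon$, a contradiction.

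\emph{Lower bound.} Symmetrically, the indices $i=q,q+1,\dots,n$ satisfy $a_i\ge a_q$, so for these $n-q+1$ indices
\[
b_i\ge a_i-\varepsilon\ge a_q-\varepsilon .
\]
Thus at least $n-q+1$ of the $b_i$ are $\ge a_q-\varepsilon$. Now suppose $b_{(q)}<a_q-\varepsilon$. Then $b_{(1)},\dots,b_{(q)}$ are $q$ values strictly below $a_q-\varepsilon$, which leaves at most $n-q$ values that are $\ge a_q-\varepsilon$, a contradiction. Hence $b_{(q)}\ge a_q-\varepsilon$.

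The only step requiring care is handling ties in the order statistics. Working with counts of indices, rather than with a particular permutation that sorts the $b_i$, avoids this issue. There is no real obstacle beyond that; the argument is purely combinatorial and uses neither probability nor any earlier result of the paper.

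For the application in Theorem~\ref{Theorem.2}, the lemma should be read with the indices relabeled. Take $a_{(q)}$ to be the $q$-th smallest of $\{a_i\}$, where $a_i$ is the exact Jackknife+ endpoint quantity for index $i$ and $b_i$ is the corresponding ALO quantity (same index $i$). Sorting the $a_i$ by a permutation $\pi$ and applying the lemma to the pairs $(a_{\pi(i)},b_{\pi(i)})$ gives $|b_{(q)}-a_{(q)}|\le\varepsilon$. This relabeling does not change the multiset $\{b_i\}$, so the conclusion is unaffected.
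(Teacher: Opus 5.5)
Your proof is correct, but it takes a somewhat different route from the paper's. The paper argues extremally. Because $b_{(q)}$ is nondecreasing in each coordinate $b_i$, its maximum over the box $\{|b_i-a_i|\le\varepsilon\}$ is attained at $b_i=a_i+\varepsilon$ for all $i$. At that point the sortedness of $\{a_i\}$ makes $b_{(q)}=a_q+\varepsilon$ exactly, and the lower bound follows symmetrically.

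You instead count indices. The first $q$ indices give $q$ values of $b$ that are at most $a_q+\varepsilon$, and the last $n-q+1$ give values that are at least $a_q-\varepsilon$; together these pin down the $q$-th order statistic.

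The paper's version is shorter, but it relies on the unstated fact that order statistics are coordinatewise monotone and attain their extremes at the corners of the box. Your counting argument is essentially how one would prove that fact. It is therefore more self-contained and handles ties explicitly.

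Your closing remark about relabeling is also worthwhile. In the proof of Theorem~\ref{Theorem.2}, the exact endpoint quantities $f(\boldsymbol{x}_{n+1}^\top\hat{\theta}_{/i})\pm R_i^{LOO}$ are not sorted in $i$. The lemma can only be invoked after permuting the pairs so that the exact quantities are sorted, which leaves the multiset of approximate quantities unchanged. The paper leaves this step implicit.
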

\begin{proof}
Under the constraint of $b_i-a_i\le \epsilon$,
the maximum of $b_{(q)}$ is achieved when $b_i=a_i+\epsilon$ for each $i$, in which case $b_{(q)}=a_q+\epsilon$. 
Hence $b_{(q)}\le a_q+\epsilon$.
Similarly, $b_{(q)}\ge a_q-\epsilon$, and the proof is completed.
\end{proof}

\section{Proof of Theorem~\ref{thm6}}\label{theorem6appendix}
The proof of Theorem~\ref{thm6} relies on some auxiliary results (for simplicity, we define $\hat{\mu}_{/i}(X)=X^\top\hat{\beta}_{/i}$ and $\hat{\mu}(X)=X^\top\hat{\beta}$):
\begin{lemma}\label{lem3}
Under assumptions C1-C5, we have the following (in the sense of weak convergence)
\begin{align}
\max_{1\le i\le n}|\hat{\mu}_{/i}(X_{n+1})-\hat{\mu}(X_{n+1})|=o(1).
\label{e_37}
\end{align} 
\end{lemma}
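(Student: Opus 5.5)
We need to show $\max_i |X_{n+1}^\top(\hat\beta_{/i}-\hat\beta)| \to 0$ in probability under C1--C5. The approach is to exploit that $X_{n+1}$ is independent of the training data, and hence of every difference vector $\Delta_i:=\hat\beta_{/i}-\hat\beta$. Conditionally on $\mathcal{D}_n$, each $X_{n+1}^\top\Delta_i$ is then a centered linear form in $X_{n+1}$. Under C3 it is either exactly $\mathcal{N}(0,\|\Delta_i\|_2^2)$, or sub-Gaussian with variance proxy $O(\|\Delta_i\|_2^2)$ because the entries are bounded, symmetric and unit-variance. A union bound over the $n$ indices then gives
\[
\max_{1\le i\le n}|X_{n+1}^\top\Delta_i| \;=\; O_{\mathbb{P}}\Bigl(\sqrt{\log n}\,\max_{1\le i\le n}\|\Delta_i\|_2\Bigr),
\]
so it is enough to prove $\max_i\|\Delta_i\|_2 = o_{\mathbb{P}}(1/\sqrt{\log n})$.

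To bound $\|\Delta_i\|_2$, I would argue as in Section~\ref{eq136}, now in the scaling of \eqref{e_ka}.

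1. Write the first-order conditions for $\hat\beta$ and $\hat\beta_{/i}$ and subtract them.
2. The objective of $\hat\beta_{/i}$ is $\tau$-strongly convex, since $\rho$ is convex and the ridge term is present. This gives
\[
\|\Delta_i\|_2 \le \frac{1}{n\tau}\,|\rho'(\epsilon_i - X_i^\top(\hat\beta-\beta_0))|\,\|X_i\|_2 \le \frac{\|\rho'\|_\infty\,\|X_i\|_2}{n\tau}.
\]
3. By C2, $\|\rho'\|_\infty<\infty$. With $\|X_i\|_2=\Theta(\sqrt{n})$ and a standard concentration bound on $\max_i\|X_i\|_2$, which is $O_{\mathbb{P}}(\sqrt{p}+\sqrt{\log n})$ under C3, this yields $\max_i\|\Delta_i\|_2=O_{\mathbb{P}}(n^{-1/2})$.

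Putting the steps together, $\max_i|\hat\mu_{/i}(X_{n+1})-\hat\mu(X_{n+1})| = O_{\mathbb{P}}(\sqrt{\log n/n}) = o_{\mathbb{P}}(1)$, which is \eqref{e_37}.

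The main point that needs care is the conditioning step: the $\Delta_i$ depend on all the training data, but they are measurable with respect to $\mathcal{D}_n$, so the sub-Gaussian tail bound for $X_{n+1}^\top\Delta_i$ holds conditionally and the union bound is legitimate. This is exactly why the test-point quantity is easier than the in-sample quantity $X_i^\top\Delta_i$ treated in \citep{RadMaleki2020}.

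A minor check is the strong convexity constant. If the ridge term were absent ($\tau=0$), one would instead need a lower bound on the smallest eigenvalue of $\frac1n\sum_{j\ne i}\rho''(\cdot)X_jX_j^\top$, as in \citep{ElKaroui2018}. With $\tau>0$ as in \eqref{e_ka}, the direct bound above suffices.
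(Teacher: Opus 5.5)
Your proof is correct, and it takes a more elementary route than the paper.

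\textbf{The paper's route.} It invokes the leave-one-out expansion of \citep[Theorem~2.2]{ElKaroui2018}, $\hat\beta-\hat\beta_{/i}=\eta_i+O_{L_2}(\polylog(n)/n)$, with the explicit first-order term $\eta_i=\frac1n(S_i+\tau I)^{-1}X_i\,\psi(\prox_{c_i\rho}(r_{i,(i)}))$. It controls $X_{n+1}^\top\eta_i$ using the independence of $X_{n+1}$ from $(S_i,X_i)$. It controls the remainder by Cauchy--Schwarz, $\|X_{n+1}\|_2\cdot\polylog(n)/n=\polylog(n)/\sqrt{n}$.

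\textbf{Your route.} You skip the expansion entirely. The strong-convexity bound $\|\Delta_i\|_2\le\|\rho'\|_\infty\|X_i\|_2/(n\tau)=O_{\mathbb{P}}(n^{-1/2})$ would only give $O(1)$ through Cauchy--Schwarz. It is nevertheless enough once you condition on $\mathcal{D}_n$ and use that $X_{n+1}$ is independent of every $\Delta_i$. Your argument needs only C1--C3 and $\tau>0$, not C4--C5 or the El Karoui machinery.

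\textbf{The maximum over $i$.} You handle it rigorously through conditional sub-Gaussian tails and a union bound. The paper only states per-$i$ second-moment bounds, $X_{n+1}^\top\eta_i=O_{L_2}(n^{-1/2})$, which alone do not control a maximum over $n$ indices. The natural fix there is exactly your conditional-Gaussian union bound, since $\|\eta_i\|_2$ obeys the same bound $\|\rho'\|_\infty\|X_i\|_2/(n\tau)$ as your $\Delta_i$.

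\textbf{What the paper's route buys.} It produces the explicit correction $\eta_i$, equivalently $c_i\psi(\prox_{c_i\rho}(r_{i,(i)}))$. This is reused for \eqref{e54} and Lemma~\ref{l64}, where the evaluation point enters the fit itself, so no independence argument of your type is available. For the out-of-sample statement here, your cruder stability bound suffices.
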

\begin{proof}
Recall that \citep[Theorem~2.2]{ElKaroui2018} showed that 
\begin{align}
\max_{1\le i\le n}\|\hat{\beta}-\hat{\beta}_{/i}-\eta_i\|_2
=O_{L_2}\left(\frac{\polylog(n)}{n}\right),
\label{e38}
\end{align}
where the notations are explained as follows:
$O_{L_2}(\cdot)$ denotes a bound on a random variable in the square root of the second moment, 
and $\hat{\beta}_{/i}+\eta_i$
can be thought of as an approximate formula for $\hat{\beta}$ using the leave-one-out estimator $\hat{\beta}_{/i}$.
The correction term
\begin{align}
\eta_i := \frac{1}{n} (S_i + \tau I)^{-1} X_i 
\rho' \!\left( \prox_{c_i \rho_i}(r_{i,(i)}) \right),
\label{e37}
\end{align}
where we defined 
\begin{align}
c_i &:= \frac{1}{n} X_i^{\top} (S_i + \tau I)^{-1} X_i; 
\\
S_i& := \frac{1}{n} \sum_{j \neq i} \rho''(r_{j,(i)}) X_j X_j^{\top},
\end{align}
where 
\begin{align}
r_{j,(i)}:=y_j-X_j^{\top}\hat{\beta}_{/j}
\end{align}
denotes the leave-one-out residual (in particular,  $r_{i,(i)}=R^{LOO}_i$).

Noting that $S_i$, $X_i$, and $X_{n+1}$ are independent,
and $X_i$ and $X_{n+1}$ are i.i.d.\ with zero mean and identity covariance, 
we see that 
$\frac{1}{n}X_{n+1}^{\top} (S_i + \tau I)^{-1} X_i$ has zero mean and $O(\frac1{n})$ second moment.
Under the assumption of $\|\rho'\|_{\infty}<\infty$,
this in turn implies, by \eqref{e37}, that $X_{n+1}^{\top}\eta_i=o_{L_2}(1)$ (vanishing in the sense of second moment).
Then using the fact that $\|X_{n+1}\|_2=O_{L_2}(\sqrt{n})$ and 
\eqref{e38} we obtain $\max_{1\le i\le n}\|X_{n+1}^{\top}(\hat{\beta}-\hat{\beta}_{/i})\|_2
=o_{L_2}(1)$,
which is equivalent to \eqref{e_37}.
\end{proof}

Next, recall that $\hat{\mu}^y(X_i):=X_i^{\top}\hat{\beta}^y$, $i=1,\dots,n+1$, where $\hat{\beta}^y$ denotes the output of the regression algorithm trained on the augmented dataset $\{(X_1,y_1),\dots,(X_n,y_n),(X_{n+1},y)\}$.
Also, adopt the notations
\begin{align}
R_i&:=y_i-X_i^{\top}\hat{\beta},
\quad i=1,\dots, n,
\\
\psi(t)&:=\rho'(t), \quad t\in\mathbb{R}.
\end{align}
We then have:
\begin{lemma}
For any given $M>0$, we have the following (in the sense of convergence in probability):
\begin{align}
    \max_{y\in [-M,M]\cap\{0,\pm\frac1{M},\pm\frac{2}{M},\dots\}}\max_{1\le i\le n}|\hat{\mu}^y(X_i)-\hat{\mu}(X_i)|
    &=o(1);
    \label{e43}
    \\
    \max_{y\in [-M,M]\cap\{0,\pm\frac1{M},\pm\frac{2}{M},\dots\}}\max_{1\le i\le n}
    \left|\hat{\mu}_{/i}(X_i)+c_i\psi(\prox_{c_i \rho_i}(r_{i,(i)}))-\hat{\mu}(X_i)\right|
    &=o(1).
    \label{e54}
\end{align}
\end{lemma}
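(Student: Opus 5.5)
The plan is to treat both claims by viewing the augmented problem (with the extra point $(X_{n+1},y)$) as the "full" problem, and the original $\hat\beta$ as its leave-$(n+1)$-out estimator. This brings everything back to the El Karoui leave-one-out expansion \eqref{e38} together with Lemma~\ref{lem3}. Throughout, $y$ ranges over a finite grid of cardinality $O(M^2)$, and $M$ is fixed while $n\to\infty$. So it suffices to prove each bound for a single fixed $y$, provided the bound holds in probability. A union bound over the grid then finishes the argument.

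For \eqref{e43}, fix $y$ and apply \citep[Theorem~2.2]{ElKaroui2018} to the augmented sample of size $n+1$. Deleting observation $n+1$ gives
\[
\hat\beta^y-\hat\beta=\eta_{n+1}^y+O_{L_2}\!\big(\polylog(n)/n\big),
\]
where $\eta^y_{n+1}=\frac1n(S_{n+1}+\tau I)^{-1}X_{n+1}\psi(\prox_{c_{n+1}\rho}(y-X_{n+1}^\top\hat\beta))$. Here $S_{n+1}$ is built from the original $n$ points, so its leave-one-out residuals are those of the original data. Since $\|\psi\|_\infty<\infty$, this correction is bounded uniformly in $y$. I then bound $\max_{i\le n}|X_i^\top\eta^y_{n+1}|$. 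As in the proof of Lemma~\ref{lem3}, $X_i^\top(S_{n+1}+\tau I)^{-1}X_{n+1}/n$ is not independent of $S_{n+1}$ when $i\le n$. I will therefore replace $S_{n+1}$ by $S_{n+1}$ with row $i$ removed, via a rank-one (Sherman--Morrison) correction. After that replacement, $X_{n+1}$ is independent of everything else, so the inner product is sub-Gaussian with variance $O(1/n)$. A maximum over $n$ such terms is then $O_P(\sqrt{\log n/n})=o_P(1)$. The remainder term gives $\max_i\|X_i\|_2\cdot\polylog(n)/n=O_P(\polylog(n)/\sqrt n)=o_P(1)$. The $O_{L_2}$ bound is in second moment, so Markov's inequality turns it into a probability statement.

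For \eqref{e54}, the quantity does not involve $y$ at all, so only the original data matter. I will use \eqref{e38} for the original sample and dot it with $X_i$:
\[
X_i^\top(\hat\beta-\hat\beta_{/i})=X_i^\top\eta_i+O_P(\polylog(n)/\sqrt n).
\]
By definition, $X_i^\top\eta_i=c_i\psi(\prox_{c_i\rho}(r_{i,(i)}))$, because $c_i=\frac1nX_i^\top(S_i+\tau I)^{-1}X_i$. Rearranging gives exactly the bracket in \eqref{e54}. The only care needed is that the $\polylog(n)/n$ bound in \eqref{e38} holds uniformly over $i$. It is stated with a $\max$ inside, so I only need $\max_i\|X_i\|_2=O_P(\sqrt n)$, which follows from Gaussian or bounded-entry concentration plus a union bound.

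The main obstacle is the first step: checking that \citep[Theorem~2.2]{ElKaroui2018} really applies to the augmented dataset with the deterministic response $y$ in place of a random $X_{n+1}^\top\beta_0+\epsilon_{n+1}$. The noise assumption C4 is then violated for that single observation. I would first argue that El Karoui's proof uses the noise distribution only to control residual magnitudes through $\|\rho'\|_\infty$. If that fails, the fallback is a direct argument: a one-step Newton/Woodbury update from $\hat\beta$ on the strongly convex ($\tau>0$) augmented objective. This gives $\|\hat\beta^y-\hat\beta\|_2\le\|\psi\|_\infty\|X_{n+1}\|_2/(n\tau)=O(n^{-1/2})$ without any distributional input. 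The projection onto each $X_i$ would then be controlled by the same decoupling argument described above.
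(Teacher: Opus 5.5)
Your proposal is correct and follows the paper's route. You view $\hat\beta$ as the leave-$(n+1)$-out fit of the augmented problem, invoke \eqref{e38} with the deterministic response $y$, control the leading term through the independence of $X_{n+1}$, union-bound over the finite grid, and obtain \eqref{e54} by dotting \eqref{e38} with $X_i$. The paper justifies the deterministic-$y$ step by noting that the only probabilistic step in El Karoui's argument, his Lemma~3.7, uses the independence and sub-Gaussianity of the covariates but never the law of the response.

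Two small remarks. First, the Sherman--Morrison decoupling is unnecessary: conditionally on the original sample, $X_{n+1}$ is already independent of $(S_{n+1},X_i)$. Second, your fallback could not replace \eqref{e38}. An $O(n^{-1/2})$ bound on $\|\hat\beta^y-\hat\beta\|_2$ only gives $|X_i^\top(\hat\beta^y-\hat\beta)|=O(1)$ by Cauchy--Schwarz, so the $\polylog(n)/n$ remainder control is indispensable.
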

\begin{proof}
The proof of \eqref{e43} is almost identical to Lemma~\ref{lem3} (by viewing $\hat{\mu}$ as the estimator leaving out the $(n+1)$-th observation, and then taking the union bound over all $y$ in the set $[-M,M]\cap\{0,\pm\frac1{M},\pm\frac{2}{M},\dots\}$), which relies on the error bound on the leave-one-out approximation formula \eqref{e38}.
It still remains, however, to check whether the proof depends on how $y_{n+1}=y$ is selected.
Below, we provide details of justification that \eqref{e_37} continues to hold if $y_i$ is a fixed value in $\mathbb{R}$ (rather than randomly generated from the stochastic model), which is equivalent to \eqref{e43} (by switching the role of $n+1$ and $i$).
This, in turn, relies on two properties:
\begin{description}
\item[a)] \eqref{e38} continues to hold when $y_i$ is an arbitrary deterministic quantity.
\item[b)] $X_{n+1}^{\top}\eta_i=o_{L_2}(1)$ continues to hold when $y_i$ is an arbitrary deterministic quantity.
\end{description}
It is easy to see that b) is true,
under our assumption of $\|\rho'\|_{\infty}<\infty$,
since it is still true that
$\frac{1}{n}X_{n+1}^{\top} (S_i + \tau I)^{-1} X_i$ has zero mean and $O(\frac1{n})$ second moment.
For a), we revisit the original proof \citep[p122-p125]{ElKaroui2018},
which is mostly based on several deterministic non-asymptotic bounds therein, except \citep[Lemma~3.7]{ElKaroui2018}, which relies on convergence in high probability.
However, \citep[Lemma~3.7]{ElKaroui2018} uses nothing but a high probability bound that $\sup_{j\neq i}\frac{\|(S_i+\tau I)^{-1}X_j\|}{\tau\sqrt{n}}=O(1)$,
the independence of $\frac{\|(S_i+\tau I)^{-1}X_j\|}{\tau\sqrt{n}}$ and $X_i$,
and sub-Gaussianity of $X_i$,
which in turn implies by union bound that $\sup_{j\neq i}\frac{X_j^{\top}(S_i+\tau I)^{-1}X_i}{n}=\frac1{\sqrt{n}}\polylog(n)$ with high probability.
Thus a) also holds, and we have verified \eqref{e43}.

Next, we observe that \eqref{e54} because of \eqref{e38} and \eqref{e37}, where again we used the fact that \eqref{e38} continues to hold for arbitrary deterministic $y_i$.
\end{proof}

\begin{lemma}\label{l64}
For any given $M>0$, we have the following (in the sense of convergence in probability):
\begin{align}
    \max_{y\in [-M,M]\cap\{0,\pm\frac1{M},\pm\frac{2}{M},\dots\}}
    |\hat{\mu}(X_{n+1})
    +c\psi(\prox_{c\rho}(y-\hat{\mu}(X_{n+1})))
    -\hat{\mu}^y(X_{n+1})|
    &=o(1),
\end{align}
where we defined 
\begin{align}
c &:= \frac{1}{n+1} X_{n+1}^{\top} (S + \tau I)^{-1} X_{n+1}; 
\\
S& := \frac{1}{n+1} \sum_{j =1}^n \rho''(y-\hat{\mu}(X_{n+1})) X_j X_j^{\top}.
\end{align}
\end{lemma}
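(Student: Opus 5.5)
The plan is to view $\hat{\beta}$ as the leave-$(n+1)$-out estimator for the augmented problem on $\{(X_j,y_j)\}_{j\le n}\cup\{(X_{n+1},y)\}$. We then apply the El Karoui leave-one-out expansion \eqref{e38}–\eqref{e37} to the index $n+1$, with $y_{n+1}=y$ treated as a deterministic value. The normalization $1/n$ in \eqref{e_ka} becomes $1/(n+1)$ for the augmented problem. Accordingly, the correction vector reads
\begin{align*}
\hat{\beta}^y-\hat{\beta}\approx \eta_{n+1}:=\frac{1}{n+1}(S_{n+1}+\tau I)^{-1}X_{n+1}\,\psi\bigl(\prox_{c_{n+1}\rho}(y-\hat{\mu}(X_{n+1}))\bigr),
\end{align*}
where $c_{n+1}=\frac{1}{n+1}X_{n+1}^\top(S_{n+1}+\tau I)^{-1}X_{n+1}$, and $S_{n+1}$ is built from the leave-one-out residuals of the first $n$ points. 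The residual $r_{n+1,(n+1)}$ is exactly $y-\hat{\mu}(X_{n+1})$. Multiplying by $X_{n+1}^\top$ gives
\begin{align*}
X_{n+1}^\top\eta_{n+1}=c_{n+1}\,\psi\bigl(\prox_{c_{n+1}\rho}(y-\hat{\mu}(X_{n+1}))\bigr).
\end{align*}
Hence, for a fixed $y$, we get $\hat{\mu}^y(X_{n+1})-\hat{\mu}(X_{n+1})-c_{n+1}\psi(\prox_{c_{n+1}\rho}(\cdot))=X_{n+1}^\top(\hat{\beta}^y-\hat{\beta}-\eta_{n+1})$.

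Step 1 bounds this remainder. By Cauchy–Schwarz, $|X_{n+1}^\top(\hat{\beta}^y-\hat{\beta}-\eta_{n+1})|\le\|X_{n+1}\|_2\,\|\hat{\beta}^y-\hat{\beta}-\eta_{n+1}\|_2=O(\sqrt{n})\cdot O(\polylog(n)/n)=o(1)$. Here I would reuse the observation from the previous lemma (property a): \eqref{e38} holds for a deterministic response at the left-out index. The only probabilistic input, \citep[Lemma~3.7]{ElKaroui2018}, uses only the sub-Gaussianity and independence of $X_{n+1}$, not of $y$.

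Step 2 replaces $c_{n+1}$, $S_{n+1}$ by the $c$, $S$ in the statement. The difference is a change of weights $\rho''$ inside $S$. I would control it through the resolvent identity, $\|(S+\tau I)^{-1}-(S_{n+1}+\tau I)^{-1}\|\le\tau^{-2}\|S-S_{n+1}\|$, together with $\|X_{n+1}\|^2/(n+1)=O(1)$ and the Lipschitz dependence of $\psi\circ\prox_{c\rho}$ on $c$, which holds because $\rho'$ and $\rho''$ are bounded. Both $c$ and $c_{n+1}$ are then bounded above and below, since $\tau>0$ and $\rho''\ge0$.

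Step 3 makes the bound uniform over $y$. The grid $[-M,M]\cap\{0,\pm\frac1M,\dots\}$ has only $2M^2+1$ points for fixed $M$. A union bound over this finite set converts the pointwise $o_{\mathbb P}(1)$ statements into a uniform one, so no continuity in $y$ is needed.

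The main obstacle is Step 2. As literally written, $S$ uses the weight $\rho''(y-\hat{\mu}(X_{n+1}))$ for every $j$, which differs from the leave-one-out weights $\rho''(r_{j,(n+1)})$ appearing in $S_{n+1}$. The difference is not small in operator norm. I would therefore first check whether the intended weights are $\rho''(r_j)$, which would be consistent with \eqref{e54}. If so, I would argue via \eqref{e43}, which makes the augmented-fit residuals uniformly close to the original ones, and then apply the resolvent bound above. If the stated weight is truly meant, I would instead rely on the concentration of $\frac{1}{n+1}X_{n+1}^\top(\cdot)^{-1}X_{n+1}$ around a normalized trace. The key point is that only the scalar $c$ matters, so a trace-level comparison suffices. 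That comparison is where I expect the real work to lie.
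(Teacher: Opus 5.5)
Your Steps 1 and 3 are the paper's proof. The paper disposes of this lemma in one sentence: it repeats the argument for \eqref{e54} with $n+1$ as the left-out index. Concretely, \eqref{e38} is applied to the augmented problem at index $n+1$ with the deterministic response $y_{n+1}=y$ (your property a)), contracted with $X_{n+1}$ so that $X_{n+1}^\top\eta_{n+1}=c_{n+1}\psi(\prox_{c_{n+1}\rho}(y-\hat\mu(X_{n+1})))$, combined with $\|X_{n+1}\|_2=O(\sqrt n)$, and finished with a union bound over the finite grid. The paper never addresses the mismatch you flag in Step 2; it silently identifies $S$ with $S_{n+1}$.

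You are right that the mismatch is real. However, your fallback for the literal weights, a trace-level comparison, cannot succeed, because with $S=\rho''(y-\hat\mu(X_{n+1}))\,\hat\Sigma$, where $\hat\Sigma:=\frac1{n+1}\sum_{j\le n}X_jX_j^\top$, the statement is false in general (e.g.\ for pseudo-Huber $\rho$). Put $r:=y-\hat\mu(X_{n+1})$. By your Step 1, $\hat\mu^y(X_{n+1})-\hat\mu(X_{n+1})=c_{n+1}\psi(\prox_{c_{n+1}\rho}(r))+o(1)$, and $c_{n+1}$ does not depend on $y$. The literal $c=\frac1{n+1}X_{n+1}^\top(\rho''(r)\hat\Sigma+\tau I)^{-1}X_{n+1}$ is strictly decreasing in $\rho''(r)$, so it moves by order one as $y$ ranges over the grid; note that $\rho''$ cannot be constant under C2. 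Now $c\psi(\prox_{c\rho}(r))=r-\prox_{c\rho}(r)$ has $c$-derivative $\psi(u)/(1+c\psi'(u))$ with $u=\prox_{c\rho}(r)$, and this is nonzero for $r\neq0$. Hence the two expressions differ by order one at grid points where $r$ stays away from $0$ and $c$ stays away from $c_{n+1}$.

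The intended matrix must therefore be $S=\frac1{n+1}\sum_{j\le n}\rho''(R_j)X_jX_j^\top$ with $R_j=y_j-X_j^\top\hat\beta$, and then Step 2 is vacuous. El Karoui's $r_{j,(n+1)}$ is the residual of point $j$ at the leave-$(n+1)$-out fit of the augmented problem, and that fit is $\hat\beta$ itself if you keep the factor $\frac1n$ in the augmented objective. The factors $\frac1{n+1}$ in $c$ then change it only by $O(1/n)$, which is harmless since $c\mapsto c\psi(\prox_{c\rho}(r))$ is $\|\rho'\|_\infty$-Lipschitz. So you need neither \eqref{e43} nor a resolvent bound, and that detour would in any case require $\rho''$ to be Lipschitz, which C2 does not provide.

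Finally, do not read $r_{j,(n+1)}$ as the leave-$j$-out residual $R_j^{LOO}$, even though the paper's formula $r_{j,(i)}:=y_j-X_j^\top\hat\beta_{/j}$ literally says so. By \eqref{e54}, $R_j^{LOO}-R_j$ is of order one, so weights $\rho''(R_j^{LOO})$ would not be close to $\rho''(R_j)$.
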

\begin{proof}
The proof is identical to the proof of \eqref{e54}, again noting that the approximation error bound (\eqref{e38}, but leaving out $n+1$ instead of $i$) continues to hold with fixed deterministic $y_{n+1}=y$.
\end{proof}

\begin{proof}[Proof of Theorem~\ref{thm6}]
Lemma~\ref{lem3} and Lemma~\ref{lem2} imply that $|U_{\rm J}-U_{\rm mm}|=o(1)$,
and 
\begin{align}
|\hat{\mu}(X_{n+1})
+\hat{q}_{n,\alpha}^+\{R^{LOO}_i\}
-U_{\rm J}|=o(1).
\end{align}
Next, recall that \citep[Appendix~5]{ElKaroui2018} shows that
$r_{i,(i)}=R^{LOO}_i$ 
 behaves like (in the sense of moment convergence)
\[
\epsilon_i +  \sqrt{\mathbb{E}\!\left( \|\hat{\beta} - \beta_0\|^2 \right)} Z_i,
\]
where $Z_i \sim \mathcal{N}(0,1)$ independent of $\epsilon_i$; 
furthermore, if $i \neq j$, $r_{i,(i)}$ and $r_{j,(j)}$ are asymptotically 
(pairwise) independent. 
Since $\lim_{p \to \infty} \|\hat{\beta}-\beta_0\|=r$,
this implies that the empirical distribution 
$(r_{i,(i)})_{i=1}^n$ converges to $\mathcal{N}(0,\sigma_{\epsilon}^2+r^2)$.
Thus 
\begin{align}
|\hat{q}_{n,\alpha}^+\{R^{LOO}_i\}-\sqrt{\sigma_{\epsilon}^2+r^2}z_{\alpha/2}|=o(1)
\label{se62}
\end{align}
and we have proved \eqref{e_al}.

It remains to prove \eqref{e_bl}.
Note that by definition, $y\in [-M,M]\cap\{0,\pm\frac1{M},\pm\frac{2}{M},\dots\}$ is included in the full conformal interval if
\begin{align}
|y-\hat{\mu}^y(X_{n+1})|\le \hat{q}^+\{R_i\}.
\end{align}
Suppose that $y_{\max}$ and $y_{\min}$ denote the maximum and minimum $y$ under this criterion.
By Lemma~\ref{l64}, we see that $y_{\max}$ satisfies 
\begin{align}
y_{\max}-\hat{\mu}(X_{n+1})-
c\psi(\prox_{c\rho}(y_{\max}-\hat{\mu}(X_{n+1})))=\hat{q}^+\{R_i\}
+e_{n,M},
\label{se63}
\end{align}
where $e_{n,M}$ is an error term satisfying $\lim_{M\to+\infty}\lim_{n\to\infty}e_{n,M}=0$.
By \eqref{e38}, we have 
\begin{align}
R_i-R_i^{LOO}=-c_i\psi(R_i)+o(1),
\end{align}
implying $R_i=\prox_{c_i\rho}(R_i^{LOO}+o(1))$.
Furthermore, using the Sherman–Morrison formula we can show that $c_i=c+o(1)$, so that $R_i=\prox_{c\rho}(R_i^{LOO}+o(1))$.
Since $\prox_{c\rho}$ is an increasing function, we have 
\begin{align}
\hat{q}^+\{R_i\}
&=
\hat{q}^+\{\prox_{c\rho}(R_i^{LOO}+o(1))\}
\\
&=
\prox_{c\rho}(\hat{q}^+\{R_i^{LOO}+o(1)\}).
\end{align}
Then \eqref{se63} shows that 
\begin{align}
\prox_{c\rho}
(y_{\max}-\hat{\mu}(X_{n+1}))
=
\prox_{c\rho}(\hat{q}^+\{R_i^{LOO}+o(1)\})+e_{n,M}.
\end{align}
Since we assumed $\|\rho''\|_{\infty}<\infty$, and $c=O(1)$ with high probability,  $\prox_{c\rho}^{-1}$ must be Lipschitz with high probability. 
Hence 
\begin{align}
y_{\max}-\hat{\mu}(X_{n+1})
=
\hat{q}^+\{R_i^{LOO}+o(1)\}
+\tilde{e}_{n,M}
\end{align}
for some $\tilde{e}_{n,M}$ satisfying $\lim_{M\to\infty}\lim_{n\to\infty}\tilde{e}_{n,M}=0$. 
Then \eqref{e_bl} follows
by \eqref{se62}.
\end{proof}
\end{document}